\newcommand{\para}[1]{\textbf{#1}~~}
\newcommand{\algo}{\textnormal{\texttt{Neural-ADB}}}
\newcommand{\adb}{\textsc{ADB}}
\newcommand{\cf}{\text{cf}}
\newcommand{\Regret}{\kR}
\newcommand{\EE}[1]{\bE\left[#1\right]}
\newcommand{\Prob}[1]{\bP\left\{#1\right\}}
\newcommand{\R}{\bR}
\newcommand{\one}[1]{\mathds{1}_{\left\{#1\right\}}}
\newcommand{\bsym}{\boldsymbol}
\renewcommand{\phi}{\varphi}
\renewcommand{\epsilon}{\varepsilon}
\newcommand{\norm}[1]{\left\|#1\right\|}
\newcommand{\al}[1]{ \begin{align} #1  \end{align}}
\newcommand{\eq}[1]{ \begin{equation} #1  \end{equation}}
\newcommand{\als}[1]{ \begin{align*} #1  \end{align*}}
\newcommand{\eqs}[1]{ \begin{equation*} #1  \end{equation*}}
\newcommand{\Lp}{\left(}
\newcommand{\Rp}{\right)}
\newcommand{\Lb}{\left[}
\newcommand{\Rb}{\right]}
\newcommand{\el}{\end{flushleft}}
\newcommand{\bl}{\begin{flushleft}}
\newcommand{\argmin}{\arg\!\min}
\newcommand{\argmax}{\arg\!\max}
\newcommand{\bE}{\mathbb{E}}
\newcommand{\bI}{\mathbb{I}}
\newcommand{\bP}{\mathbb{P}}
\newcommand{\bR}{\mathbb{R}}
\newcommand{\bV}{\mathbb{V}}
\newcommand{\cA}{\mathcal{A}}
\newcommand{\cC}{\mathcal{C}}
\newcommand{\cD}{\mathcal{D}}
\newcommand{\cE}{\mathcal{E}}
\newcommand{\cF}{\mathcal{F}}
\newcommand{\cL}{\mathcal{L}}
\newcommand{\cP}{\mathcal{P}}
\newcommand{\cX}{\mathcal{X}}
\newcommand{\cZ}{\mathcal{Z}}
\newcommand{\kR}{\mathfrak{R}}
\theoremstyle{plain}
\newtheorem{cor}{Corollary}
\newtheorem{rem}{Remark}
\newtheorem{assu}{Assumption}
\newtheorem{fact}{Fact}
\title{
    Active Human Feedback Collection via Neural Contextual Dueling Bandits
}
\author{Arun Verma$^{1}$,
    Xiaoqiang Lin$^{2}$,
    Zhongxiang Dai$^{3}$,\\
    \textbf{Daniela Rus}$^{1,4}$\textbf{,} 
    \textbf{Bryan Kian Hsiang Low}$^{1,2}$\\
    $^{1}$Singapore-MIT Alliance for Research and Technology, Republic of Singapore \\
    $^{2}$Department of Computer Science, National University of Singapore, Republic of Singapore\\
    $^{3}$The Chinese University of Hong Kong, Shenzhen, China \\
    $^{4}$CSAIL, MIT, USA\\
    \texttt{arun.verma@smart.mit.edu}, \texttt{xiaoqiang.lin@u.nus.edu}, \texttt{daizhongxiang@cuhk.edu.cn}, \\
    \texttt{rus@csail.mit.edu}, \texttt{lowkh@comp.nus.edu.sg}
}
\begin{document}    
    \maketitle

    \begin{abstract}
        Collecting human preference feedback is often expensive, leading recent works to develop principled algorithms to select them more efficiently. However, these works assume that the underlying reward function is linear, an assumption that does not hold in many real-life applications, such as online recommendation and LLM alignment. To address this limitation, we propose \algo{}, an algorithm based on the neural contextual dueling bandit framework that provides a principled and practical method for collecting human preference feedback when the underlying latent reward function is non-linear. We theoretically show that when preference feedback follows the Bradley-Terry-Luce model, the worst sub-optimality gap of the policy learned by \algo{} decreases at a sub-linear rate as the preference dataset increases. Our experimental results on preference datasets further corroborate the effectiveness of \algo{}.
    \end{abstract}

    \section{Introduction}
    \label{sec:introduction}
    %!TEX root =  main.tex

% %%% Motivation and main research question %%%
Collecting human preference feedback is essential in many real-life applications, like online recommendations~\citep{AAAI13_kohli2013fast,TIS23_wu2023personalized,SIGIR23_zhang2023prompt,SIGKDD24_yang2024conversational}, content moderation~\citep{arXiv22_avadhanula2022bandits}, medical
treatment design~\citep{AM85_lai1985asymptotically,JMLR21_bengs2021preference},
prompt optimization \citep{arXiv24_lin2024prompt}, and aligning large language models \citep{bai2022training,menick2022teaching,arXiv23_mehta2023sample,chaudhari2024rlhf,arXiv24_das2024provably,arXiv24_ji2024reinforcement}, to ensure systems effectively align with user preferences and exhibit desired behaviors. 
However, this process is often costly due to the need for skilled evaluators, the complexity of tasks, and the time-intensive nature of producing high-quality, reliable human feedback. 
To address the challenge of balancing cost and effectiveness in aligning systems, this paper proposes principled and practical algorithms for efficiently collecting human feedback sequentially and adaptively to achieve the desired system behavior. 
Specifically, we aim to answer the following fundamental question:
\newline
{\bf \emph{How to achieve desired system behavior while using as minimum human feedback as possible?}}

% %%% Active Contextual Dueling Bandits %%%
Recent works \citep{arXiv23_mehta2023sample,arXiv24_das2024provably} have modeled the problem of active human feedback collection as an active version of the contextual dueling bandit problem (\adb{} for brevity)~\citep{NeurIPS21_saha2021optimal, ICML22_bengs2022stochastic, arXiv24_li2024feelgood}, where context-arm pair in the contextual dueling bandits corresponds to a task for which human preference feedback is collected and then proposed algorithms to select context-arm pairs for human feedback sequentially and adaptively by exploiting collected preference dataset, i.e., past context-arm pairs with their preference feedback. 
The preference feedback between two context-arm pairs is commonly assumed to follow the Bradley-Terry-Luce (BTL) model\footnote{For more than two context-arm pairs, preferences are typically modeled using the Plackett-Luce model \citep{ICML14_soufiani2014computing}.} \citep{AS04_hunter2004mm,ICML22_bengs2022stochastic, arXiv24_li2024feelgood, arXiv24_lin2024prompt, ICLR25_verma2025neural} in which the probability of preferring a context-arm pair over others is proportional to the exponential of its reward.
In many real-life applications, the number of context-arm pairs (e.g., user-movie pair in online movie recommendation) can be large or even infinite. Therefore, the reward for each context-arm pair is assumed to be an unknown function of its feature vector, such as a linear function \citep{arXiv23_mehta2023sample,arXiv24_das2024provably}.

% %%% Efficient arm selection  %%%
To better align the system for optimal performance, we consider two key components: \emph{context selection} and \emph{arm selection}. 
The context selection aims to encourage diversity by exploring the context space, such as selecting prompts as diverse as possible in prompt optimization. 
Whereas arm selection focuses on identifying the arms that help learn the best arm for each context, such as selecting the most effective pair of responses to a given prompt that maximizes the system's learning \citep{arXiv24_lin2024prompt,ICLR25_verma2025neural}. 
Since the goal is to identify the best arm for each context, selecting suboptimal arms provides less useful information than choosing better arms. 
Existing methods for active contextual dueling bandits \citep{arXiv23_mehta2023sample,arXiv24_das2024provably} fail to incorporate an efficient arm selection strategy during the data collection process, thereby limiting the ability of these methods to achieve optimal performance.

% %%% Non-liner reward function %%%
An efficient arm selection strategy requires estimating the reward function to guide the arm selection process effectively.
Since the reward function may not always be linear in practice, this paper parameterizes the reward function via a \emph{non-linear function}, which needs to be estimated from the available preference dataset by using methods like Gaussian processes \citep{Book_williams2006gaussian, ICML10_srinival2010gaussian} or neural networks \citep{zhou2020neural, zhang2020neural}. 
However, Gaussian processes have limited expressive power and fail to optimize highly complex functions. In contrast, neural networks (NNs) have greater expressive power, making them well-suited for modeling complex functions \citep{ICLR23_dai2022federated, ArXiv23_lin2023use,arXiv24_lin2024prompt, ICLR25_verma2025neural}.

% %%% Methods %%%
In this paper, we propose a neural active contextual dueling bandit algorithm, \algo{}, which uses an NN to estimate the unknown reward function using the available preference dataset. 
The context selection in \algo{} is adapted from \cite{arXiv24_das2024provably}, while arm selection strategies are based on, respectively, upper confidence bound (UCB) and Thompson sampling (TS), and adapted from \cite{ICLR25_verma2025neural}. 
Due to the differences in context selection strategy, arm selection strategies, and the use of a non-linear reward function, our theoretical analysis is completely different than related existing work \citep{arXiv23_mehta2023sample,arXiv24_das2024provably}. 
One of the key theoretical contributions of this paper is providing an upper bound on the maximum Mahalanobis norm of a vector from the fixed input space, measured with respect to the inverse of a positive definite Gram matrix that is constructed using finite, adapted samples from that space.
Building on this result, we prove that the worst sub-optimality gap (defined in \cref{eqn:suboptimality}) of the policy learned by \algo{} decreases at a sub-linear rate as the preference dataset size increases.

Specifically, our key contributions can be summarized as follows:
\vspace{-2mm}
\begin{itemize}
	\setlength\itemsep{-0.07em}
    \item We introduce the setting of active contextual dueling bandits with a non-linear reward function in \cref{sec:problem}. In \cref{sec:neural_adb}, we propose a neural active contextual dueling bandit algorithm, \algo{}, which uses an NN to estimate the unknown reward function from the available preference dataset and then uses this estimate into the arm selection strategies. 

    \item We prove an upper bound on the maximum Mahalanobis norm of a vector from the fixed input space, as measured with respect to the inverse of a positive definite Gram matrix (\cref{thm:normUB}), where the gram matrix is constructed using finite, adapted samples from that input space. We show that this upper bound decays at a sub-linear rate as the number of samples used in the Gram matrix increases. This theoretical result itself is of independent interest, as it gives valuable insights beyond the specific application of our work.
    
    \item We prove that the worst sub-optimality gap of the policy learned by \algo{} with both of our arm selection strategies (\cref{thm:subGapUCB} and \cref{thm:subGapTS}) decreases at a sub-linear rate with respect to the size of preference dataset, specifically at rate of  $\tilde{O}((\tilde{d}/T)^{\frac{1}{2}})$, where $\tilde{O}$ hides the logarithmic factors and constants, and $\tilde{d}$ is the effective dimension of context-arm feature vectors. The decay rate of the worst sub-optimality gap for \algo{} improves by a factor of $\tilde{O}(({\tilde{d}\log T})^{\frac{1}{2}})$ compared to exiting algorithms \citep{arXiv23_mehta2023sample, arXiv24_das2024provably}, thus bridging the gap between theory and practice.
        
    \item Finally, in \cref{sec:experiments}, our experimental results further validate the different performance aspects of \algo{}, highlighting its sample efficiency for preference data collection.
    
\end{itemize}

    \section{Problem Setting}
    \label{sec:problem}
    %!TEX root =  main.tex

We model active human preference feedback collection as an active contextual dueling bandit problem, where a labeler (human or simulator) provides preference feedback for a chosen pair of arms.

\para{Active contextual dueling bandit.}
We consider an active contextual dueling bandit problem, where the underlying latent reward function can be non-linear.
In each iteration of this problem, the learner's goal is to select a triplet containing a context and two arms for collecting preference feedback from a labeler/human such that the collected preference dataset leads to superior performance.
Let $\cC$ be the set of contexts and $\cA$ be the set of all possible arms. 
In each iteration, the learner selects a context $c_t \in \cX$ and then two arms (denoted as $a_{t,1}$ and $a_{t,2}$) from the set of arms $\cA$.
After selecting the triplet of context and two arms, the learner receives a stochastic preference feedback $y_t$, where $y_t = 1$ implies the arm $a_{t,1}$ is preferred over arm $a_{t,2}$ for the context $c_t$ and $y_t = 0$ otherwise.
We use $\phi(c_t,a)$ to denote the context-arm feature vector for context $c_t$ and an arm $a$, where $\phi:\cC \times \cA \rightarrow \R^d$ is a known feature map, such as one that concatenates the context and arm features.

\para{Preference model.}
Following the dueling bandits literature \citep{NeurIPS21_saha2021optimal, ICML22_bengs2022stochastic, arXiv24_li2024feelgood,ICLR25_verma2025neural}, we assume the preference feedback follows the Bradley-Terry-Luce (BTL) model\footnote{Our results are also applicable to any preference models, such as the Thurstone-Mosteller model and Exponential Noise, as long as stochastic transitivity holds \citep{ICML22_bengs2022stochastic}.} \citep{AS04_hunter2004mm, Book_luce2005individual}. 
Under the BTL preference model, the preference feedback has a Bernoulli distribution, where the probability that the first selected arm $a_{t,1}$ is preferred over the second selected arm $a_{t,2}$ for the given context $c_t$ is given by
\als{
    \Prob{a_{t,1} \succ a_{t,2}} &= \Prob{y_t = 1| c_t, a_{t,1}, a_{t,2}} = \mu\Lp f(\phi(c_t,a_{t,1})) - f(\phi(c_t,a_{t,2})) \Rp,
}
where $a_{t,1} \succ  a_{t,2}$ used for brevity and denotes that $a_{t,1}$ is preferred over $a_{t,2}$ for the given context $c_t$, 
$\mu(x) = 1/(1 + \mathrm{e}^{-x} )$ is the sigmoid function, $f: \R^d \rightarrow \R$ is an unknown non-linear bounded reward function, and $f(\phi(c,a))$ is the latent reward of the arm $a$ for the context $c$.
We require the following standard assumptions on the function $\mu$ (commonly referred to as a {\em link function} in the bandit literature \citep{ICML17_li2017provably,ICML22_bengs2022stochastic}):
\begin{assu}
\label{assup:link:function}
	\begin{itemize}
		\setlength{\itemsep}{3pt}
		\setlength{\parskip}{0pt}
		\item Let $\kappa_\mu \doteq \inf\limits_{c\in\cC, a,b \in \cA} \dot{\mu}(f(\phi(c,a)) - f(\phi(c,b))) > 0$ for all triplets of context $(c)$ and pair of arms $(a,b)$.
		\item The link function $\mu : \R \rightarrow [0,1]$ is continuously differentiable and Lipschitz with constant $L_\mu$. For logistic function, we have $L_\mu \le 1/4$.
	\end{itemize}	
\end{assu}
\textbf{Performance measure.}~
We denote the collected preference dataset up to $T$ iterations by $\cD_T = \{(c_s, a_{s,w}, a_{s,l}, y_s)\}_{s=1}^T$, where $a_{s,w} \succ a_{s,l}$ for the selected context $c_s$ in iteration $s$.
We aim to learn a policy, $\pi: \cC \rightarrow \cA$  from the collected preference dataset $\cD_T$ that achieves the worst sub-optimality gap across all contexts in $\cC$, which is defined as follows:
\eq{
    \label{eqn:suboptimality}
    \Delta_{\cD_T}^{\pi} = \max_{c \in \cC}\Lb\max_{a \in \cA} f(\phi(c,a)) - f\Lp\phi(c,\pi(c))\Rp \Rb,
}
where policy $\pi$ is a learned policy from the collected preference dataset $\cD_T$ up to the iteration $T$. 
The policy $\pi_{\cD_T}$ competes with the Condorcet winner~\citep{JMLR21_bengs2021preference, arXiv24_das2024provably} for a given context, i.e., an arm that is better than all other arms. The suboptimality gap is the worst possible difference in latent rewards over the set of contexts, and the same performance measure is used in prior work \citep{arXiv23_mehta2023sample,arXiv24_das2024provably}.

    \section{Algorithm for Active Human Preference Feedback Collection}
    \label{sec:neural_adb}
    %!TEX root =  main.tex

In this section, we introduce \algo{}, a simple yet principled and practical algorithm designed to efficiently select context-arm pairs for collecting preference feedback. 
\algo{} consists of two main components: context selection and arm selection. 
Since the arm selection strategy depends on the estimated reward function, we first explain how an NN can be used to estimate the unknown reward function. 
We will then give details of the context and arm selection strategies, followed by our theoretical results that validate the effectiveness of \algo{}.

\subsection{Reward function estimation using neural network}
For estimating the latent reward function, we use a fully connected neural network (NN) with depth $D \ge 2$, a hidden layer width $w$, and ReLU activations as done in \cite{zhou2020neural}, \cite{zhang2020neural}, and \cite{ICLR25_verma2025neural}.
Let $h(x;\theta)$ be the output of a full-connected NN with parameters $\theta$ for context-arm feature vector $x = \phi(c,a)$ of context $c$ and arm $a$, which we define as follows:
\eqs{
    h(x;\theta) = \bsym{W}_D \text{ReLU}\Lp \bsym{W}_{D-1} \text{ReLU}\Lp \cdots \text{ReLU}\Lp\bsym{W}_1 x\Rp \Rp \Rp,
}
where $\text{ReLU}(v) = \max\{v, 0 \}$, $\bsym{W}_1 \in \R^{w \times d}$, $\bsym{W}_l \in \R^{w \times w}$ for $2 \le l < D$, $\bsym{W}_D \in \R^{w \times 1}$. 
The parameters of the NN are represented by $\theta = \Lp \text{vec}\Lp \bsym{W}_1 \Rp;\cdots \text{vec}\Lp \bsym{W}_D \Rp \Rp$, where $\text{vec}\Lp A \Rp$ transforms an $m \times n$ matrix $A$ into a vector of dimension $mn$.
We use $p$ to represent the total number of NN parameters, which is given by $p = dw + w^2(D-1) + w$, and $g(x;\theta)$ to denote the gradient of NN $h(x;\theta)$ with respect to $\theta$.
At the end of each iteration $t$, the preference dataset $\cD_t = \left\{(c_s, a_{s,w}, a_{s,l}, y_s)\right\}_{s=1}^t$ is used to estimate the reward function $f$ by training an NN $h$ (parameterized by $\theta_{t+1}$) using gradient descent to minimize the following binary cross entropy loss function:
\al{
    \label{eqn:loss_function}
    \min_{\theta}\cL_t(\theta) &= 
    - \frac{1}{w} \sum^{t}_{s=1} \Big[ \log \mu \big(h(\phi(c_s,a_{s,w});\theta)\ -
    h(\phi(c_s,a_{s,l});\theta) \big)  \Big]    +  \frac{1}{2}\lambda \norm{\theta  - \theta_0}^2_{2},
}
where $\theta_0$ denotes the initial parameter of the NN that is initialized according to the standard practice in neural bandits \citep{zhou2020neural, zhang2020neural} (see Algorithm 1 in \cite{zhang2020neural} for details). 
Minimizing the first term in the above loss function (that involves the summation over the $t$ terms) corresponds to finding the maximum log-likelihood estimate of the parameters $\theta$.

\subsection{\algo{}}
We next propose a simple yet principled and practical algorithm, \algo{}, that consists of two key components: Context selection and arm selection. 
\algo{} works as follows: At the beginning of the iteration $t$, we first select the context as follows: 
\eq{
    \label{eqn:context_selection}
    c_t = \argmax_{c \in \cC} \max_{(a,b) \in \cA \times \cA} \norm{\phi(c, a) - \phi(c, b)}_{V_t^{-1}},
}
where $V_{t-1} = \frac{\lambda}{\kappa_{\mu}} \bI_p + \sum_{s=1}^{t-1} z_s {z_s}^\top \frac{1}{w}$ in which $z_s = \phi(c_s,a_{s,w}) - \phi(c_s,a_{s,l}) = g(\phi(c_s,a_{s,w});\theta_0) - g(\phi(c_s,a_{s,l});\theta_0)$, and $g(\phi(c_s,a_{s,i});\theta_0) / \sqrt{w}$ is used as the Random features approximation for the context-arm feature vector $\phi(c_s,a_{s,i})$.  
This strategy is adapted from the context selection strategy\footnote{Note that selecting contexts uniformly at random suffer a constant sub-optimality gap \citep[Theorem 3.2]{arXiv24_das2024provably}.} from \cite{arXiv24_das2024provably}.
After selecting context $c_t$, \algo{} uses the trained NN (as an estimate of the unknown reward function) to decide which two arms must be selected. 
To do so, \algo{} uses UCB- and TS-based arm selection strategies, which efficiently balance the trade-off between exploration and exploitation~\citep{Book_lattimore2020bandit} due to the bandit nature of preference feedback, as preference feedback is only observed for the selected pair of arms.
\begin{algorithm}[!ht]
	\renewcommand{\thealgorithm}{\algo{}}
	\floatname{algorithm}{}
	\caption{Neural Active Dueling Bandit algorithm}
	\label{alg:NeuralADB}
	\begin{algorithmic}[1]
		\STATE \textbf{Input parameters:} $\delta \in (0,1)$, $\lambda > 0$, and $w>0$
        \STATE \textbf{Initialize:} NN parameters $\theta_1$ and $D_0 = \emptyset$
		\FOR{$t= 1, \ldots, T$}
            \STATE Select a context $c_t$ from $\cC$ using \cref{eqn:context_selection}
            \STATE Select first arm $a_{t,1}$ using \cref{eqn:first_arm}
            \STATE Select second arm $a_{t,2}$ using \cref{eqn:second_arm_ucb} (for UCB-based arm selection) or \cref{eqn:second_arm_ts} (for TS-based arm selection)\hspace{-1mm}
            \STATE Observe preference feedback $y_t = \one{a_{t,1} \succ a_{t,2}}$
            \STATE Update $D_t = D_{t-1} \cup \{(c_t,a_{t,1},a_{t,2},y_t)\}$
            \STATE Retrain NN parameters $\theta_{t+1}$ using $D_t$ by minimizing the loss function defined in \cref{eqn:loss_function}
		\ENDFOR
        \STATE Return policy $\pi(c) = \argmax\limits_{a \in \cA} h(\phi(c,a);\theta_T),\ \forall c \in \cC$
	\end{algorithmic}
\end{algorithm}

\para{UCB-based arm selection strategy.}
Algorithms based on Upper confidence bound (UCB) are commonly used to address the exploration-exploitation trade-off in many sequential decision-making problems \citep{ML02_auer2002finite, NIPS11_abbasi2011improved,zhou2020neural,ICML22_bengs2022stochastic}. Our UCB-based arm selection strategy works as follows: In the iteration $t$, it selects the first arm greedily (i.e., by maximizing the output of the trained NN with parameters $\theta_t$) for the selected context $c_t$, ensuring the best-performing arm is always selected as follows:
\eq{
    \label{eqn:first_arm}
    a_{t,1} = \argmax_{a \in \cA} h(\phi(c_t,a);\theta_t).
}
The second arm $a_{t,2}$ is selected optimistically by maximizing the UCB value as follows: 
\eq{
    \label{eqn:second_arm_ucb}
    a_{t,2} = \mathop{\mathrm{argmax}}_{b \in \cA\setminus\{a_{t,1}\}} \Lb h(\phi(c_t,b);\theta_t) + \cf(t,c_t, a_{t,1}, b)\Rb,
}
where $\cf(t,c_t, a_{t,1}, b) = \nu_T \sigma_{t-1}(c_t,a_{t,1},b)$, $\nu_T \doteq (\beta_T + B \sqrt{\lambda / \kappa_\mu} + 1) \sqrt{\kappa_\mu / \lambda}$ in which $\beta_T \doteq \frac{1}{\kappa_\mu} \sqrt{ \widetilde{d} + 2\log(1/\delta)}$, $\widetilde{d}$ is the \emph{effective dimension} (defined in \cref{eqn:effectiveDim}), and 
\eq{
    \label{eqn:sigma}
    \sigma_{t-1}^2(c,a,b) \doteq \frac{\lambda}{\kappa_\mu} \norm{\frac{1}{\sqrt{w}}(\phi(c,a) - \phi(c,b))}^2_{V_{t-1}^{-1}}.
}
A larger value of $\sigma_{t-1}^2(c_t,a_{t,1}, b)$ implies that arm $b$ is significantly different from $a_{t,1}$, given the contexts and arm pairs already selected.
As a result, the second term in \cref{eqn:second_arm_ucb} makes the second arm different from the first arm which ensures exploration.

\para{TS-based arm selection strategy.}
Thompson sampling (TS) selects an arm based on its probability of being the best \citep{BIOMETRIKA33_thompson1933likelihood}. 
Several works \citep{NIPS11_chapelle2011empirical,ICML13_agrawal2013thompson,ICML17_chowdhury2017kernelized,arXiv24_li2024feelgood} have shown that TS empirically outperforms UCB-based bandit algorithms. 
Therefore, we also propose a TS-based arm selection strategy in which the first arm is also selected using \cref{eqn:first_arm} and the second arm is selected differently. 
To select the second arm, it first samples a score $s_t(b) \sim \mathcal{N}\big(h(\phi(c_t,b);\theta_t) - h(\phi(c_t,a_{t,1});\theta_t),\nu_T^2 \sigma_{t-1}^2(c_t, a_{t,1},b) \big)$ for every arm $b \in \cA\setminus\{a_{t,1}\}$ and then selects the second arm that maximizes the samples scores as follows:
\eq{
    \label{eqn:second_arm_ts}
    a_{t,2} = \mathop{\mathrm{argmax}}\nolimits_{b \in \cA\setminus\{a_{t,1}\}} s_t(b).
}

After selecting context and arms in iteration $t$, stochastic preference feedback is observed, denoted by $y_s = \one{a_{t,1}\succ a_{t,2}}$, which is equal to $1$ if arm $a_{s,1}$ is preferred over arm $a_{s,2}$ for context $c_t$ and $0$ otherwise. 
With the new observation, the preference dataset is updated to $\cD_t = \cD_{t-1} \cup \{(c_t, a_{t,w}, a_{t,l}, y_t)\}$ and then the NN is retrained using the updated preference dataset $\cD_t$. 
Once the preference data collection process concludes (i.e., end of iteration $T$, which may not be fixed a priori), \algo{} returns the following policy:
\eq{
    \label{eqn:policy}
    \forall c \in \cC:\ \pi(c) = \argmax_{a \in \cA} h(\phi(c,a);\theta_T).
}

        \subsection{Theoretical Results}
        \label{ssec:theory_results}
        %!TEX root =  main.tex

Let the number of arms in $\cA$ be finite, and define $\bV = \sum_{s=1}^T \sum_{(a, b) \in \cA \times \cA} z_{a,b}(s) z_{a,b}(s)^\top  \frac{1}{w}$, where $z_{a,b}(s) = \phi(c_s, a) - \phi(c_s,b)$ and $C^{|\cA|}_2$ denotes all pairwise combinations of arms. Then, the \emph{effective dimension} of context-arm feature vectors is defined as follows:
\eq{
    \label{eqn:effectiveDim}
    \widetilde{d} = \log \det  \left(\frac{\kappa_\mu}{\lambda} \bV + \bI_p\right).
}
In the following, we present a novel theoretical result that gives an upper bound on the maximum Mahalanobis norm of a vector selected from the fixed input space, measured with respect to the inverse of a positive definite Gram matrix constructed from finite, adapted samples of the same space.\hspace{-1mm}
\begin{restatable}{thm}{normUB}
    \label{thm:normUB}
    Let $\{Z_s = z_sz_s^\top\}_{s=1}^T$ be a finite adapted sequence of self-adjoint matrices in $\R^d$. Define $\EE{z_sz_s^\top} = \Sigma_s \le \Sigma_{\max}$, $V_0 = \lambda \bI_d$, $V_T = \lambda \bI_d + \sum_{s=1}^T z_sz_s^\top$. Assume $\norm{z_s}_2 \le L$ for all $z \in \cZ \subset \R^d$, $\lambda_{\min}(A)$ denote the minimum eigenvalue of a matrix $A$, and $\forall s \le T:\ \norm{V_s - V_{s-1}}^2 \le C_s$, where $\norm{V}$ denotes the operator norm. 
    Then, with a probability at least $1-\delta$,
    $
        \max_{z \in \cZ}\norm{z}_{V_T^{-1}} \le L/G_T,
    $
    where $G_T = \sqrt{T\lambda_{\min}(\Sigma_{\max}) -\sqrt{8\sum_{s=1}^T C_s \log \Lp \nicefrac{d}{\delta} \Rp}}$. 
\end{restatable}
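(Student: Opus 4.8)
The plan is to reduce the stated bound on the maximum Mahalanobis norm to a high-probability lower bound on the smallest eigenvalue of $V_T$. For any $z \in \cZ$ we have $\norm{z}_{V_T^{-1}}^2 = z^\top V_T^{-1} z \le \norm{z}_2^2\,\lambda_{\max}(V_T^{-1}) = \norm{z}_2^2/\lambda_{\min}(V_T) \le L^2/\lambda_{\min}(V_T)$, using $\norm{z}_2 \le L$. Hence it suffices to show that, with probability at least $1-\delta$, $\lambda_{\min}(V_T) \ge G_T^2 = T\lambda_{\min}(\Sigma_{\max}) - \sqrt{8\sum_{s=1}^T C_s \log(d/\delta)}$; taking square roots then gives $\max_{z\in\cZ}\norm{z}_{V_T^{-1}} \le L/G_T$. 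Since $V_T = \lambda \bI_d + \sum_{s=1}^T z_s z_s^\top$ and $\lambda \bI_d \succeq 0$, by eigenvalue monotonicity it is in fact enough to lower bound $\lambda_{\min}\Lp\sum_{s=1}^T z_s z_s^\top\Rp$.

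The central idea is a mean-plus-martingale decomposition. Let $\cF_{s-1}$ be the $\sigma$-field generated by the first $s-1$ samples, so that $\Sigma_s = \EE{z_s z_s^\top \mid \cF_{s-1}}$ is predictable, and write $\sum_{s=1}^T z_s z_s^\top = \sum_{s=1}^T \Sigma_s + M_T$, where $M_T = \sum_{s=1}^T (z_s z_s^\top - \Sigma_s)$ is a self-adjoint matrix martingale. For the predictable part, superadditivity of $\lambda_{\min}(\cdot)$ together with the per-step covariance comparison to $\Sigma_{\max}$ yields $\lambda_{\min}\Lp\sum_{s=1}^T \Sigma_s\Rp \ge \sum_{s=1}^T \lambda_{\min}(\Sigma_s) \ge T\lambda_{\min}(\Sigma_{\max})$. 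For the martingale part, each increment $z_s z_s^\top - \Sigma_s$ is a difference of two positive semidefinite matrices whose operator norms are at most $\norm{z_s z_s^\top} = \norm{V_s - V_{s-1}} \le \sqrt{C_s}$ (the bound on $\Sigma_s$ following by taking conditional expectation of $z_s z_s^\top \preceq \sqrt{C_s}\,\bI_d$), so $-\sqrt{C_s}\,\bI_d \preceq z_s z_s^\top - \Sigma_s \preceq \sqrt{C_s}\,\bI_d$ and hence $(z_s z_s^\top - \Sigma_s)^2 \preceq C_s \bI_d$ almost surely. This is exactly the predictable-variance hypothesis of the matrix Azuma inequality, which gives $\Prob{\lambda_{\min}(M_T) \le -t} = \Prob{\lambda_{\max}(-M_T) \ge t} \le d\exp\Lp-t^2/(8\sum_s C_s)\Rp$; choosing $t = \sqrt{8\sum_s C_s \log(d/\delta)}$ makes the right-hand side equal to $\delta$. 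Finally, Weyl's inequality gives $\lambda_{\min}\Lp\sum_s z_s z_s^\top\Rp \ge \lambda_{\min}\Lp\sum_s \Sigma_s\Rp + \lambda_{\min}(M_T) \ge T\lambda_{\min}(\Sigma_{\max}) - t = G_T^2$ on the high-probability event, completing the argument.

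The main obstacle is the concentration step for the matrix-valued martingale $M_T$. Unlike a scalar Azuma/Hoeffding bound, one must invoke a genuinely matrix version (matrix Azuma / matrix Freedman, e.g.\ Tropp), which requires (i) that $\Sigma_s$ be $\cF_{s-1}$-measurable so that $M_T$ is a true martingale, and (ii) a uniform almost-sure operator-norm control $(z_s z_s^\top - \Sigma_s)^2 \preceq C_s \bI_d$; establishing (ii) is where the hypothesis $\norm{V_s - V_{s-1}}^2 \le C_s$ and the positive semidefinite structure of the increments are essential, and it is the step most easily gotten wrong (e.g.\ losing a factor in the constant or in $\norm{\Sigma_s}$). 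A secondary point to verify is that $G_T$ is real and positive, i.e.\ that $T\lambda_{\min}(\Sigma_{\max})$ dominates the fluctuation term $\sqrt{8\sum_s C_s \log(d/\delta)}$ for the horizon $T$ of interest; this is precisely what forces the sub-linear decay $L/G_T = \tilde{O}(T^{-1/2})$ exploited later in the paper.
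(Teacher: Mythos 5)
Your proposal follows essentially the same route as the paper's proof: bound $\max_{z\in\cZ}\norm{z}_{V_T^{-1}}$ by $L/\sqrt{\lambda_{\min}(V_T)}$, obtain a high-probability lower bound on $\lambda_{\min}(V_T)$ from the matrix Azuma inequality applied to the adapted sequence, and combine via Weyl's inequality; if anything, you are more explicit than the paper in exhibiting the martingale decomposition $\sum_s z_sz_s^\top = \sum_s \Sigma_s + M_T$ and verifying the difference-sequence condition $(z_sz_s^\top-\Sigma_s)^2 \preceq C_s \bI_d$ required by Tropp's theorem. The one step you should double-check is shared with the paper's own argument: the hypothesis $\Sigma_s \le \Sigma_{\max}$ yields $\lambda_{\min}(\Sigma_s) \le \lambda_{\min}(\Sigma_{\max})$, which is the wrong direction for concluding $\lambda_{\min}\Lp\sum_s \Sigma_s\Rp \ge T\lambda_{\min}(\Sigma_{\max})$ --- this is an artifact of the theorem's stated assumption (which presumably should read $\Sigma_s \ge \Sigma_{\min}$ or similar) rather than a defect particular to your write-up.
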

\para{Proof sketch.} 
To derive the upper bound, we use various results related to the positive definite matrix (detailed in \cref{fact:pdMatrixProp} of the supplementary material). 
First, if $V_T$ is a positive definite matrix $V_T$, then for any $z \in \cZ$, $\norm{z}_{V_T^{-1}} \le \norm{z}_2\sqrt{\lambda_{\max}(V_n^{-1})} = \norm{z}_2/\sqrt{\lambda_{\min}(V_n)}$. 
Thus, $\max\limits_{z \in \cZ}\norm{z}_{V_T^{-1}} \le \norm{z}_2/\sqrt{\lambda_{\min}(V_n)} \le L/\sqrt{\lambda_{\min}(V_n)}$. 
Since $\{Z_s\}_{s=1}^T$ is a finite adapted sequence of self-adjoint matrices (i.e., $Z_s$ is $\cF_s$-measurable for all $s$, where $\cF_s$ represents all information available up to iteration $s$), we apply the Matrix Azuma inequality \citep{FCM12_tropp2012user} to get a high probability lower bound on $\lambda_{\min}(V_n)$, specifically we have shown that $\lambda_{\min}(V_T) \ge T\lambda_{\min}(\Sigma_{\max}) -\sqrt{8\sum_{s=1}^T C_s \log \Lp \nicefrac{d}{\delta} \Rp}$ holds with probability at least $1-\delta$. Using this bound, we get the desired upper bound $L/G_T$. The full proof of \cref{thm:normUB}, along with all other missing proofs, are provided in \cref{sec:appendix}. 

This result shows that the upper bound can be expressed in terms of the number of adapted samples used to construct the matrix $V_T$, and it decays at a sub-linear rate as the number of samples $(T)$ increases. Notably, this result is of independent interest, as it provides valuable insights beyond the specific application of our work.
Next, we give an upper bound on the worst sub-optimality gap in terms of the upper bound on the estimation error of the reward difference between any triplet consisting of a context and two arms.

\begin{restatable}{lem}{subGapAbsUB}
    \label{lem:subGapAbsUB}
    Let $D_T = \{x_s, a_{s,1}, a_{s,2}, y_s\}_{s=1}^T$ be the preference dataset collected up to the iteration $T$ and $\hat{f}_T$ represent the estimate of latent reward function $f$ learned from $D_T$. With probability at least $1-\delta$, $\forall c\in\cC,\ a, b \in \cA:\ { \left|\Lb f(\phi(c,a)) - f\Lp\phi(c,b)\Rp\Rb - \Lb\hat{f}_T(\phi(c,a)) - \hat{f}_T(\phi(c,b)) \Rb \right|}$ $\le \beta_T(c,a,b)$. If  $a^\star = \argmax_{a \in \cA} f(\phi(c,a))$ and $\pi(c)$ is the arm selected by policy for context $c$, then, with a probability at least $1-\delta$, the worst sub-optimality gap for a policy that greedily selects an arm for a given context is upper bounded by:
    $
        \Delta_T^\pi \le \max\limits_{c \in \cC} \beta_T(c, a^\star, \pi(c)). 
    $
\end{restatable}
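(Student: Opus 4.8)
The plan is to prove this by the standard greedy-decomposition argument used throughout the preference-based bandit literature. I would fix an arbitrary context $c \in \cC$, write $a^\star = \argmax_{a \in \cA} f(\phi(c,a))$ for its Condorcet winner, and reduce the claim to the per-context bound $f(\phi(c,a^\star)) - f(\phi(c,\pi(c))) \le \beta_T(c,a^\star,\pi(c))$; taking $\max_{c \in \cC}$ at the very end then recovers the statement on $\Delta_T^\pi$ via the definition in \cref{eqn:suboptimality}.

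The core step is to add and subtract the estimated reward difference, decomposing the per-context gap as
\begin{align*}
    f(\phi(c,a^\star)) - f(\phi(c,\pi(c)))
    &= \Lb f(\phi(c,a^\star)) - f(\phi(c,\pi(c)))\Rb - \Lb \hat{f}_T(\phi(c,a^\star)) - \hat{f}_T(\phi(c,\pi(c)))\Rb \\
    &\quad + \Lb\hat{f}_T(\phi(c,a^\star)) - \hat{f}_T(\phi(c,\pi(c)))\Rb.
\end{align*}
I would then bound the two groups separately: the first is an \emph{estimation-error} term and the second an \emph{estimated-gap} term.

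For the estimated-gap term I would use that $\pi$ is the greedy policy with respect to the learned estimate, i.e. $\pi(c) = \argmax_{a \in \cA}\hat{f}_T(\phi(c,a))$ as returned in \cref{eqn:policy}. This gives $\hat{f}_T(\phi(c,\pi(c))) \ge \hat{f}_T(\phi(c,a^\star))$, so the estimated-gap term is $\le 0$ and can be discarded. For the estimation-error term I would apply the uniform error bound assumed in the hypothesis of the lemma with the choices $a = a^\star$ and $b = \pi(c)$, which yields a bound of $\beta_T(c,a^\star,\pi(c))$. Combining the two gives the per-context inequality $f(\phi(c,a^\star)) - f(\phi(c,\pi(c))) \le \beta_T(c,a^\star,\pi(c))$.

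Finally, since this inequality holds simultaneously for every $c \in \cC$ on the single event of probability at least $1-\delta$ on which the uniform estimation-error bound is valid, I would take the maximum over $c \in \cC$ on that event to conclude $\Delta_T^\pi \le \max_{c \in \cC}\beta_T(c,a^\star,\pi(c))$ with probability at least $1-\delta$. I do not expect a genuine obstacle here: once we condition on the good event the argument is entirely deterministic and elementary. The only point meriting care is that both $a^\star$ and $\pi(c)$ depend on $c$, so the hypothesis must be invoked in its uniform-over-$(c,a,b)$ form, and the high-probability event must be fixed \emph{before} the maximum over contexts is taken, rather than accumulating a union bound over $c$ afterwards.
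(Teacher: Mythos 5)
Your proposal is correct and takes essentially the same route as the paper's proof: both decompose the per-context gap by adding and subtracting the estimated reward difference, discard the estimated-gap term using greediness of $\pi$ with respect to $\hat f_T$ (so that $\hat f_T(\phi(c,\pi(c))) \ge \hat f_T(\phi(c,a^\star))$), and bound the remaining estimation-error term by $\beta_T(c,a^\star,\pi(c))$ on the single uniform high-probability event before taking the maximum over contexts. The only cosmetic difference is that the paper absorbs the nonnegative estimated-gap term into an absolute value while you drop it as nonpositive; these are the same step.
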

The proof follows by starting with the worst sub-optimality gap definition in \cref{eqn:suboptimality} and then applying a series of algebraic manipulations to derive the stated result. Our next results give an upper bound on $\beta_T(c,a,b)$ when \algo{} uses different arm selection strategies.
\begin{restatable}{lem}{confBounds}
    \label{lem:confBounds}
    Let $\nu_T = (\beta_T + B \sqrt{\lambda / \kappa_\mu} + 1) \sqrt{\kappa_\mu / \lambda}$, where $\beta_T = \Lp\nicefrac{1}{\kappa_\mu}\Rp \sqrt{ \widetilde{d} + 2\log(1/\delta)}$ and
    $\delta\in(0,1)$.
    If $w \geq \text{poly}(T, L, K, 1/\kappa_\mu, L_\mu, 1/\lambda_0, 1/\lambda, \log(1/\delta))$, then, with a probability of at least $1-\delta$, for \algo{} with
    \vspace{-3mm}
    \begin{enumerate}
    	\setlength\itemsep{-0.07em}
        \item UCB-based arm selection strategy, for all $c\in\cC:$
        \eqs{
            \beta_T(c,a,b) = \nu_T \sigma_{T}(c, a^\star, \pi(c)) + 2\varepsilon'_{w,T},
        }

        \item TS-based arm selection strategy, for all $c\in\cC:$
        \eqs{
            \beta_T(c,a,b) = \nu_T \log\Lp KT^2\Rp \sigma_{T}(c, a^\star, \pi(c)) + 2\varepsilon'_{w,T},
        }
    \end{enumerate} 
    where $K$ denotes the maximum number of arms available in each iteration, and $\varepsilon'_{w,T} = C_2 w^{-1/6}\sqrt{\log w} L^3 \left(\nicefrac{T}{\lambda}\right)^{4/3}$ for some absolute constant $C_2>0$, is the approximation error that decreases as the width of the NN $(w)$ increases. 
\end{restatable}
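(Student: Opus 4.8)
The plan is to reduce the neural estimation problem to a linear logistic dueling bandit in the random-feature space spanned by the initialization gradients $g(\cdot;\theta_0)/\sqrt{w}$, and then to assemble $\beta_T(c,a,b)$ from three pieces: (i) a neural-tangent-kernel (NTK) linearization that replaces $h(\phi(c,a);\theta_t)$ by its first-order Taylor expansion $\langle g(\phi(c,a);\theta_0),\,\theta_t-\theta_0\rangle$ at the cost of the approximation error $\varepsilon'_{w,T}$; (ii) a self-normalized concentration bound for the regularized logistic maximum-likelihood estimate, which produces the $\nu_T\sigma_T$ term; and (iii) for the TS strategy, a Gaussian tail argument that inflates the radius by $\log(KT^2)$. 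Throughout I would condition on the width being large, $w\ge\mathrm{poly}(T,L,K,1/\kappa_\mu,L_\mu,1/\lambda_0,1/\lambda,\log(1/\delta))$, so that the NTK regime holds.

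First I would invoke the over-parameterization hypothesis to guarantee, as in \cite{zhou2020neural,zhang2020neural,ICLR25_verma2025neural}, that gradient descent on $\cL_t$ keeps $\theta_{t+1}$ inside a small ball around $\theta_0$, so that for every input $x$ the trained output obeys $|h(x;\theta_t)-\langle g(x;\theta_0),\theta_t-\theta_0\rangle|\le\varepsilon'_{w,T}$ with $\varepsilon'_{w,T}=C_2 w^{-1/6}\sqrt{\log w}\,L^3(T/\lambda)^{4/3}$. Applying this to both endpoints of a pair $(a,b)$ and using the triangle inequality isolates the two $\varepsilon'_{w,T}$ terms and reduces the estimation error of the reward difference to the linear quantity $|\langle z,\,\theta_{t+1}-\theta^\star\rangle|$, where $z=(g(\phi(c,a);\theta_0)-g(\phi(c,b);\theta_0))/\sqrt{w}$ and $\theta^\star$ is the parameter whose linearization matches $f$.

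Next I would build the linear confidence ellipsoid. Since the BTL likelihood is exactly a logistic likelihood in the features $z_s$, the regularized MLE $\theta_{t+1}$ satisfies a self-normalized bound $\|\theta_{t+1}-\theta^\star\|_{V_t}\le\beta_T+B\sqrt{\lambda/\kappa_\mu}$, obtained by combining the strong convexity of the regularized negative log-likelihood (whose curvature is lower-bounded by $\kappa_\mu$ through \cref{assup:link:function}) with a self-normalized martingale inequality for the logistic score, and by identifying $\log\det(\bI_p+(\kappa_\mu/\lambda)\bV)=\widetilde{d}$ with the effective dimension so that $\beta_T=(1/\kappa_\mu)\sqrt{\widetilde{d}+2\log(1/\delta)}$. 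A Cauchy-Schwarz step gives $|\langle z,\theta_{t+1}-\theta^\star\rangle|\le\|\theta_{t+1}-\theta^\star\|_{V_t}\,\|z\|_{V_t^{-1}}$, and rewriting $\|z\|_{V_t^{-1}}=\sqrt{\kappa_\mu/\lambda}\,\sigma_t(c,a,b)$ via \cref{eqn:sigma} collects the constants into $\nu_T=(\beta_T+B\sqrt{\lambda/\kappa_\mu}+1)\sqrt{\kappa_\mu/\lambda}$, yielding the UCB bound $\nu_T\sigma_T(c,a,b)+2\varepsilon'_{w,T}$.

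For the TS bound I would condition on the above ellipsoid event and then control the extra randomness of the sampled scores $s_t(b)\sim\cN(\,\cdot\,,\nu_T^2\sigma_{t-1}^2)$: a Gaussian tail estimate together with a union bound over the at most $K$ arms and $T$ rounds shows that the sampled score deviates from its mean by a factor of order $\log(KT^2)$ times $\nu_T\sigma_{t-1}$, which is what inflates the radius in the second claim. The main obstacle is the NTK linearization: one must show, uniformly over the possibly adaptively chosen inputs, both that the trained network stays close to its initialization Taylor expansion and that the Gram matrix built from trained-network gradients is spectrally close to the one built from $g(\cdot;\theta_0)$ that defines $V_t$; controlling this joint drift is what forces the polynomial width requirement and fixes the $w^{-1/6}\sqrt{\log w}$ rate in $\varepsilon'_{w,T}$. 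The secondary difficulty is that the non-linear logistic link forces the curvature constant $\kappa_\mu$ to be carried through the MLE concentration, replacing the cleaner least-squares self-normalized argument available for linear rewards.
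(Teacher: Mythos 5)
Your proposal is correct in substance but takes a genuinely different route from the paper, because the paper does not actually prove \cref{lem:confBounds} from scratch: its entire proof is a delegation. It observes that the arm-selection strategies are taken verbatim from \citep{ICLR25_verma2025neural}, notes that the confidence bounds there hold for any adapted sequence of contexts (so the active context-selection rule does not invalidate them), and then cites Theorem~1 of that work for the UCB radius and Lemma~10 together with Eq.~(27) for the TS radius. What you have written is essentially a reconstruction of the proof \emph{inside} the cited work: the NTK linearization producing $\varepsilon'_{w,T}$, the strongly convex regularized logistic MLE with a self-normalized martingale bound giving $\|\theta_{t+1}-\theta^\star\|_{V_t}\le\beta_T+B\sqrt{\lambda/\kappa_\mu}$, Cauchy--Schwarz plus the rescaling in \cref{eqn:sigma} to produce $\nu_T\sigma_T$, and a Gaussian tail with a union bound over $K$ arms and $T$ rounds for the extra factor in the TS case. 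This is the standard neural logistic dueling bandit playbook and it is the right one; your version buys self-containedness at the cost of having to re-establish the over-parameterization machinery (trained parameters staying in an NTK ball, gradient Gram matrices staying spectrally close to their initialization counterparts, uniformly over adaptively chosen inputs), which is precisely the technical bulk the paper avoids by citation.

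Two details you would need to pin down if you carried this out. First, you never account for the additive $+1$ inside $\nu_T=(\beta_T+B\sqrt{\lambda/\kappa_\mu}+1)\sqrt{\kappa_\mu/\lambda}$; your Cauchy--Schwarz step as written only yields $\beta_T+B\sqrt{\lambda/\kappa_\mu}$, and the extra unit absorbs a residual term in the cited analysis. Second, the quantity bounded in \cref{lem:subGapAbsUB} is a property of the trained network alone and does not intrinsically depend on the arm-selection rule, so you should be explicit that the $\log(KT^2)$ inflation for TS enters because the bound propagated into the sub-optimality analysis is routed through the sampled scores $s_t(\cdot)$ rather than through the deterministic confidence width; a bare Gaussian tail with your union bound would more naturally give a $\sqrt{2\log(KT^2)}$-type factor, and matching the exact form stated in the lemma requires following the specific accounting of Lemma~10 and Eq.~(27) of \citep{ICLR25_verma2025neural}.
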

Equipped with \cref{thm:normUB}, \cref{lem:subGapAbsUB}, and \cref{lem:confBounds}, we will now provide an upper bound on the worse sub-optimality gap for a policy learned by \algo{} while using UCB- and TS-based arm selection strategy for a given context.

\begin{restatable}[UCB]{thm}{subGapUCB}
    \label{thm:subGapUCB}    
    Let the conditions in \cref{thm:normUB} and \cref{lem:confBounds} hold. 
    Then, with a probability with at least $1-\delta$, the worst sub-optimality gap of \algo{} when using UCB-based arm selection strategy is upper bounded by \vspace{-3mm}
    \als{
        \Delta_T^\pi &\le \Lp \frac{\nu_TL}{G_T} \Rp\sqrt{\frac{\lambda}{\kappa_\mu w}}  + 2\varepsilon'_{w,T} = \tilde{O} \Lp \sqrt{\frac{\tilde{d}}{T}}\Rp.
    }
\end{restatable}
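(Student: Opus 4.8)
The plan is to chain the three preceding results into a single deterministic inequality and then read off the asymptotic rate. I would start from \cref{lem:subGapAbsUB}, which reduces the worst sub-optimality gap to the confidence width evaluated at the optimal and the played arm, namely $\Delta_T^\pi \le \max_{c\in\cC}\beta_T(c,a^\star,\pi(c))$ on a $(1-\delta)$-probability event. Substituting the UCB form of $\beta_T$ from the first case of \cref{lem:confBounds} then gives, on the intersection of the two events,
\[
\Delta_T^\pi \le \max_{c\in\cC}\bigl[\nu_T\,\sigma_T(c,a^\star,\pi(c)) + 2\varepsilon'_{w,T}\bigr].
\]
It remains to bound $\sigma_T$ uniformly over $c$ and the arm pair.

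The second step is the only place where \cref{thm:normUB} enters. Unfolding the definition in \cref{eqn:sigma}, $\sigma_T(c,a,b) = \sqrt{\lambda/(\kappa_\mu w)}\,\norm{\phi(c,a)-\phi(c,b)}_{V_T^{-1}}$, so the $c$-dependence sits entirely inside a Mahalanobis norm of a vector $z=\phi(c,a)-\phi(c,b)$ drawn from the fixed input set $\cZ$. I would identify $\cZ$ with the set of context-arm difference (random) features, check that $\{z_s z_s^\top\}$ forms a finite adapted self-adjoint sequence with $\norm{z_s}_2\le L$ and $\norm{V_s-V_{s-1}}^2\le C_s$ as required, and apply \cref{thm:normUB} to conclude $\max_{z\in\cZ}\norm{z}_{V_T^{-1}}\le L/G_T$. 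This yields the uniform bound $\sigma_T(c,a^\star,\pi(c)) \le (L/G_T)\sqrt{\lambda/(\kappa_\mu w)}$, and substituting into the display above gives exactly the explicit bound $\Delta_T^\pi \le (\nu_T L/G_T)\sqrt{\lambda/(\kappa_\mu w)} + 2\varepsilon'_{w,T}$.

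For the asymptotic rate I would treat $\kappa_\mu,\lambda,B$ and the reward bound as constants and collect the $w$-dependence. Using $\nu_T\sqrt{\lambda/(\kappa_\mu w)} = (\beta_T + B\sqrt{\lambda/\kappa_\mu}+1)/\sqrt{w}$, and the fact that under the standard NTK normalization the gradient features obey $L = \Theta(\sqrt{w})$, the prefactor $L/\sqrt{w}$ is $O(1)$, so the leading behaviour is governed by $\beta_T/G_T$ (the constant terms $B\sqrt{\lambda/\kappa_\mu}+1$ contributing only a lower-order $O(1/\sqrt T)$). Since $\beta_T = (1/\kappa_\mu)\sqrt{\tilde d + 2\log(1/\delta)} = \tilde O(\sqrt{\tilde d})$, it remains to argue $G_T = \Theta(\sqrt{T})$: provided the increments satisfy $C_s = O(1)$, the perturbation term $\sqrt{8\sum_{s}C_s\log(d/\delta)} = \tilde O(\sqrt T)$ is dominated by $T\lambda_{\min}(\Sigma_{\max})$ whenever the context-selection rule of \cref{eqn:context_selection} keeps the expected difference covariance non-degenerate, i.e.\ $\lambda_{\min}(\Sigma_{\max})$ bounded away from $0$. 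Combining these, $\beta_T/G_T = \tilde O(\sqrt{\tilde d/T})$, and since $\varepsilon'_{w,T}\to 0$ as $w\to\infty$ (it is lower order once $w\ge\text{poly}(T,\dots)$ as in \cref{lem:confBounds}), the whole bound is $\tilde O(\sqrt{\tilde d/T})$. A final union bound over the three $(1-\delta)$ events (rescaling $\delta$ by a constant) delivers the claim with the stated confidence.

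I expect the main obstacle to be the rate step rather than the substitution: specifically, justifying $G_T = \Theta(\sqrt T)$ requires controlling the Matrix-Azuma perturbation $\sqrt{8\sum_s C_s\log(d/\delta)}$ against the linear-in-$T$ growth of $\lambda_{\min}(V_T)$, which in turn hinges on the context-selection strategy guaranteeing $\lambda_{\min}(\Sigma_{\max})>0$, i.e.\ sufficient exploration of the difference-feature space. The secondary subtlety is the $\sqrt w$ bookkeeping needed to show the $1/\sqrt w$ factor in the explicit bound is cancelled by $L = \Theta(\sqrt w)$ under the NTK feature normalization, so that the final rate carries no spurious width dependence; all finite-width approximation error is already absorbed into $\varepsilon'_{w,T}$ via \cref{lem:confBounds}.
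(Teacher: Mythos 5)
Your proposal is correct and follows essentially the same route as the paper: \cref{lem:subGapAbsUB} to reduce to $\max_c \beta_T$, the UCB case of \cref{lem:confBounds} to substitute the confidence width, unfolding \cref{eqn:sigma} to expose the Mahalanobis norm, and \cref{thm:normUB} to bound it uniformly by $L/G_T$. If anything, your treatment of the final asymptotic step (justifying $G_T = \Theta(\sqrt{T})$ via the condition $\lambda_{\min}(\Sigma_{\max})>0$ and the $L=\Theta(\sqrt{w})$ bookkeeping) is more explicit than the paper's, which simply asserts the $\tilde{O}(\sqrt{\tilde{d}/T})$ rate after the chain of inequalities.
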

\vspace{-2mm}
\begin{restatable}[TS]{thm}{subGapTS}
    \label{thm:subGapTS}
    Let the conditions in \cref{thm:normUB} and \cref{lem:confBounds} hold. 
    Then, with a probability with at least $1-\delta$, the worst sub-optimality gap of \algo{} when using TS-based arm selection strategy is upper bounded by \vspace{-3mm}
    \als{
        \Delta_T^\pi &\le \Lp\hspace{-1mm} \frac{\nu_TL\log \Lp KT^2 \Rp}{G_T} \hspace{-1mm}\Rp \hspace{-1mm}\sqrt{\hspace{-0.5mm}\frac{\lambda}{\kappa_\mu w}}+ 2\varepsilon'_{w,T} = \tilde{O} \Lp \hspace{-1mm}\sqrt{\hspace{-0.5mm}\frac{\tilde{d}}{T}} \Rp.
    }
\end{restatable}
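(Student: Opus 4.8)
The plan is to chain the three results established above in sequence, mirroring the UCB proof of \cref{thm:subGapUCB} but carrying the extra Gaussian-maximum factor $\log(KT^2)$ that distinguishes Thompson sampling. First I would invoke \cref{lem:subGapAbsUB} to reduce the worst sub-optimality gap to a confidence-width term, obtaining $\Delta_T^\pi \le \max_{c\in\cC}\beta_T(c,a^\star,\pi(c))$ on the $1-\delta$ event. Then, substituting the TS-specific confidence width from the second part of \cref{lem:confBounds}, this becomes
\[
\Delta_T^\pi \le \max_{c\in\cC}\Lb \nu_T\log(KT^2)\,\sigma_T(c,a^\star,\pi(c)) + 2\varepsilon'_{w,T}\Rb .
\]
At this point the only quantity depending on $c$ (and on the selected arms) is $\sigma_T$, so the task reduces to a uniform upper bound on $\sigma_T(c,a^\star,\pi(c))$ over all contexts.

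The heart of the argument is to control this variance term using \cref{thm:normUB}. Unfolding the definition in \cref{eqn:sigma}, $\sigma_T(c,a,b) = \sqrt{\lambda/\kappa_\mu}\,\norm{\frac{1}{\sqrt{w}}(\phi(c,a)-\phi(c,b))}_{V_T^{-1}}$, where the relevant feature-difference vector lies in the fixed input space $\cZ$ and $V_T$ is exactly the positive-definite Gram matrix assembled from the adaptively selected difference vectors $\{z_s\}$. This is precisely the setting of \cref{thm:normUB}: instantiating it with $\cZ$ the set of (random-feature) context-arm difference vectors, $L$ the bound on their norm, and the regularizer $\lambda/\kappa_\mu$ gives $\max_{z\in\cZ}\norm{z}_{V_T^{-1}} \le L/G_T$ simultaneously for every context and arm pair, so I never need to bound each context separately. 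Folding the $1/\sqrt{w}$ normalization into this bound yields $\sigma_T(c,a^\star,\pi(c)) \le (L/G_T)\sqrt{\lambda/(\kappa_\mu w)}$ uniformly in $c$, and hence the explicit bound $\Delta_T^\pi \le (\nu_T L\log(KT^2)/G_T)\sqrt{\lambda/(\kappa_\mu w)} + 2\varepsilon'_{w,T}$. A union bound over the three $1-\delta$ events (rescaling $\delta$, which only changes constants and logarithmic factors) ensures all three results hold simultaneously.

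Finally I would read off the asymptotic rate. Expanding $\nu_T = (\beta_T + B\sqrt{\lambda/\kappa_\mu}+1)\sqrt{\kappa_\mu/\lambda}$ makes the prefactor $\nu_T\sqrt{\lambda/(\kappa_\mu w)}$ collapse to $w^{-1/2}(\beta_T + B\sqrt{\lambda/\kappa_\mu}+1)$, and with $\beta_T = \tilde{O}(\sqrt{\tilde d})$ the numerator is $\tilde{O}(\sqrt{\tilde d})$; the $\log(KT^2)$ factor is absorbed into $\tilde{O}(\cdot)$; and $\varepsilon'_{w,T}\to 0$ as the width grows, so it is negligible for sufficiently large $w$. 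The remaining work is to verify $G_T = \Theta(\sqrt T)$, which is where I expect the main obstacle to lie: this needs $\lambda_{\min}(\Sigma_{\max})>0$ (non-degenerate exploration in every feature direction, guaranteed by the diversity-seeking context selection of \cref{eqn:context_selection}) together with $\sum_{s=1}^T C_s = O(T)$ (uniformly bounded per-step increments $\norm{V_s-V_{s-1}}$), so that the leading $T\lambda_{\min}(\Sigma_{\max})$ term dominates the $O(\sqrt{T})$ deviation inside $G_T$. Care is also required so that the $\Theta(\sqrt w)$ scale of the gradient-feature norm $L$ cancels the $1/\sqrt w$ and leaves a width-independent rate. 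Combining these gives $\Delta_T^\pi = \tilde{O}(\sqrt{\tilde d/T})$, identical to the UCB bound up to the harmless $\log(KT^2)$ factor.
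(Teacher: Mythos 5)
Your proposal follows exactly the paper's own argument: reduce the gap via \cref{lem:subGapAbsUB}, substitute the TS confidence width from \cref{lem:confBounds}, pull the context-independent factors out of the maximum, unfold \cref{eqn:sigma}, and bound the remaining Mahalanobis norm uniformly by $L/G_T$ via \cref{thm:normUB}. The only difference is that you spell out the conditions needed for $G_T = \Theta(\sqrt{T})$ and the cancellation of the width scaling of $L$, which the paper leaves implicit when asserting the final $\tilde{O}(\sqrt{\tilde{d}/T})$ rate.
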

\vspace{-2mm}

The proof follows by applying \cref{lem:confBounds}, setting $z = \phi(c,a^\star) - \phi(c,\pi(c))$ in \cref{eqn:sigma}, and then using \cref{thm:normUB}.
Note that $\varepsilon'_{w,T} = O(1/T)$ and $\widetilde{d} = \widetilde{o}(\sqrt{T})$ as long as the NN width $w$ is large enough \citep{zhou2020neural,zhang2020neural,ICLR25_verma2025neural}.
Above \cref{thm:subGapUCB} and \cref{thm:subGapTS} show that the worst sub-optimality gap of the policy learned by \algo{} with UCB- and TS-based arm selection strategies decreases at a sub-linear rate with respect to the size of preference dataset, specifically at rate of  $\tilde{O}((\tilde{d}/T)^{\frac{1}{2}})$, where $\tilde{O}$ hides the logarithmic factors and constants.
Further, the decay rate of the worst sub-optimality gap for \algo{} improves by a factor of $\tilde{O}(({\tilde{d}\log T})^{\frac{1}{2}})$ compared to exiting algorithms \citep{arXiv23_mehta2023sample, arXiv24_das2024provably}, thereby bridging the gap between theory and practice.

        \subsection{Active Dueling Bandits with Regret Minimization}
        \label{ssec:adb_regret}
        %!TEX root =  main.tex

We start by defining the \emph{cumulative regret} (or `regret' for brevity) of a policy. After receiving preference feedback for $T$ pairs of arms, the regret of a sequential arm selection policy is given by:
$
	\Regret_T = \sum_{t=1}^T \Lb f(\phi(c_t, a_t^\star)) - \Lp{f(\phi(c_t,a_{t,1})) + f(\phi(a_{t,2}))}\Rp/{2}\Rb,
$
where $a_t^\star = \argmax_{a \in \cA_t}  f(\phi(c_t, a))$ denotes the arm that maximizes the reward function for a given context $c_t$.

In many real-world applications, such as medical treatment design~\citep{AM85_lai1985asymptotically,JMLR21_bengs2021preference} and content moderation~\citep{arXiv22_avadhanula2022bandits}, both actively selecting arms and minimizing regret is required. For instance, in personalized medical treatment, active learning is used to identify the most informative treatments to test, while cumulative regret minimization ensures the system continually adapts to deliver better patient outcomes.
Such scenarios also arise in other fields, such as dynamic pricing and personalized education, enabling systems to make smarter decisions, reduce suboptimal choices, and optimize overall performance as they gather more valuable data.

Since the arm selection strategies in \algo{} are directly adapted from UCB- and TS-based algorithms for contextual dueling bandits of \cite{ICLR25_verma2025neural}, the regret upper bounds for these algorithms also apply to \algo{}. For completeness, we state the regret upper bounds of \algo{} as follows.

\begin{cor}[Regret Upper Bound]{\citep[Theorem 2 and Theorem 3]{ICLR25_verma2025neural}}
    \label{cor:regretUCB}
    Let $\lambda > \kappa_\mu$ and $w \geq \text{poly}(T, L, K, 1/\kappa_\mu, L_\mu, 1/\lambda_0, 1/\lambda, \log(1/\delta))$. Then, with a probability of at least $1-\delta$, 
    the regret of \algo{} when using UCB- or TS-based arm selection strategy is upper bounded by 
    \eqs{
        \Regret_T = \widetilde{O}\left( \left( \frac{\sqrt{\widetilde{d}}}{\kappa_\mu} + \sqrt{\frac{\lambda}{\kappa_\mu}}\right) \sqrt{T  \widetilde{d} } \right).
    }
\end{cor}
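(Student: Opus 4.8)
\para{Proof proposal.} The plan is to obtain the stated bound not through a fresh analysis but by a reduction to the regret guarantees of \cite{ICLR25_verma2025neural}, exploiting that the two algorithms share an identical arm-selection mechanism. First I would recall that the cumulative regret $\Regret_T$ decomposes additively across rounds, and that each per-round term $f(\phi(c_t,a_t^\star)) - \tfrac{1}{2}\Lp f(\phi(c_t,a_{t,1})) + f(\phi(c_t,a_{t,2}))\Rp$ is exactly the contextual dueling regret incurred for the triplet $(c_t,a_{t,1},a_{t,2})$, benchmarked against the best arm $a_t^\star = \argmax_{a\in\cA_t} f(\phi(c_t,a))$ for the round's own context $c_t$. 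Crucially, the context-selection rule enters the regret only by supplying $c_t$; it never appears in the benchmark beyond fixing the context against which the selected pair of arms is compared.

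Next I would verify, term by term, that \algo{} reproduces the neural contextual dueling bandit algorithm of \cite{ICLR25_verma2025neural}: the reward estimate is the same NN $h(\cdot;\theta_t)$ trained with the same cross-entropy loss (\cref{eqn:loss_function}); the first arm is selected greedily (\cref{eqn:first_arm}); and the second arm is selected either by the UCB rule (\cref{eqn:second_arm_ucb}) or the TS rule (\cref{eqn:second_arm_ts}), using the identical confidence width $\nu_T\sigma_{t-1}(c_t,a_{t,1},\cdot)$ defined through \cref{eqn:sigma}. Since the hypotheses of the corollary, namely $\lambda > \kappa_\mu$, the width lower bound $w \ge \text{poly}(\cdots)$, \cref{assup:link:function} on the link function, and the BTL model, coincide exactly with those of Theorems 2 and 3 of \cite{ICLR25_verma2025neural}, their confidence-set construction (the NTK-based linearization with effective dimension $\widetilde{d}$) and their per-round regret bound apply unchanged. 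Summing over $t$ and invoking their elliptical-potential argument then yields $\widetilde{O}\big((\sqrt{\widetilde{d}}/\kappa_\mu + \sqrt{\lambda/\kappa_\mu})\sqrt{T\widetilde{d}}\big)$ for the UCB rule (from Theorem 2) and the same order for the TS rule (from Theorem 3).

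The one point that genuinely requires checking, and which I expect to be the main obstacle, is that the regret analysis of \cite{ICLR25_verma2025neural} is agnostic to how the contexts $c_t$ are generated. \algo{} selects $c_t$ by the diversity rule (\cref{eqn:context_selection}) rather than receiving it exogenously, so I would confirm that their self-normalized martingale concentration and confidence bounds use only that $c_t$ is $\cF_{t-1}$-measurable, i.e., that the contexts form an adapted (possibly adversarial) sequence, and never invoke any i.i.d.\ or stochastic assumption on the contexts. Granted this, the context sequence produced by \cref{eqn:context_selection} is an admissible adapted sequence, so the cited per-round bounds hold verbatim and transfer to \algo{}, completing the argument.
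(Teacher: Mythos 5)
Your proposal is correct and matches the paper's own justification: the paper likewise obtains this corollary purely by citation, noting that the arm-selection strategies are directly adapted from \cite{ICLR25_verma2025neural} and (in the proof of \cref{lem:confBounds}) that their confidence bounds hold for any adapted sequence of contexts, which is exactly the one point you flag as needing verification. Your write-up is somewhat more explicit about the per-round regret decomposition and the hypothesis matching, but the route is the same reduction argument.
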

Ignoring logarithmic factors and constants, the asymptotic growth rates of \algo{} with UCB- and TS-based arm selection strategy are identical and sub-linear.

    \section{Experiments}
    \label{sec:experiments}
    %!TEX root =  main.tex

To validate our theoretical results, we empirically evaluate the performance of our algorithms on different problem instances of synthetic datasets. 
Specifically, we use two commonly used synthetic functions adopted from existing works on neural bandits \citep{zhou2020neural, zhang2020neural,ICLR23_dai2022federated,ICLR25_verma2025neural}: $f(x)=10(x^\top\theta)^2$ (Square) and $f(x) = 2\sin(x^\top\theta)$ (Sine). 
All experiments are repeated 10 times, and we report the average worst suboptimality gap with 95\% confidence intervals (depicted as vertical lines on each curve).

\begin{figure}[!ht]
    \vspace{-5mm}
	\centering
    \subfloat[Suboptimality Gap]{\label{fig:subg_square}
		\includegraphics[width=0.29\linewidth]{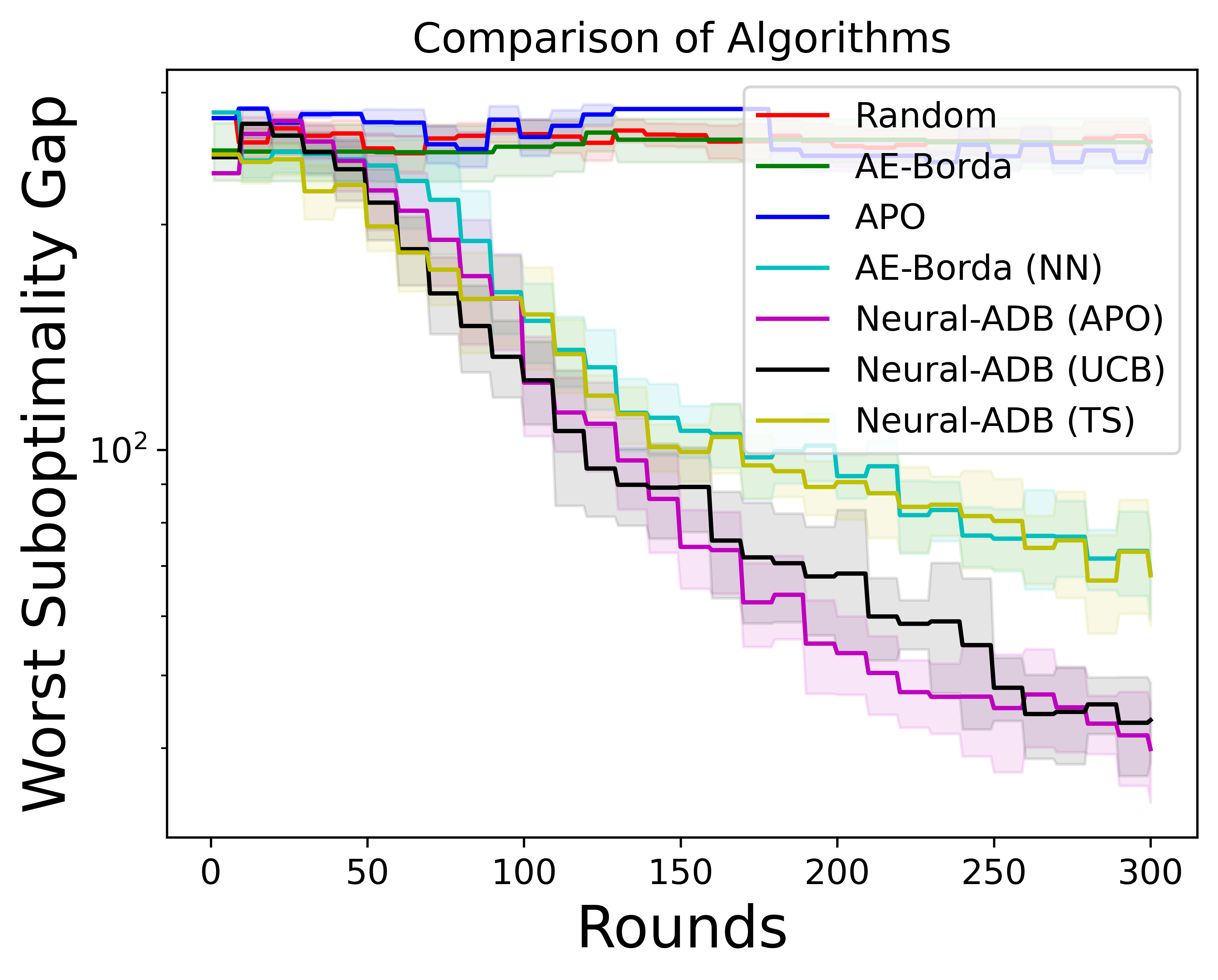}}
    \qquad 
    \subfloat[MAE]{\label{fig:subg_avg_square}
		\includegraphics[width=0.29\linewidth]{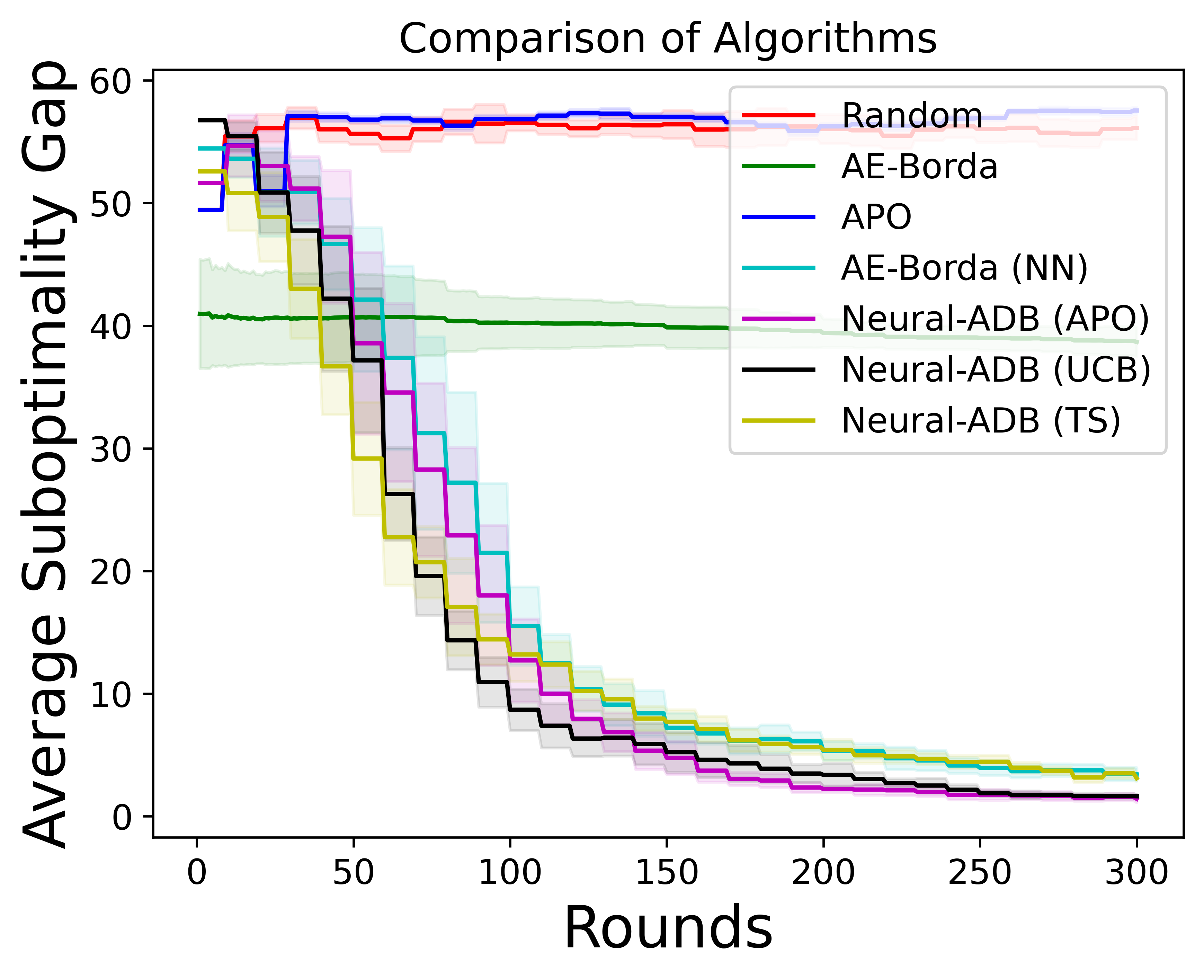}}
	\qquad 
    \subfloat[Average Regret]{\label{fig:rmse_avg_square}
		\includegraphics[width=0.29\linewidth]{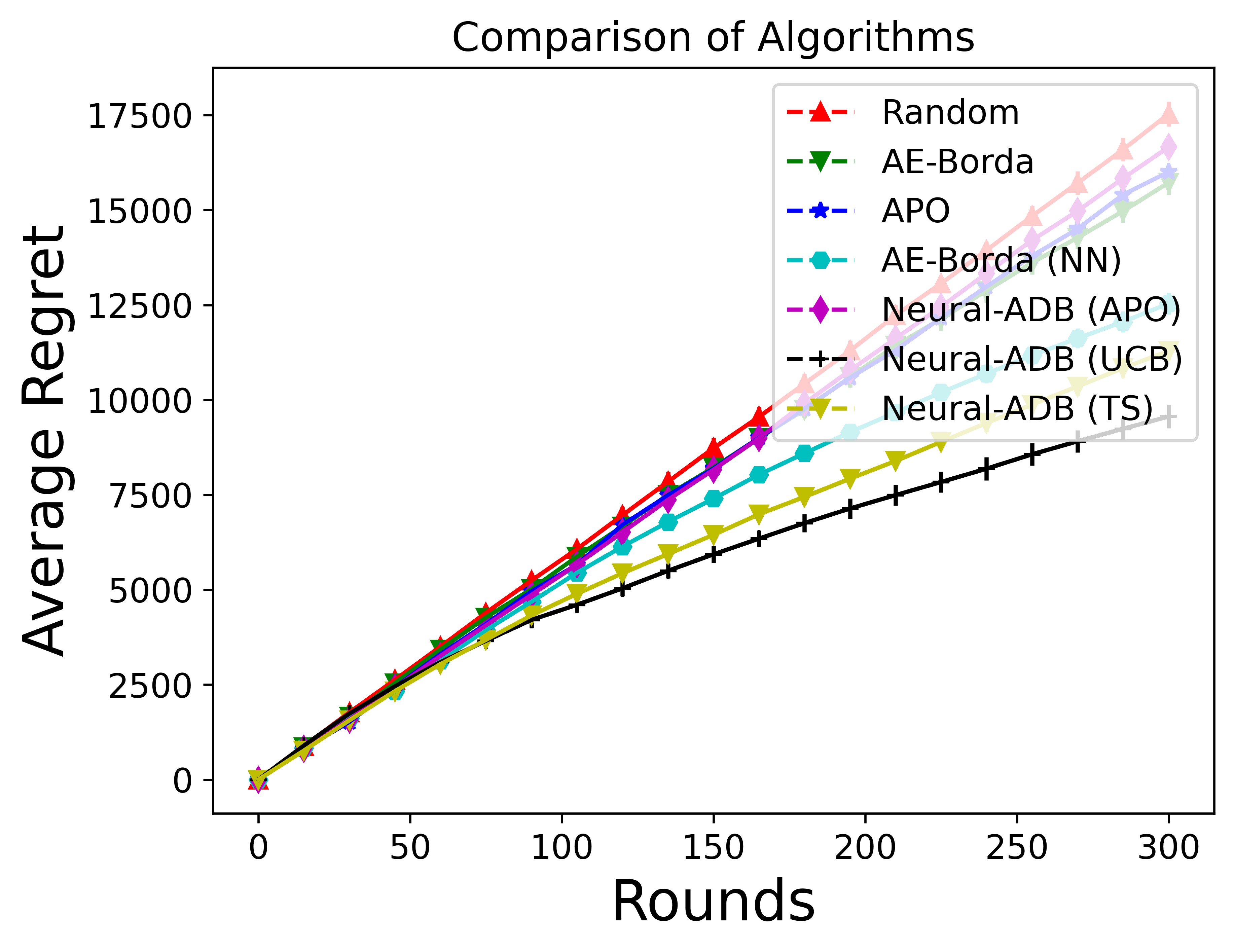}} 
    \\ \vspace{-1mm}
    \subfloat[Suboptimality Gap]{\label{fig:subg_sine}
		\includegraphics[width=0.29\linewidth]{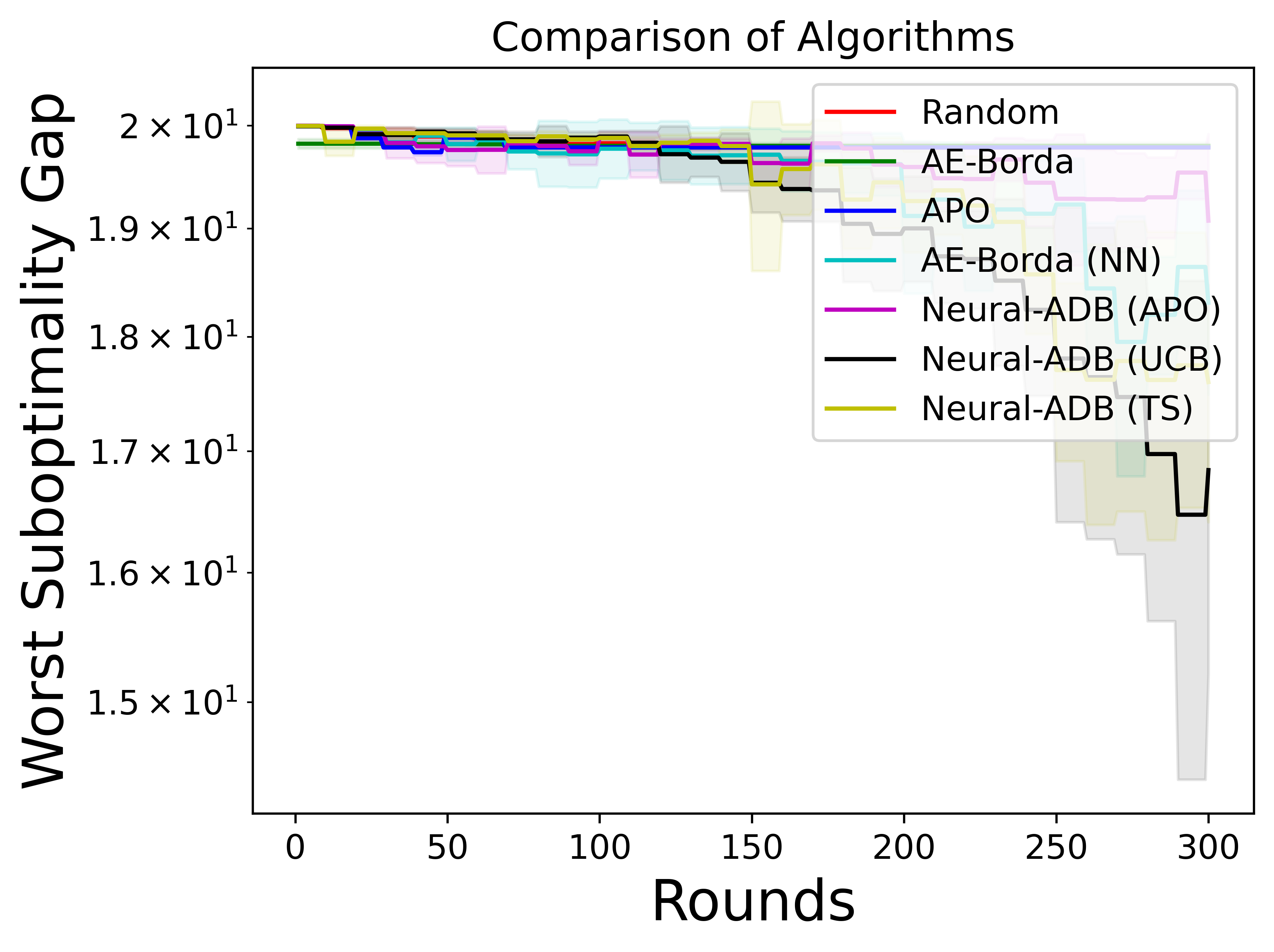}}
    \qquad 
    \subfloat[MAE]{\label{fig:subg_avg_sine}
		\includegraphics[width=0.29\linewidth]{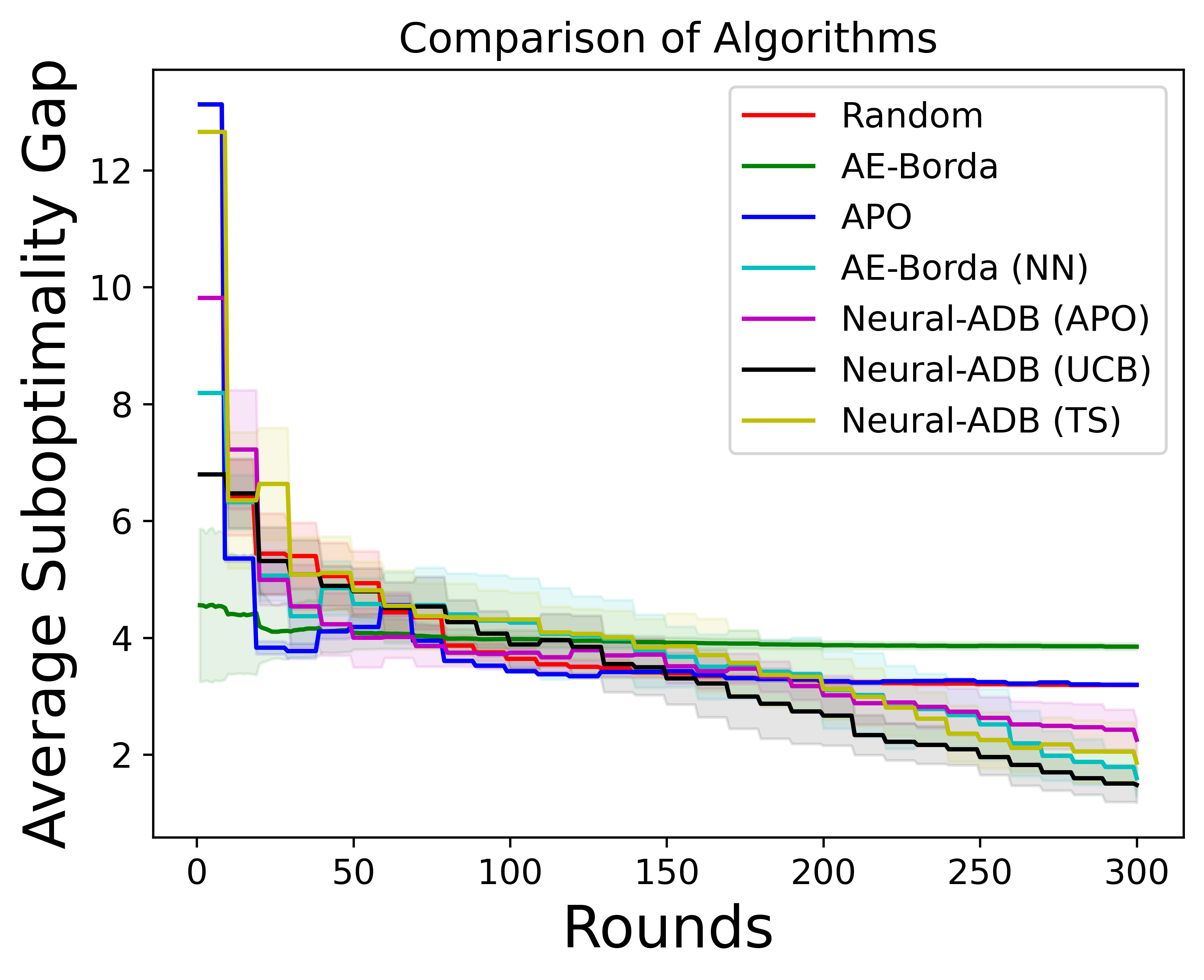}}
	\qquad 
    \subfloat[Average Regret]{\label{fig:rmse_avg_sine}
		\includegraphics[width=0.29\linewidth]{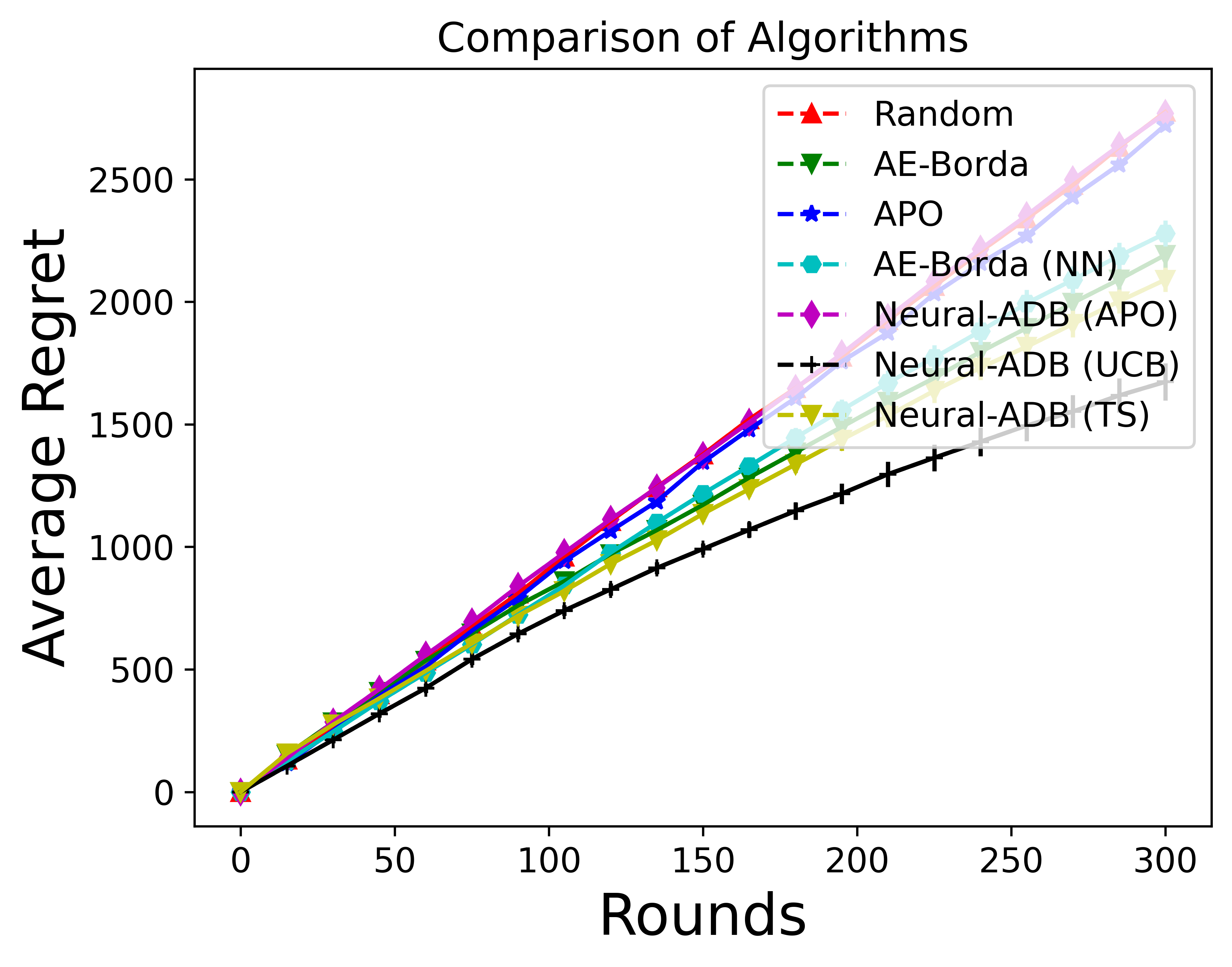}}  
	\caption{
         Performance comparison of \algo{} against different active dueling bandit algorithms on synthetic functions: Square function (top row) and Sine function (bottom row).
	}
	\label{fig:main-comparison}
    \vspace{-2mm}
\end{figure}

\begin{figure}[!ht] 
	\vspace{-2mm}
	\centering
    \subfloat[Suboptimality Gap]{\label{fig:subg_abl_dimsquare}
		\includegraphics[width=0.27\linewidth]{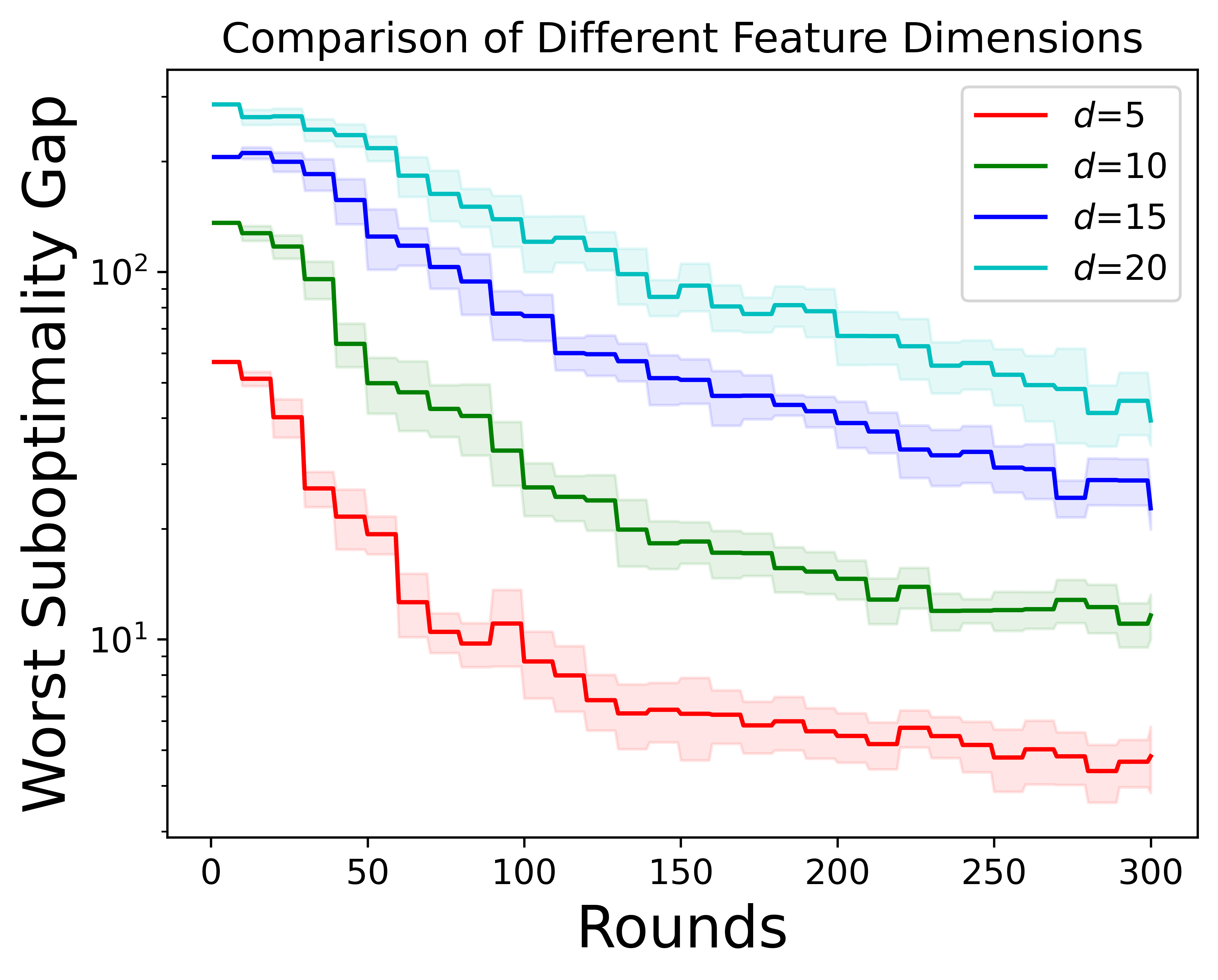}}
    \qquad 
    \subfloat[MAE]{\label{fig:subg_avg_abl_dimsquare}
		\includegraphics[width=0.27\linewidth]{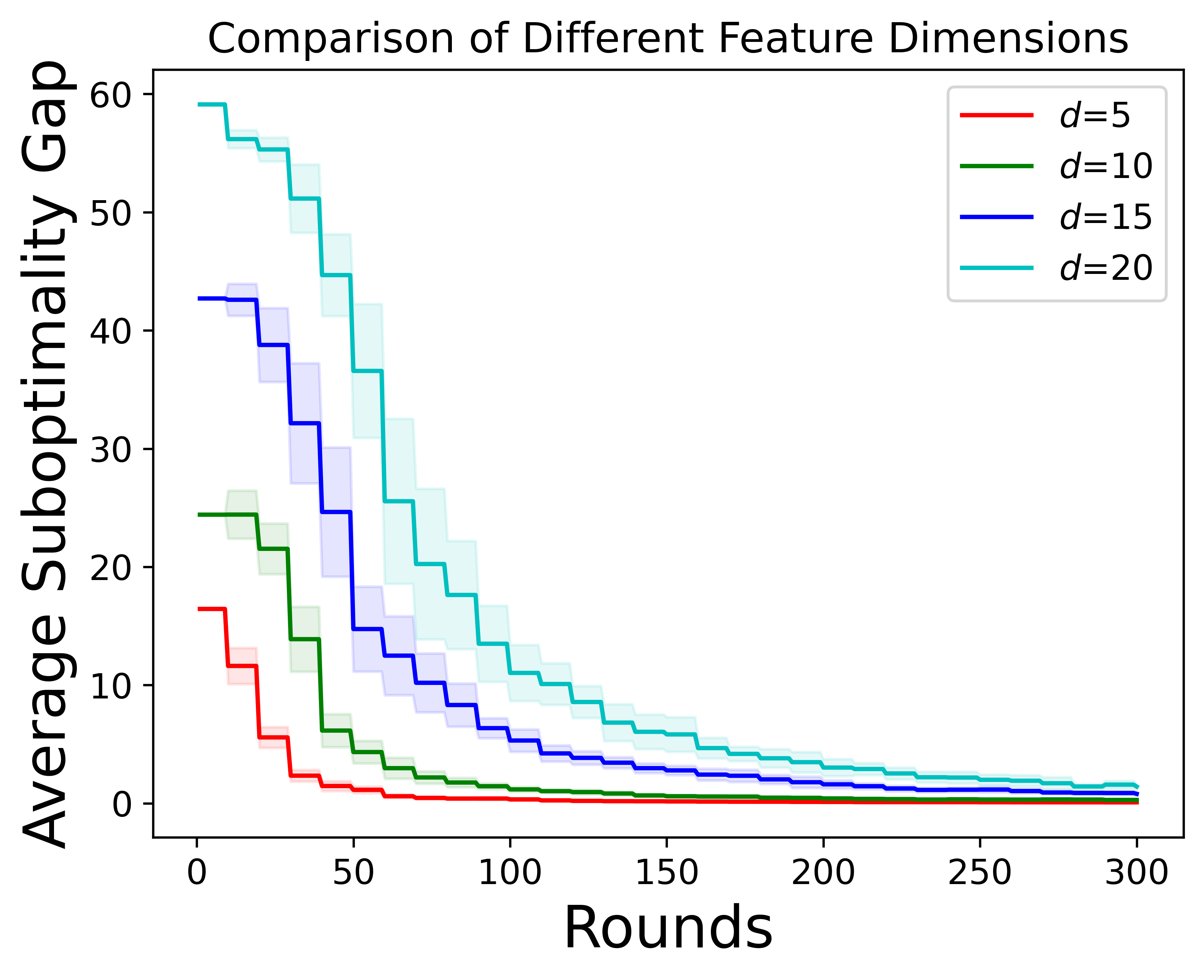}}
	\qquad 
    \subfloat[Average Regret]{\label{fig:rmse_avg_abl_dimsquare}
		\includegraphics[width=0.27\linewidth]{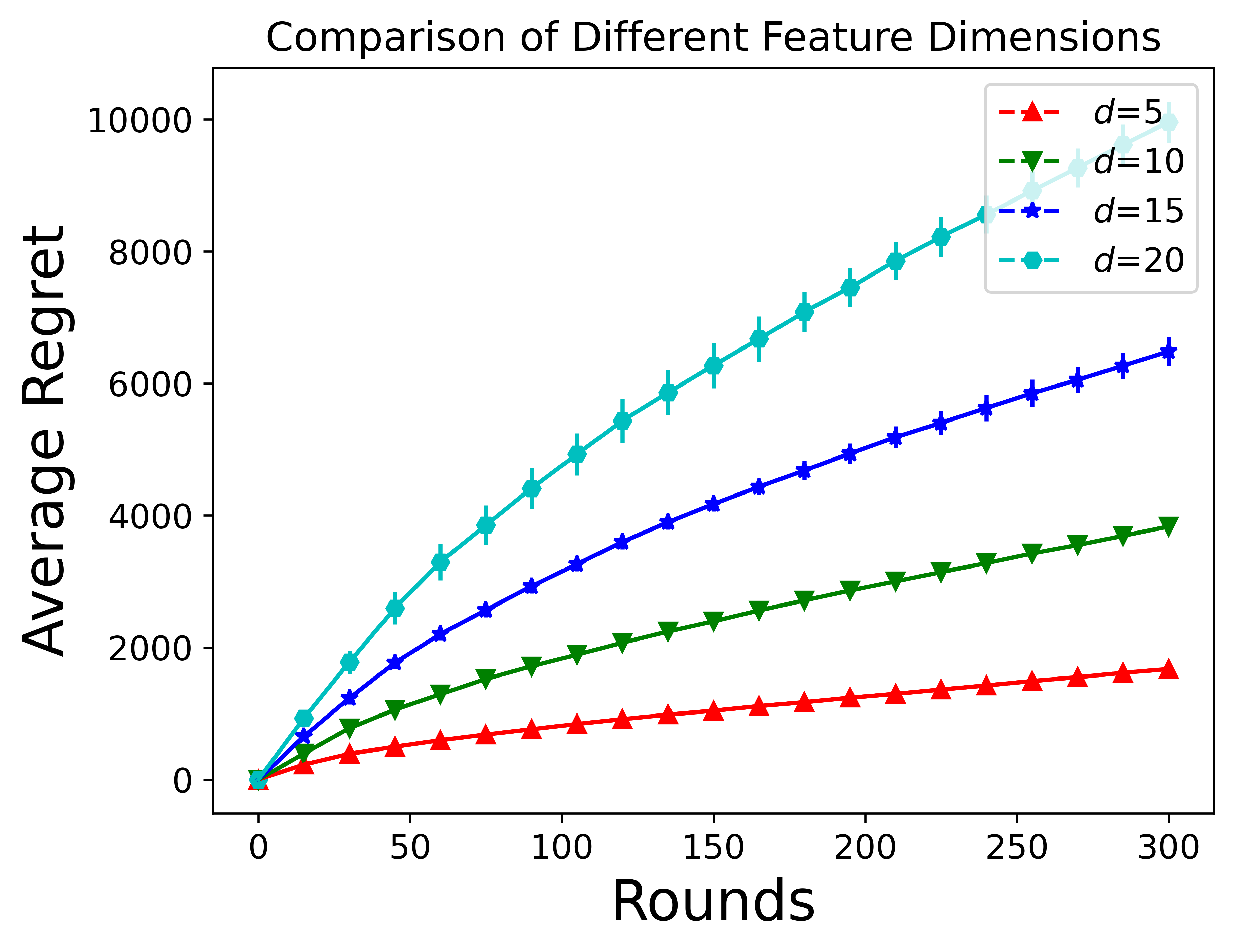}} \\
    \vspace{-1mm}
    \subfloat[Suboptimality Gap]{\label{fig:subg_abl_square}
		\includegraphics[width=0.27\linewidth]{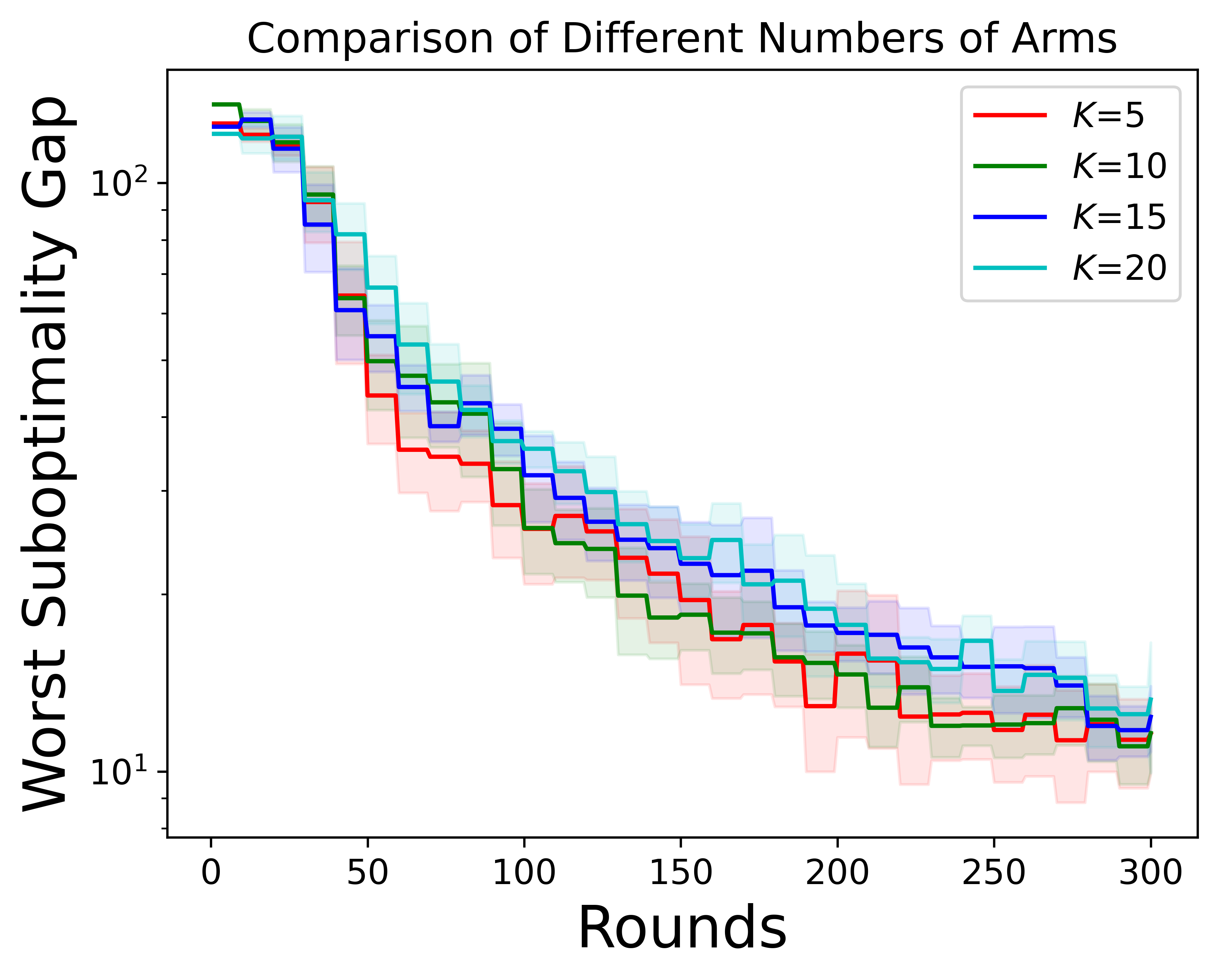}}
    \qquad 
    \subfloat[MAE]{\label{fig:subg_avg_abl_square}
		\includegraphics[width=0.27\linewidth]{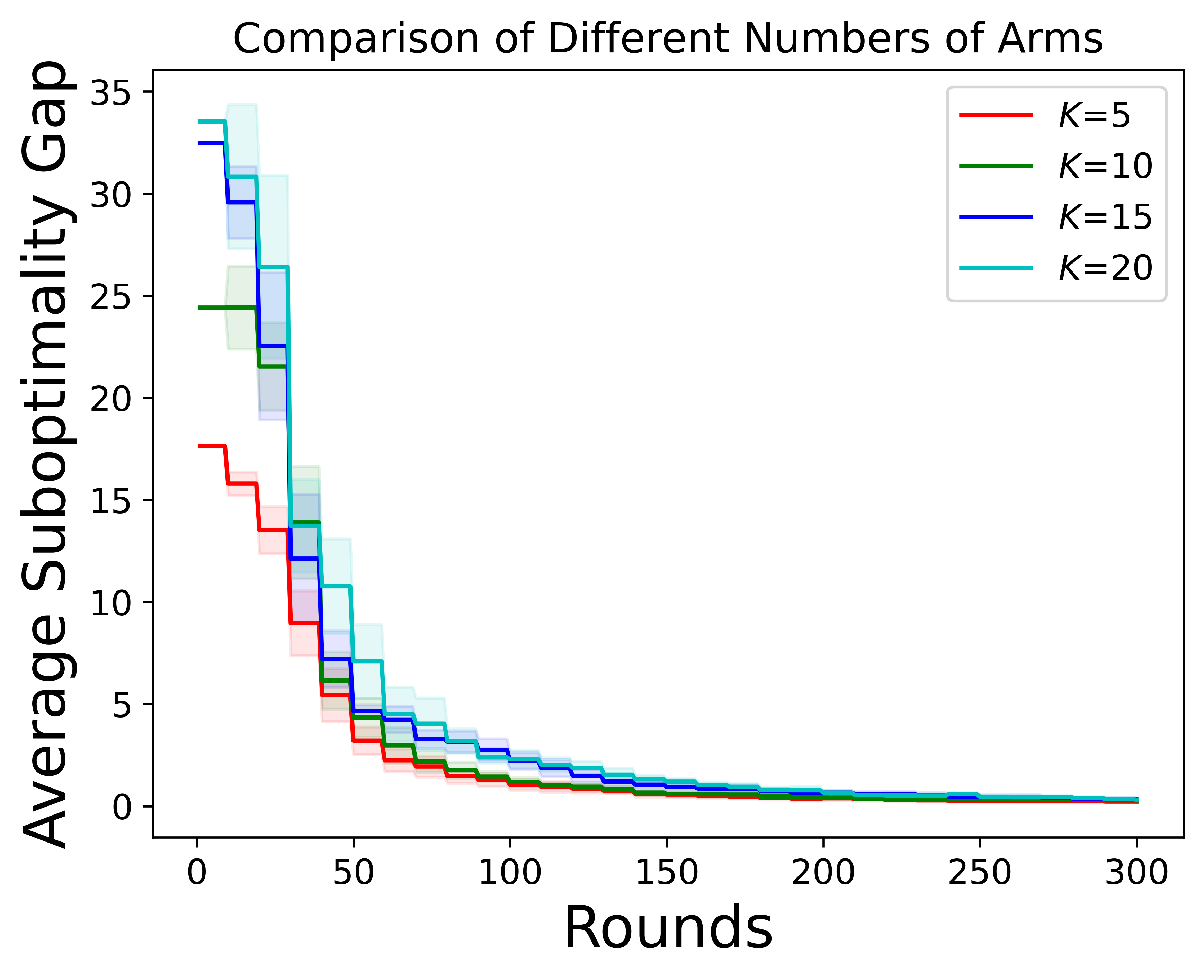}}
	\qquad 
    \subfloat[Average Regret]{\label{fig:rmse_avg_abl_square}
		\includegraphics[width=0.27\linewidth]{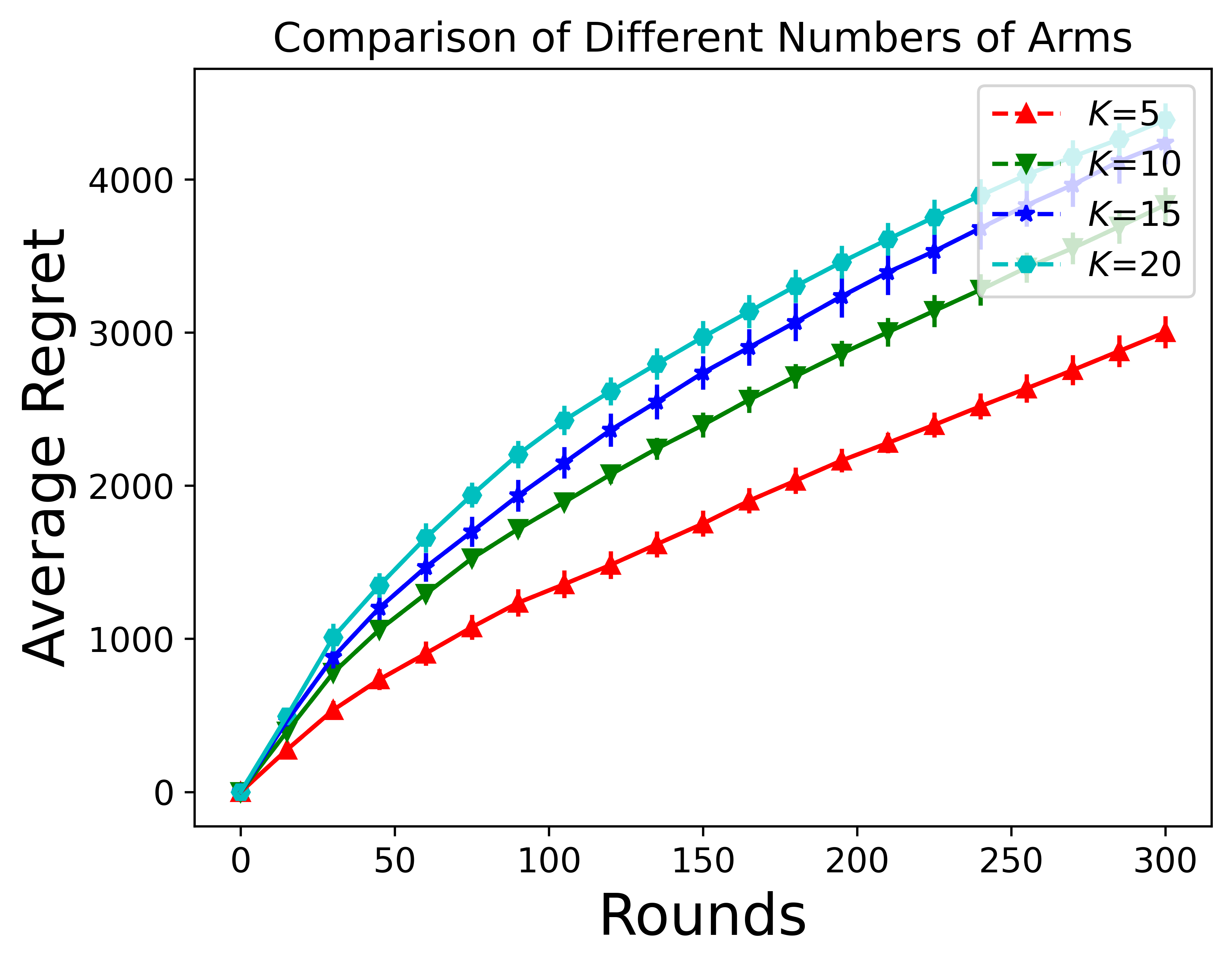}} 
	\caption{
         Performance of \algo{} (UCB) on the Square function, evaluated across varying input dimensions (top row) and numbers of arms (bottom row).
	}
	\label{fig:neuraldb_ucb_ablations}
    \vspace{-4mm}
\end{figure}

\begin{figure}[!ht]
    \vspace{-5mm}
	\centering
    \subfloat[Sub-Optimality Gap]{\label{fig:subg_dim}
		\includegraphics[width=0.27\linewidth]{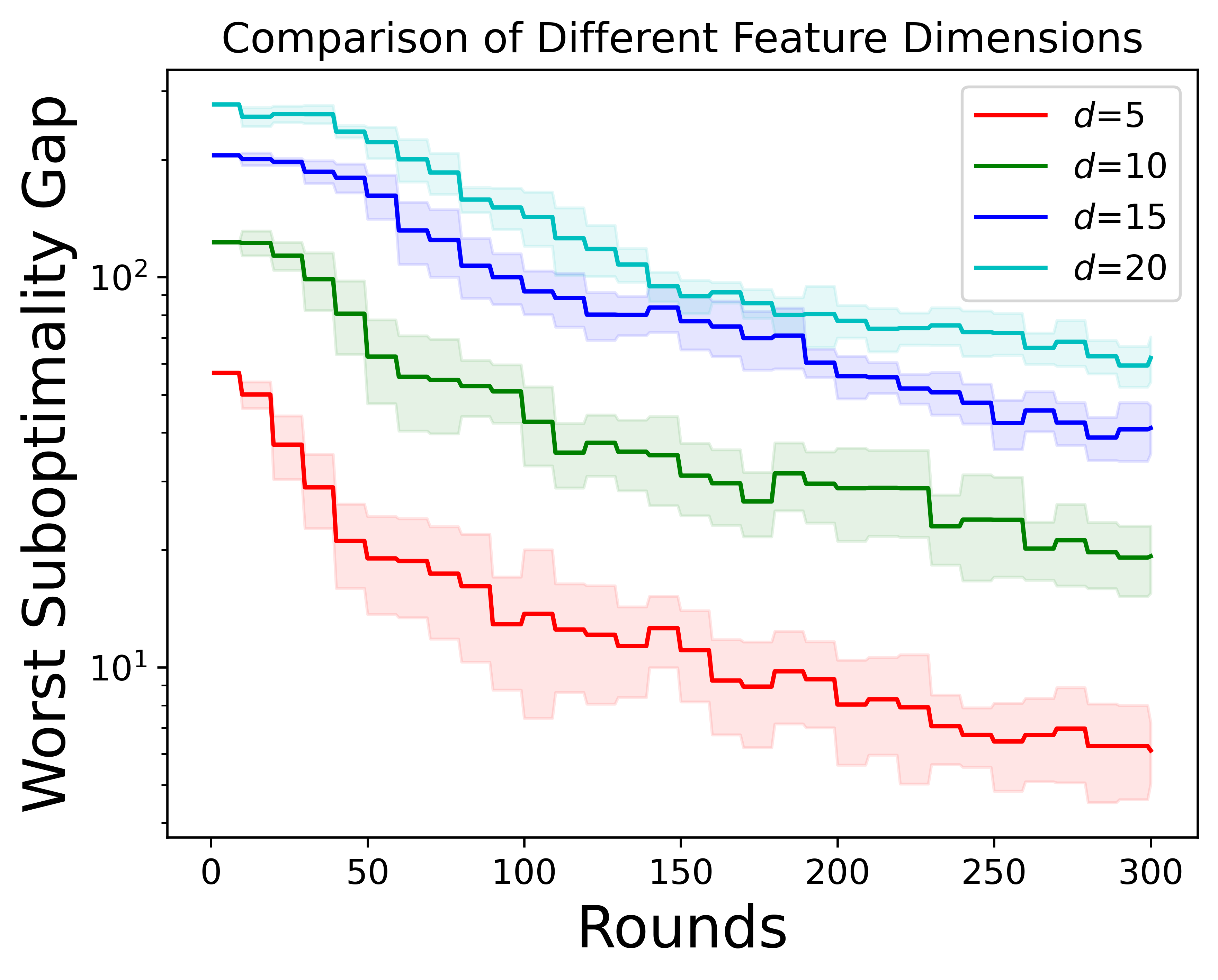}}
    \qquad
    \subfloat[MAE]{\label{fig:subg_avg_dim}
		\includegraphics[width=0.27\linewidth]{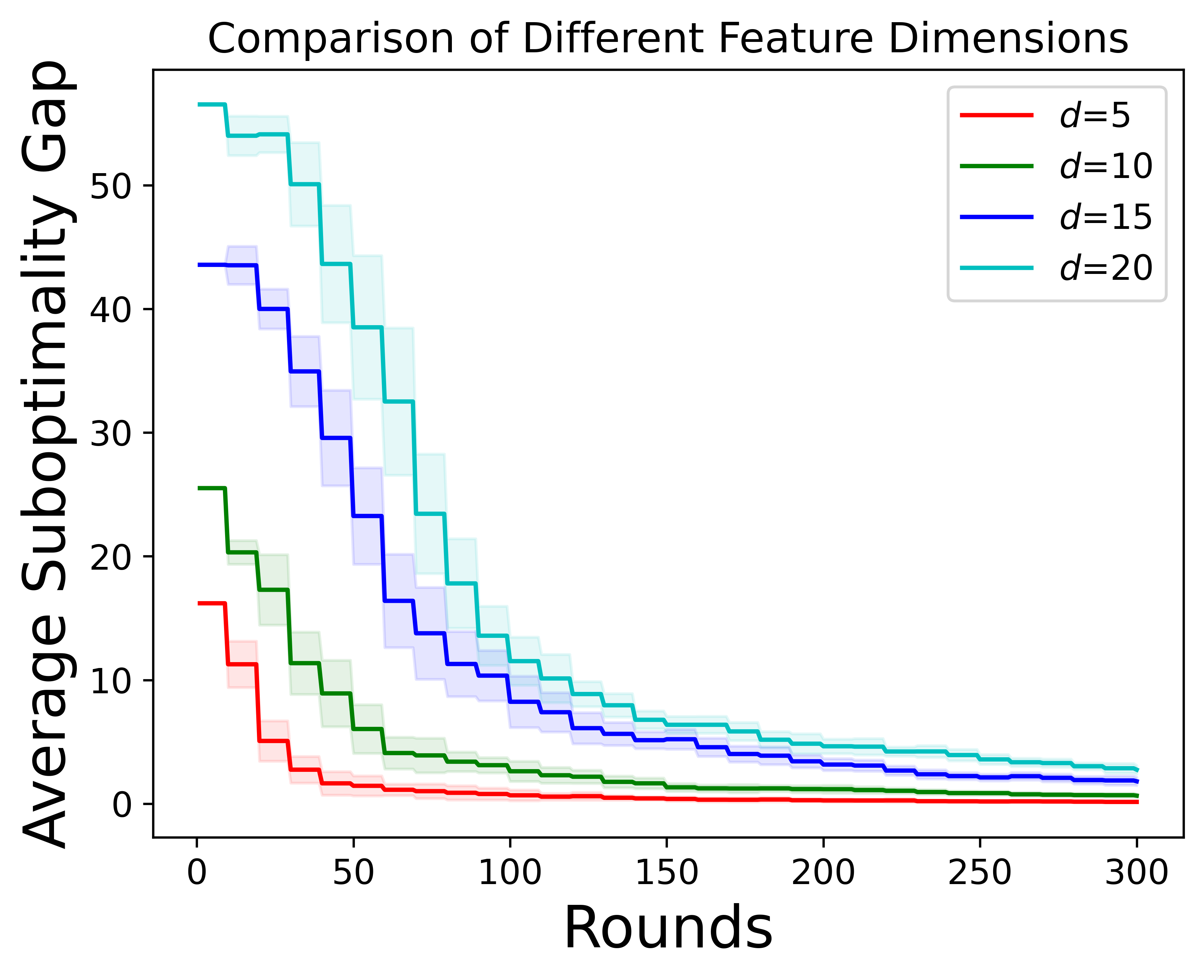}}
	\qquad
    \subfloat[Average Regret]{\label{fig:rmse_avg_dim}
		\includegraphics[width=0.27\linewidth]{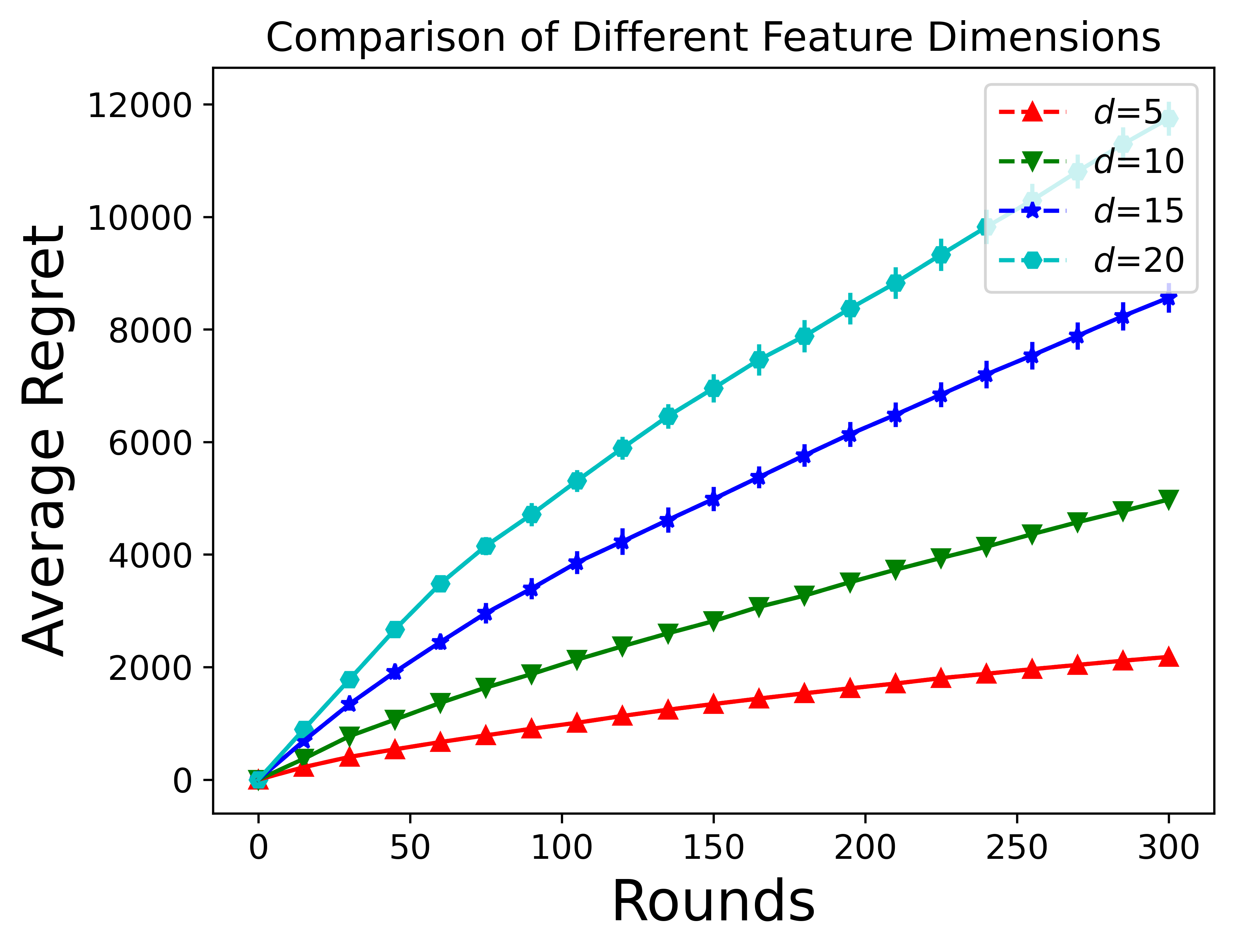}} \\
    \vspace{-1mm}
    \subfloat[Sub-Optimality Gap]{\label{fig:subg_arms}
		\includegraphics[width=0.27\linewidth]{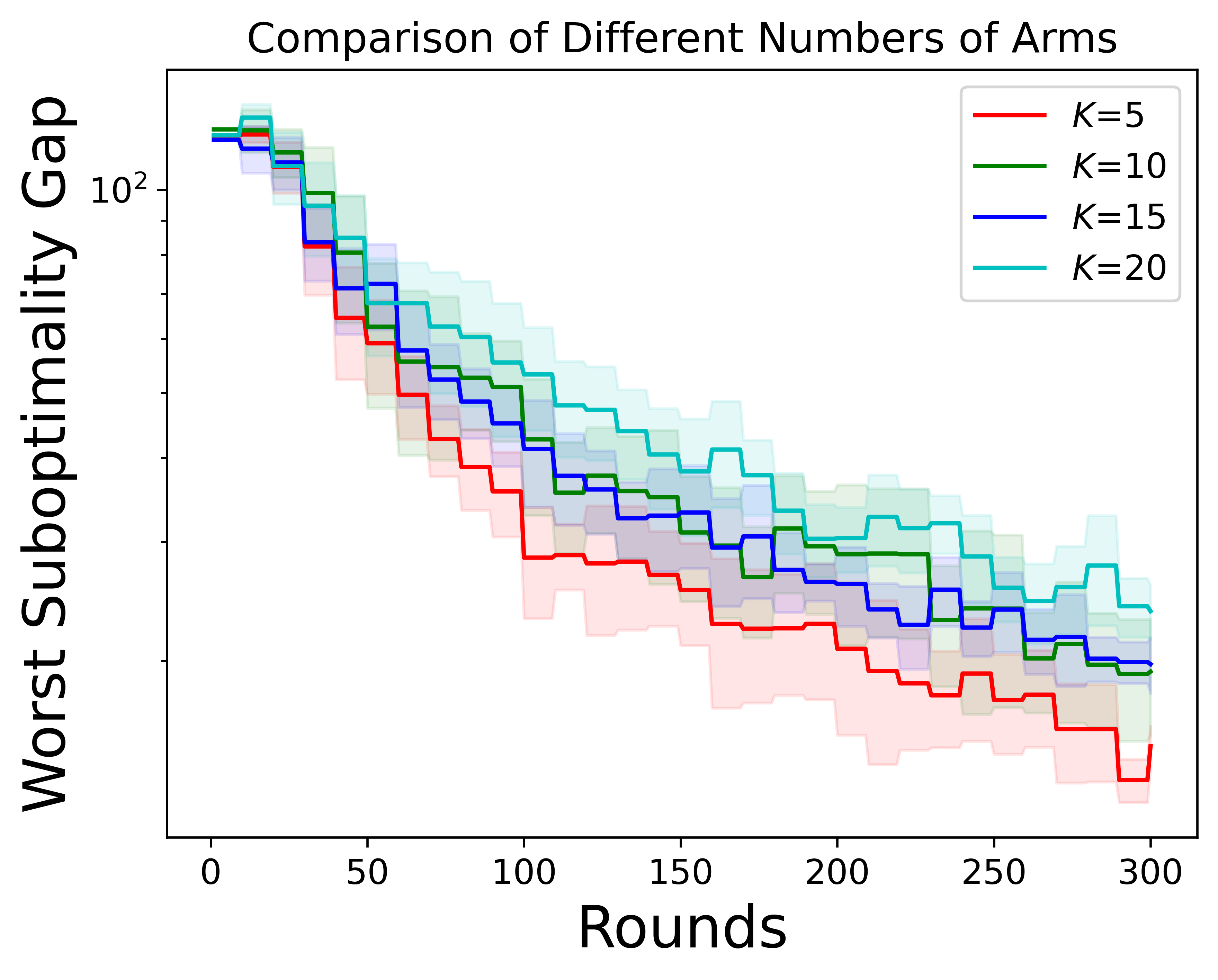}}
    \qquad
    \subfloat[MAE]{\label{fig:subg_avg_arms}
		\includegraphics[width=0.27\linewidth]{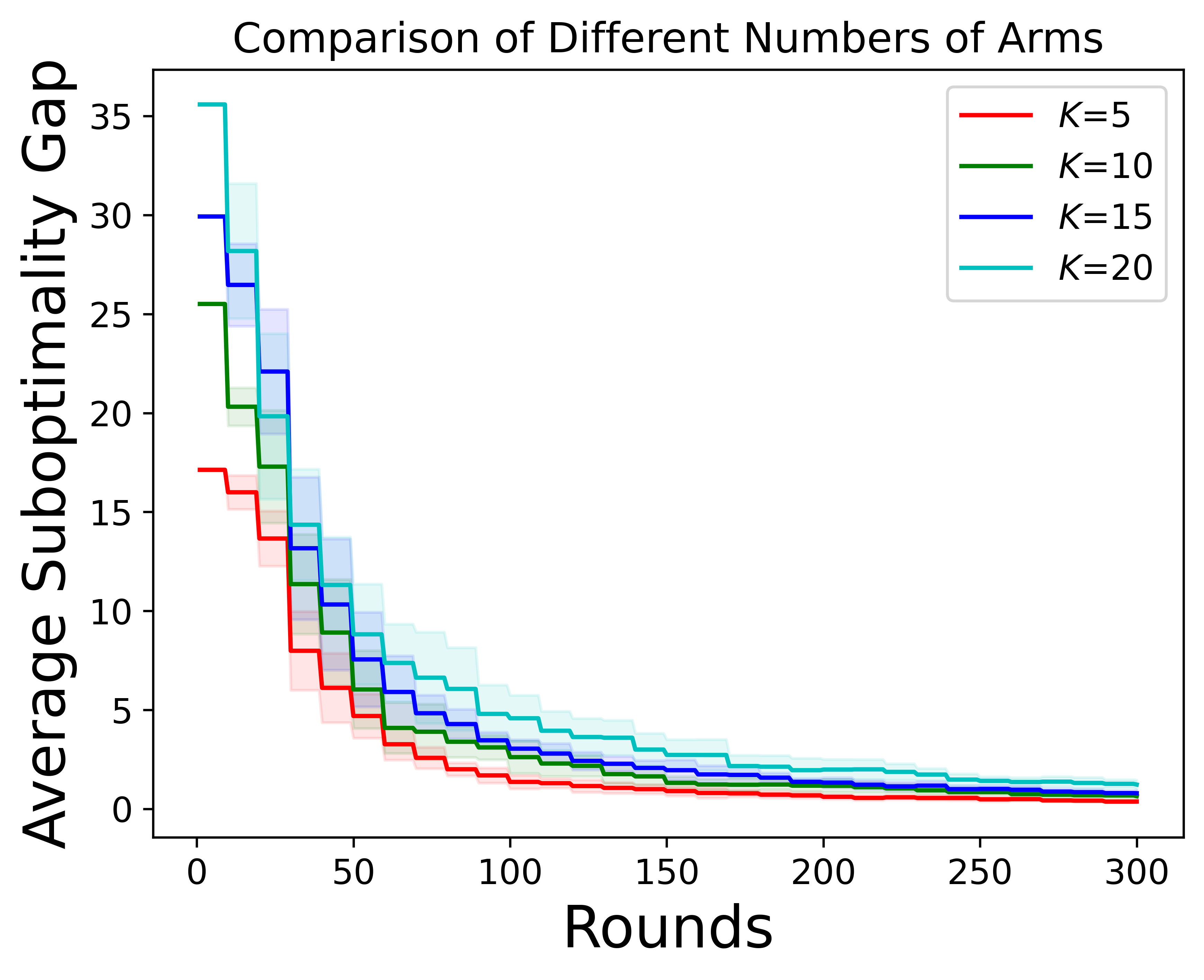}}
	\qquad
    \subfloat[Average Regret]{\label{fig:rmse_avg_arms}
		\includegraphics[width=0.27\linewidth]{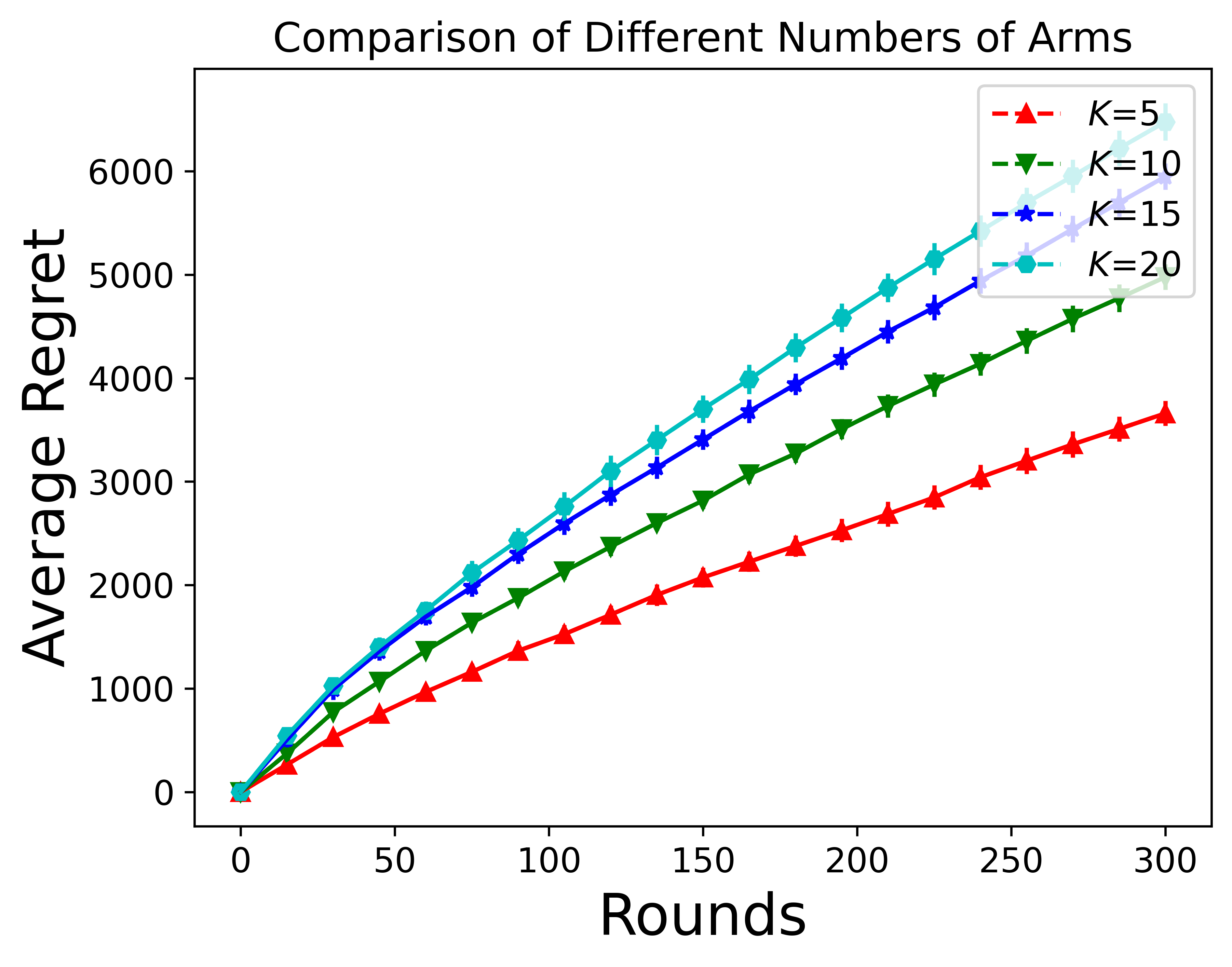}} 
	\caption{
        Performance of \algo{} (TS) on the Square function, evaluated across varying input dimensions (top row) and numbers of arms (bottom row).
	}
	\label{fig:neuraldb_ts_ablations}
    \vspace{-4mm}
\end{figure}

\para{Synthetic dataset.}
We generate sample features for each context-arm pair in a $d$-dimensional space. 
Let $x_{t, a}$ be the context-arm feature vector for context $c_t$ and an arm $a$. For all $t\ge 1$, $x_{t,a}$ is sampled uniformly at random from $(-1, 1)^d$.
We keep the number of arms constant across all rounds, denoted by $K$.
In our experiments, the binary preference feedback indicating whether $x_{t,1}$ preferred over $x_{t,2}$ (representing human preference feedback) is sampled from a Bernoulli distribution with parameter $\mu \Lp f(x_{t,1}) - f(x_{t,2})\Rp$), where $f$ is either a Square or Sine function..

\para{Reward function estimation.}
We use a neural network with $2$ hidden layers with width $50$ to estimate the latent reward function, $\lambda = 1.0$, $\delta=0.05$, $d = 20$, $K=10$, $T=1000$, and fixed value of $\nu_T = \nu = 1.0$ in all our experiments (unless we specifically indicate $d$ and $K$). Note that we did not perform any hyperparameter search for \algo{}, whose performance can be further improved by doing the hyperparameter search.

\para{Comparison with baselines.}
We compare the worst suboptimality gap (defined in \cref{eqn:suboptimality}), MAE (average suboptimality gap, i.e., $\sum_{t=1}^T\Lb\max_{a \in \cA} f(\phi(c_t,a)) - f\Lp\phi(c_t,\pi(c))\Rp \Rb/ T$), and average regret (defined in \cref{ssec:adb_regret}) against the different baselines of active contextual dueling bandits to evaluate the performance of UCB- and TS-variant of \algo{}. 
We use three baselines: Random, AE-Borda~\citep{arXiv23_mehta2023sample}, AE-DPO~\citep{arXiv23_mehta2023sample},  APO~\citep{arXiv24_das2024provably}, and the neural variants of AE-Borda and APO in which we use a neural network to estimate the latent reward function. They are named AE-Borda (NN) and \algo{} (APO) respectively. Experimental results in Fig.~\ref{fig:main-comparison} show that our algorithm, \algo{} (UCB), outperforms other baselines in almost all synthetic functions (i.e., square function and sine function) in terms of the suboptimality gap. 
Moreover, both UCB- and TS-variants of \algo{} also outperform other baselines on all synthetic functions in MAE and average regret. 
We have included more comparisons of our approach with other baselines in other settings (e.g., different $d$ or $K$) in \cref{sec:appendix_experiments}.

\para{Varying dimensions and arms vs. performance.}
As we increase the dimension of the context-arm feature vectors ($d$) and number of arms ($K$), the problem becomes more challenging. To assess how the changes in $K$ and$d$ affect the performance of our proposed algorithms, we vary $K = \{ 5, 10, 15, 20 \}$ and $d = \{5, 10, 15, 20 \}$ , while keeping the other parameters fixed. 
As expected, the performance of our algorithms gets worse with higher values of $K$ and $d$, as shown in Fig.~\ref{fig:neuraldb_ucb_ablations}. We have included similar results for \algo{} (TS) in \cref{fig:neuraldb_ts_ablations}.

    \section{Related Work}
    \label{sec:related_work}
    %!TEX root =  main.tex

In the following, we briefly review the relevant work, especially in neural contextual bandits and dueling bandits, to our problem setting.

\para{Neural Contextual Bandits.}
To model complex and non-linear reward functions, neural contextual bandits \citep{ICLR18_riquelme2018deep,zhou2020neural, zhang2020neural, ICLR22_xu2022neural, ArXiv25_bae2025neural, ICLR25_verma2025neural} use deep neural networks for reward function estimation. \citep{ICLR18_riquelme2018deep} employ multi-layer neural networks to learn arm embeddings and then use Thompson Sampling at the final layer for exploration. \cite{zhou2020neural} propose the first neural contextual bandit algorithm with sub-linear regret guarantees, using a UCB exploration strategy. Building on this, \cite{zhang2020neural} propose an algorithm with a TS exploration strategy. \cite{ICLR22_ban2022ee} introduces an adaptive exploration strategy incorporating an auxiliary neural network to estimate the potential gain of the exploitation neural network, diverging from traditional UCB and TS exploration strategies. To reduce the computational overhead of using gradient-based features, \cite{ICLR22_xu2022neural} only perform UCB-based exploration on the final layer of the neural network. More recent works \citep{ArXiv25_bae2025neural, ICLR25_verma2025neural} extend these techniques to handle neural contextual bandit settings with binary feedback (i.e., neural logistic bandits).

\para{Finite-Armed Dueling Bandits.}
Learning from preference feedback has been extensively studied in the bandit literature. In the finite-armed dueling bandits setting, the learner aims to find the best arm while only observing preference feedback for two selected arms \citep{ICML09_yue2009interactively,ICML11_yue2011beat,JCSS12_yue2012k}.  
To determine the best arm in dueling bandits, different criteria, such as the Borda winner, Condorcet winner, Copeland winner, or von Neumann winner, have been used while focusing on minimizing regret using only pairwise preference feedback \citep{ICML14_ailon2014reducing,WSDM14_zoghi2014relative,ICML14_zoghi2014relative,ICML15_gajane2015relative,COLT15_komiyama2015regret,UAI18_saha2018battle,AISTATS19_saha2019active,ALT19_saha2019pac,AISTATS19_verma2019online,ACML20_verma2020thompson,NeurIPS20_verma2020online,ICML23_zhu2023principled}. For a comprehensive overview of algorithms for various dueling bandits settings, we refer readers to the survey by \cite{JMLR21_bengs2021preference}.

\para{Contextual Dueling Bandits.}
Many real-life applications, such as online recommendations, content moderation, medical treatment design, prompt optimization, and aligning large language models, can be effectively modeled using contextual dueling bandits, where a learner observes a context (additional information before selecting a pair of arms) and then selects the arms based on that context and observes preference feedback for the selected arms. 
Since the number of context-arm pairs can be potentially large or even infinite, the mean latent reward of each context-arm is assumed to be parameterized by an unknown function of its features. Common assumptions include linear reward models \citep{NeurIPS21_saha2021optimal, ICML22_bengs2022stochastic, arXiv23_di2023variance, ALT22_saha2022efficient, arXiv24_li2024feelgood} and non-linear models \citep{ICLR25_verma2025neural}. 
For our setting, we adopt the neural contextual dueling bandit algorithms proposed in \citep{ICLR25_verma2025neural} to construct confidence ellipsoids for the latent non-linear reward function. Note that \algo{} can incorporate alternative confidence ellipsoids by appropriately modifying \cref{lem:confBounds}.
Furthermore, our work addresses an active learning problem and analyzes the convergence rate of the worst sub-optimality gap, whereas \citep{ICLR25_verma2025neural} focus on a regret minimization setting and derive upper bounds on cumulative regret.

\para{Active contextual dueling bandits.}
The work most closely related to ours is active contextual dueling bandit \citep{arXiv23_mehta2023sample,arXiv24_das2024provably}, which takes a principled approach to actively collecting preference datasets.
However, two key differences exist between our work and existing research: the non-linear reward function and the arm selection strategy. Existing studies typically assume a linear reward function, which may not be suitable for many real-world applications. 
Our work addresses this gap by extending the existing framework to incorporate non-linear reward functions in contextual dueling bandits. Additionally, existing approaches use different methods for selecting the pair of arms, leading to distinct arm selection strategies compared to ours. As a result of these differences in both the arm selection strategy and the non-linear reward function (which we estimate using a neural network), our analysis diverges significantly from that of prior work.

    \section{Conclusion}
    \label{sec:conclusion}
    %!TEX root =  main.tex

This paper studies the problem of active human preference feedback collection by modeling it as an active neural contextual dueling bandit problem. We propose \algo{}, a principled and practical algorithm designed for efficiently gathering human preference feedback in scenarios where the reward function is non-linear. 
Exploiting the neural contextual dueling bandit framework, \algo{} extends its applicability to a broad range of real-world applications, including online recommendation systems and LLM alignment. Our theoretical analysis demonstrates that the worst suboptimality gap of \algo{} decays at a sub-linear rate as the preference dataset grows. 
Finally, our experimental results further validate these theoretical findings. An interesting direction for future work is applying \algo{} to real-life applications such as LLM alignment.
From a theoretical perspective, exploring the non-stationary setting presents another promising future direction.

    \subsection*{Acknowledgements}
    This research is supported by the National Research Foundation (NRF), Prime Minister’s Office, Singapore under its Campus for Research Excellence and Technological Enterprise (CREATE) programme. The Mens, Manus, and Machina (M3S) is an interdisciplinary research group (IRG) of the Singapore MIT Alliance for Research and Technology (SMART) centre.

    \subsection*{Impact Statement}
    This paper's primary contributions are theoretical, so we do not anticipate any immediate negative societal impacts. However, our algorithms could be used to improve online DPO/RLHF for efficient LLM alignment, which would benefit society. While the potential negative societal impacts associated with LLM alignment may also extend to our work, mitigation strategies for preventing the misuse of LLM alignment would also help safeguard against any potential misuse of our algorithms.

    % \newpage
    \bibliographystyle{unsrtnat} 
    \bibliography{references}

    %%%%%%%%%%%%%%%%%%%%%%%%%%%%%%%%%%%%%%%%%%%%%%
    \newpage
    \onecolumn
    \appendix
    
    \section{Leftover Proofs}
    \label{sec:appendix}
    %!TEX root =  main.tex

To simplify the presentation, we use a common error probability of \( \delta \) for all probabilistic statements. Our final results naturally follow by applying a union bound over all individual $\delta$. 
Next, we will describe the key properties of positive definite matrices crucial for the subsequent proofs. These properties form the basis for several key parts of our analysis.

\begin{fact}[Properties of a positive definite matrix]
    \label{fact:pdMatrixProp}
    Let $V_0 = \lambda \bI_d$, $V_T =  V_0 + \sum_{s=1}^T z_s z_s^\top$ be a positive definite matrix, where $\lambda > 0$, $z_s \in \R^d$, and $\{Z_s = z_sz_s^\top\}_{s=1}^T$ is a finite adapted sequence of self-adjoint matrices, i.e., $V_s$ and $Z_s$ are $\cF_s$-measurable for all $s$, where $\cF_s$ represents all information available up to $s$.
    We use $\lambda_{\max}(V_T)$ and $\lambda_{\min}(V_T)$ to denote the maximum and minimum eigenvalue of matrix $V_T$. Then, the following properties hold for $V_T$: 
    \vspace{-2mm}
    \begin{enumerate}
        \item \label{prop:eigenValueCB} Let $\delta \in (0,1)$, $\forall s \le T:\ \norm{V_s - V_{s-1}}^2 \le C_s$, where $\norm{A}$ denotes the operator norm. Then, using Theorem 7.1 and Corollary 7.2 of \cite{FCM12_tropp2012user}, with probability at least $1-\delta$,
        \eqs{
            \Prob{\lambda_{\max}\Lp V_T - \EE{V_T}\Rp \ge \sqrt{8\sum_{s=1}^TC_s \log \Lp \frac{d}{\delta} \Rp}} \le \delta.
        }
        
        \item \label{prop:maxMinEigenValue} $\lambda_{\max}(V_T) = -{\lambda_{\min}(-V_T)}$.

        \item \label{prop:eigenValueRange} Let $\lambda_i(V)$ be the $i$-th eigenvalue of matrix $V$. If $W$ is any Hermitian matrix, then, from Weyl's inequality:
        \als{
            &1.\ \lambda_i(V_T) + \lambda_{\min}(W) \le \lambda_i(V_T + W) \le \lambda_i(V_T) + \lambda_{\max}(W) \text{ and }\\
            &2.\ \lambda_i(V_T) - \lambda_{\max}(W) \le \lambda_i(V_T - W) \le \lambda_i(V_T) - \lambda_{\min}(W).
        }

        \item \label{prop:maxInvMinEigenValue} Let $\forall z \in \R^d\colon\ \norm{z}_2 \le L$. Then, $\max_{z \in \R^d} \norm{z}_{V_T^{-1}} \le \norm{z}_2\sqrt{\lambda_{\max}(V_T^{-1})} \le L/\sqrt{\lambda_{\min}(V_T)}$.

        \item \label{prop:normConstMultiplier} For $a>0:\ \norm{az}_{V_T} = a\norm{z}_{V_T}$ and $\lambda_i(aV_T) = a\lambda_i(V_T)$.

    \end{enumerate}
    
\end{fact}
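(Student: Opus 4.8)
The plan is to treat the five properties separately, since four of them are elementary spectral identities and only Property~\ref{prop:eigenValueCB} requires a genuine matrix-concentration argument. I would dispatch the easy items first and then concentrate the real work on the tail bound.

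For the elementary items I would argue as follows. Writing $V_T$ in its spectral decomposition $V_T = \sum_{i} \lambda_i(V_T)\, u_i u_i^\top$, the matrix $-V_T$ has eigenvalues $\{-\lambda_i(V_T)\}$, so its smallest eigenvalue equals $-\max_i \lambda_i(V_T) = -\lambda_{\max}(V_T)$, which is Property~\ref{prop:maxMinEigenValue}. Property~\ref{prop:eigenValueRange} is a direct invocation of the classical Weyl inequality for Hermitian matrices: for Hermitian $V_T$ and $W$ one has $\lambda_i(V_T) + \lambda_{\min}(W) \le \lambda_i(V_T + W) \le \lambda_i(V_T) + \lambda_{\max}(W)$, and the corresponding inequalities for $V_T - W$ follow by replacing $W$ with $-W$ and applying Property~\ref{prop:maxMinEigenValue} to convert $\lambda_{\min}(-W)$ and $\lambda_{\max}(-W)$. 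Property~\ref{prop:normConstMultiplier} is immediate from the definitions: $\norm{az}_{V_T}^2 = a^2 z^\top V_T z = a^2 \norm{z}_{V_T}^2$, so $\norm{az}_{V_T} = a\norm{z}_{V_T}$ for $a>0$, while $aV_T u_i = a\lambda_i(V_T) u_i$ shows $\lambda_i(aV_T) = a\lambda_i(V_T)$.

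For Property~\ref{prop:maxInvMinEigenValue} I would use the Rayleigh-quotient characterization: since $V_T \succ 0$, its inverse is symmetric positive definite with $z^\top V_T^{-1} z \le \lambda_{\max}(V_T^{-1}) \norm{z}_2^2$, giving $\norm{z}_{V_T^{-1}} \le \norm{z}_2 \sqrt{\lambda_{\max}(V_T^{-1})}$. Because the eigenvalues of $V_T^{-1}$ are the reciprocals of those of $V_T$, we have $\lambda_{\max}(V_T^{-1}) = 1/\lambda_{\min}(V_T)$; combined with the assumption $\norm{z}_2 \le L$ this yields $\max_{z} \norm{z}_{V_T^{-1}} \le L/\sqrt{\lambda_{\min}(V_T)}$.

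The substantive step is Property~\ref{prop:eigenValueCB}. Here I would form the centered increments $X_s = Z_s - \EEi{s-1}{Z_s}$, which are self-adjoint, adapted to $\cF_s$, and satisfy $\EEi{s-1}{X_s} = 0$, so that $V_T - \EE{V_T} = \sum_{s=1}^T X_s$ is a matrix martingale. The bounded-difference hypothesis $\norm{V_s - V_{s-1}}^2 = \norm{Z_s}^2 \le C_s$ lets me produce fixed self-adjoint matrices $A_s$ with $X_s^2 \preceq A_s^2 \preceq C_s \bI_d$ (this is exactly the matrix bounded-difference setup underlying Corollary 7.2 of \cite{FCM12_tropp2012user}, where the centering is absorbed into the constant $C_s$). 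Applying the Matrix Azuma inequality (Theorem 7.1 of \cite{FCM12_tropp2012user}) with variance proxy $\sigma^2 = \norm{\sum_{s=1}^T A_s^2} \le \sum_{s=1}^T C_s$ gives $\Prob{\lambda_{\max}\Lp V_T - \EE{V_T}\Rp \ge t} \le d\,\mathrm{e}^{-t^2/(8\sigma^2)}$ for all $t \ge 0$; setting the right-hand side equal to $\delta$ and solving for $t$ produces the threshold $\sqrt{8\sum_{s=1}^T C_s \log(d/\delta)}$, which is the claimed bound. The main obstacle is precisely the verification of the semidefinite-order condition $X_s^2 \preceq C_s \bI_d$: one must check that the stated control $\norm{V_s - V_{s-1}}^2 \le C_s$ on the raw increments suffices to bound the \emph{centered} increments in the positive-semidefinite order that Tropp's hypotheses require, after which the tail bound and the union over the $d$ eigenvalue directions follow mechanically.
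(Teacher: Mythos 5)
Your handling of Properties 2--5 is correct and is exactly what the paper intends: the paper states this as a \emph{Fact} with no proof beyond citing Weyl's inequality and Theorem 7.1/Corollary 7.2 of Tropp (2012), so your spectral-decomposition argument for Property 2, Weyl for Property 3, the Rayleigh quotient together with $\lambda_{\max}(V_T^{-1}) = 1/\lambda_{\min}(V_T)$ for Property 4, and homogeneity for Property 5 fill in precisely the standard details, and your route for Property 1 (Matrix Azuma) is the same route the paper asserts.

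The gap is in Property 1, and it sits exactly where you located the difficulty --- but your proposed resolution does not close it. With $X_s = Z_s - \EEi{s-1}{Z_s}$ you have $\sum_{s=1}^T X_s = (V_T - \lambda\bI_d) - \sum_{s=1}^T \EEi{s-1}{Z_s}$, and this equals $V_T - \EE{V_T}$ only when $\EEi{s-1}{Z_s} = \EE{Z_s}$ for every $s$, i.e.\ essentially under independence. For a genuinely adapted sequence --- the case of interest here, since the algorithm chooses $z_s$ as a function of past observations --- Theorem 7.1 of Tropp concentrates $V_T$ around the \emph{random} predictable compensator $\lambda\bI_d + \sum_{s=1}^T \EEi{s-1}{Z_s}$, not around $\EE{V_T}$, and the ``centering absorbed into the constant'' device you invoke is the Doob-martingale argument behind Corollary 7.2, which likewise requires independent inputs. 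So as written your argument proves a statement about the compensator, not the claimed bound on $\lambda_{\max}(V_T - \EE{V_T})$; note that the paper's Fact contains the identical conflation (its later use in the proof of the norm bound computes $\EE{V_T} = \sum_s \Sigma_s$ unconditionally while assuming adaptedness), so you have faithfully reproduced the paper's gap rather than introduced a new one --- a rigorous version must either restate Property 1 with the compensator in place of $\EE{V_T}$ or add an independence assumption. One point cuts in your favor: the verification you flagged as the ``main obstacle,'' namely $X_s^2 \preceq C_s\bI_d$ from the raw-increment bound $\norm{Z_s}^2 \le C_s$, can actually be closed with no loss in the constant $8$. Since $Z_s \succeq 0$ and, by Jensen, $\norm{\EEi{s-1}{Z_s}} \le \EEi{s-1}{\norm{Z_s}} \le \sqrt{C_s}$, one has $-\sqrt{C_s}\,\bI_d \preceq -\EEi{s-1}{Z_s} \preceq X_s \preceq Z_s \preceq \sqrt{C_s}\,\bI_d$, hence $\norm{X_s} \le \sqrt{C_s}$ and $X_s^2 \preceq C_s\bI_d$; the naive triangle inequality you gesture at would give only $X_s^2 \preceq 4C_s\bI_d$ and degrade the threshold to $\sqrt{32\sum_{s=1}^T C_s\log(d/\delta)}$. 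The sole unresolved defect is therefore the conditional-versus-unconditional centering.
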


\subsection{Proof of \texorpdfstring{\cref{thm:normUB}}{Upper bound on Mahalanobis norm}}
We now prove the upper bound on the maximum Mahalanobis norm of a vector from the fixed input space, measured with respect to the inverse of a positive definite Gram matrix defined by finite, adapted samples from the same input space. 

\normUB*
\begin{proof}
    Using Property \ref{prop:eigenValueCB} in \cref{fact:pdMatrixProp} with $Y_T - \EE{Y_T} = \EE{V_T} - V_T$, we have
    \als{
        &\Prob{\lambda_{\max}\Lp \EE{V_T} - V_T\Rp \ge \tau} \le d\exp \Lp \frac{-\tau^2}{8\sum_{s=1}^T C_s} \Rp \\
        \implies &\Prob{-\lambda_{\min}\Lp -( \EE{V_T} - V_T)\Rp \ge \tau} \le d\exp \Lp \frac{-\tau^2}{8\sum_{s=1}^T C_s} \Rp &(\text{Property \ref{prop:maxMinEigenValue} in \cref{fact:pdMatrixProp}})\\
        \implies &\Prob{\lambda_{\min}\Lp V_T - \EE{V_T}\Rp \le -\tau} \le d\exp \Lp \frac{-\tau^2}{8\sum_{s=1}^T C_s} \Rp. \\
        \intertext{Using upper bound on $\lambda_{\min}\Lp V_T - \EE{V_T}\Rp$ from Property \ref{prop:eigenValueRange} in \cref{fact:pdMatrixProp}, we get}
        \implies &\Prob{\lambda_{\min}(V_T) - \lambda_{\min}(\EE{V_T}) \le -\tau} \le d\exp \Lp \frac{-\tau^2}{8\sum_{s=1}^T C_s} \Rp \\
        \implies &\Prob{\lambda_{\min}(V_T) \le \lambda_{\min}(\EE{V_T}) - \tau} \le d\exp \Lp \frac{-\tau^2}{8\sum_{s=1}^T C_s} \Rp.
    }
    Note that $\EE{V_T} = \EE{\sum_{t=1}^T z_sz_s^\top} = \sum_{t=1}^T \EE{z_sz_s^\top} = \sum_{t=1}^T \Sigma_s \le T\Sigma_{\max}$. Thus, we get
    \als{
        &\Prob{\lambda_{\min}(V_T) \le T\lambda_{\min}(\Sigma_{\max}) -\sqrt{8\sum_{s=1}^T C_s \log \Lp \frac{d}{\delta} \Rp}} \le \delta.
    }
    Therefore, with probability at least $1-\delta$, $\lambda_{\min}(V_T) \ge T\lambda_{\min}(\Sigma_{\max}) -\sqrt{8\sum_{s=1}^T C_s \log \Lp \nicefrac{d}{\delta} \Rp}$. Using Property \ref{prop:maxInvMinEigenValue} in \cref{fact:pdMatrixProp}, we now use to prove our key result as follows:
    \als{
        \max_{z \in \cZ}\norm{z}_{V_T^{-1}} &\le L/\sqrt{\lambda_{\min}(V_T)} \\
        &\le L/\sqrt{T\lambda_{\min}(\Sigma_{\max}) -\sqrt{8\sum_{s=1}^T C_s \log \Lp \frac{d}{\delta} \Rp} } \\
        &= L/G_T \\
        \implies \max_{z \in \cZ}\norm{z}_{V_T^{-1}} &\le = L/G_T. \qedhere
    }
\end{proof}

\subsection{Proof of \texorpdfstring{\cref{lem:subGapAbsUB} and \cref{lem:confBounds}}{Upper bound on worst sub-optimality gap}}
Our next results gives an upper bound of worst sub-optimality gap in terms of the upper bound of estimation error in the reward difference between any triple of context and two arms.
\subGapAbsUB*
\begin{proof}
    Define $a^\star = \argmax\limits_{a \in \cA} f(\phi(c,a))$. Recall the definition of worst suboptimality across all contexts, which is :
    \als{
        \Delta_{\cD_T}^{\pi} &= \max_{c \in \cC}\Lb\max_{a \in \cA} f(\phi(c,a)) - f\Lp\phi(c,\pi(c))\Rp \Rb \\
        &= \max_{c \in \cC}\Lb f(\phi(c,a^\star)) - f\Lp\phi(c,\pi(c))\Rp \Rb \\
        &= \max_{c \in \cC}\Lb f(\phi(c,a^\star)) - f\Lp\phi(c,\pi(c))\Rp + \hat{f}_T(\phi(c,a^\star)) - \hat{f}_T(\phi(c,a^\star)) \Rb \\
        &\le \max_{c \in \cC} \left|\Lb f(\phi(c,a^\star)) - f\Lp\phi(c,\pi(c))\Rp\Rb + \Lb\hat{f}_T(\phi(c,\pi(c))) - \hat{f}_T(\phi(c,a^\star)) \Rb \right| \\
        &=\max_{c \in \cC} \left|\Lb f(\phi(c,a^\star)) - f\Lp\phi(c,\pi(c))\Rp\Rb - \Lb\hat{f}_T(\phi(c,a^\star)) - \hat{f}_T(\phi(c,\pi(c)))\Rb \right| \\
        \implies \Delta_{\cD_T}^{\pi} &\le \max_{c \in \cC} \beta_T(c, a^\star, \pi(c)).
    }
    The inequality follows from the fact we have greedy policy, i.e., $\pi(c) = \argmin_{a\in \cA} \hat{f}_T(\phi(c, a))$ for any context $c$. Therefore, if $\pi(c) \ne a^\star$, then $\hat{f}_T(\phi(c, \pi(c))) \ge \hat{f}_T(\phi(c, a^\star))$ must hold.
\end{proof}

\confBounds*
\begin{proof}
    Recall that we are using the arm-selection strategies proposed in \citep{ICLR25_verma2025neural}. Since their confidence bounds hold for any adapted sequence of contexts, the proof of the first part follows directly from Theorem 1 in \citep{ICLR25_verma2025neural}, while the second part follows from Lemma 10 together with Eq. (27) of \citep{ICLR25_verma2025neural}.
\end{proof}

\begin{rem}   
    We adopt the arm selection strategies from the existing neural dueling bandit algorithms in \citep{ICLR25_verma2025neural}, which assume $\tilde{d} = o(T)$.
    In some cases, $\tilde{d} = \Omega(T)$~\citep{ICLR22_ban2022ee,ICLR24_deb2024contextual}, which may result in a constant convergence rate. 
    However, our objective is to demonstrate the use of the neural network for estimating non-linear reward functions in active contextual dueling bandits. Since neural dueling bandit algorithms primarily influence the arm selection strategy, we can incorporate any variants of these algorithms by making appropriate modifications to \cref{lem:confBounds}.
\end{rem}

\subsection{Proof of \texorpdfstring{\cref{thm:subGapUCB} and \cref{thm:subGapTS}}{Upper bound on worst sub-optimality gap}}

Equipped with \cref{thm:normUB}, \cref{lem:subGapAbsUB}, and \cref{lem:confBounds}, we will next prove the upper bound on the worst sub-optimality gap for a policy learned by \algo{} while using UCB- and TS-based arm selection strategy for a given context.

\subGapUCB*
\begin{proof}
    Using \cref{lem:confBounds} and setting value of $\beta_T(c, a^\star, \pi(c))$ using \cref{lem:confBounds} and \cref{eqn:sigma}, we have
    \als{
        \Delta_T^\pi &\le \max_{c \in \cC} \beta_T(c, a^\star, \pi(c)) &(\text{from \cref{lem:subGapAbsUB}}) \\
        &\le  \max_{c \in \cC} \Lp \nu_T \sigma_{T}(c, a^\star, \pi(c)) + 2\varepsilon'_{w,T} \Rp. &(\text{from \cref{lem:confBounds}}) \\
        \intertext{As $\nu_T$ and $\varepsilon'_{w,T}$ independent of context $c$, we get}
        \Delta_T^\pi &\le \nu_T \max_{c \in \cC} \Lp \sigma_{T}(c, a^\star, \pi(c))\Rp + 2\varepsilon'_{w,T} \\
        &= \nu_T\max_{c \in \cC} \Lp \sqrt{\frac{\lambda}{\kappa_\mu}} \norm{\frac{\phi(c,a^\star) - \phi(c,\pi(c))}{\sqrt{w}}}_{V_T^{-1}} \Rp + 2\varepsilon'_{w,T} &(\text{using \cref{eqn:sigma}}) \\
        &= \nu_T\max_{c \in \cC} \Lp \sqrt{\frac{\lambda}{\kappa_\mu w}} \norm{\phi(c,a^\star) - \phi(c,\pi(c))}_{V_T^{-1}} \Rp + 2\varepsilon'_{w,T} &(\text{Property \ref{prop:normConstMultiplier} in \cref{fact:pdMatrixProp}}) \\
        &= \nu_T\sqrt{\frac{\lambda}{\kappa_\mu w}} \max_{c \in \cC} \Lp \norm{\phi(c,a^\star) - \phi(c,\pi(c))}_{V_T^{-1}} \Rp + 2\varepsilon'_{w,T} \\
        &\le \nu_T\sqrt{\frac{\lambda}{\kappa_\mu w}} \Lp \frac{L}{\sqrt{T\lambda_{\min}(\Sigma_{\max}) -\sqrt{8\sum_{s=1}^T C_s \log \Lp \frac{d}{\delta} \Rp} }} \Rp + 2\varepsilon'_{w,T} &(\text{using \cref{thm:normUB}}) \\
        &\le  \tilde{O} \Lp \sqrt{\frac{\tilde{d}}{T}}\Rp. & \qedhere
    }
\end{proof}

\subGapTS*
\begin{proof}
    Using \cref{lem:confBounds} and setting value of $\beta_T(c, a^\star, \pi(c))$ using \cref{lem:confBounds} and \cref{eqn:sigma}, we have
    \als{
        \Delta_T^\pi &\le \max_{c \in \cC} \beta_T(c, a^\star, \pi(c)) \hspace{75mm}  (\text{from \cref{lem:subGapAbsUB}}) \\
        &\le  \max_{c \in \cC} \Lp \nu_T\log\Lp KT^2\Rp \sigma_{T}(c, a^\star, \pi(c)) + 2\varepsilon'_{w,T} \Rp. \hspace{36.5mm} (\text{from \cref{lem:confBounds}}) \\
        \intertext{The value of $\nu_T$ and $\varepsilon'_{w,T}$ independent of context $c$. By following similar steps to those in the proof of \cref{thm:subGapUCB}, we have}
        \Delta_T^\pi &\le \nu_T\log\Lp KT^2\Rp \max_{c \in \cC} \Lp \sigma_{T}(c, a^\star, \pi(c))\Rp + 2\varepsilon'_{w,T} \\
        &= \nu_T\log\Lp KT^2\Rp\max_{c \in \cC} \Lp \sqrt{\frac{\lambda}{\kappa_\mu}} \norm{\frac{\phi(c,a^\star) - \phi(c,\pi(c))}{\sqrt{w}}}_{V_T^{-1}} \Rp + 2\varepsilon'_{w,T} \\
        &= \nu_T\log\Lp KT^2\Rp\max_{c \in \cC} \Lp \sqrt{\frac{\lambda}{\kappa_\mu w}} \norm{\phi(c,a^\star) - \phi(c,\pi(c))}_{V_T^{-1}} \Rp + 2\varepsilon'_{w,T} \\ 
        &= \nu_T\log\Lp KT^2\Rp\sqrt{\frac{\lambda}{\kappa_\mu w}} \max_{c \in \cC} \Lp \norm{\phi(c,a^\star) - \phi(c,\pi(c))}_{V_T^{-1}} \Rp + 2\varepsilon'_{w,T} \\ 
        &\le \nu_T\log\Lp KT^2\Rp\sqrt{\frac{\lambda}{\kappa_\mu w}} \Lp \frac{L}{\sqrt{T\lambda_{\min}(\Sigma_{\max}) -\sqrt{8\sum_{s=1}^T C_s \log \Lp \frac{d}{\delta} \Rp} }} \Rp + 2\varepsilon'_{w,T} \\ 
        &\le  \tilde{O} \Lp \sqrt{\frac{\tilde{d}}{T}}\Rp. & \qedhere
    }
\end{proof}
    \section{Additional Experimental Details and Results}
    \label{sec:appendix_experiments}
    %!TEX root =  main.tex

\subsection{Experimental Details}
\para{Computational resources used for experiments.} 
All experiments were conducted on a server equipped with an AMD EPYC 7543 32-Core Processor, 256GB of RAM, and 8 NVIDIA GeForce RTX 3080 GPUs.

\para{Practical considerations.}
Based on the neural tangent kernel (NTK) theory \citep{NeurIPS18_jacot2018neural}, the initial gradient $g(x;\theta_0)$ can be used as serve as a surrogate for
the original feature vector $x$ as $g(x;\theta_0)$ effectively represents the random Fourier features of the NTK. 
To make our algorithm more practical, we use common practices in neural bandits \citep{zhou2020neural,zhang2020neural,ICLR25_verma2025neural}. 
Specifically, we replaced the theoretical regularization parameter $\frac{1}{2} w \lambda \norm{\theta - \theta_0}^2_{2}$ (where $w$ is the NN's width) with the simpler $\lambda \norm{\theta}^2_{2}$ in the loss function (defined in \cref{eqn:loss_function}) that is used to train our NN.
We retrain the neural network after every $20$ rounds for $50$ gradient steps across all experiments.

\subsection{Additional Experimental Results}
Next, we present the additional experiment results comparing the performance of \algo{} varying input dimension $d$ (\cref{fig:main-comp-dim}) and different numbers of arms $K$ (\cref{fig:main-comp-arms}). 

\begin{figure}[!ht]
	\vspace{-5mm}
    \centering
    \subfloat[Sub-Optimality Gap]{\label{fig:subg_20d}
		\includegraphics[width=0.27\linewidth]{figures/differentfunc/square_1000_10_20_1_compare7_1.0_1.0_300_10_10_subg.png}}
    \qquad
    \subfloat[MAE]{\label{fig:subg_avg_20d}
		\includegraphics[width=0.27\linewidth]{figures/differentfunc/square_1000_10_20_1_compare7_1.0_1.0_300_10_10_mae.png}}
	\qquad
    \subfloat[Average Regret]{\label{fig:rmse_avg_20d}
		\includegraphics[width=0.27\linewidth]{figures/differentfunc/square_1000_10_20_1_compare7_1.0_1.0_300_10_10_average_regret.png}} \\
    \vspace{-1mm}
    \subfloat[Sub-Optimality Gap]{\label{fig:subg_40d}
		\includegraphics[width=0.27\linewidth]{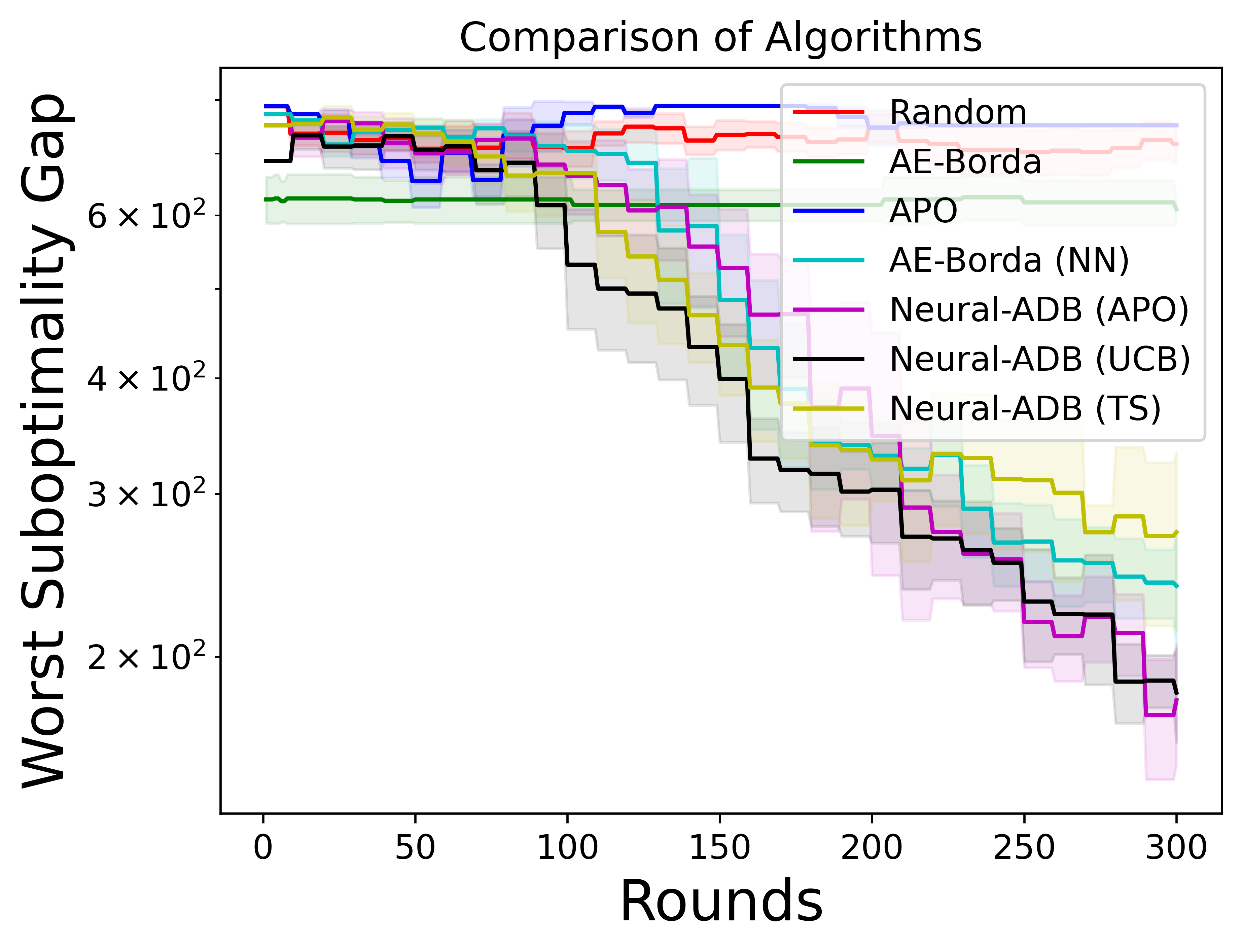}}
    \qquad
    \subfloat[MAE]{\label{fig:subg_avg_40d}
		\includegraphics[width=0.27\linewidth]{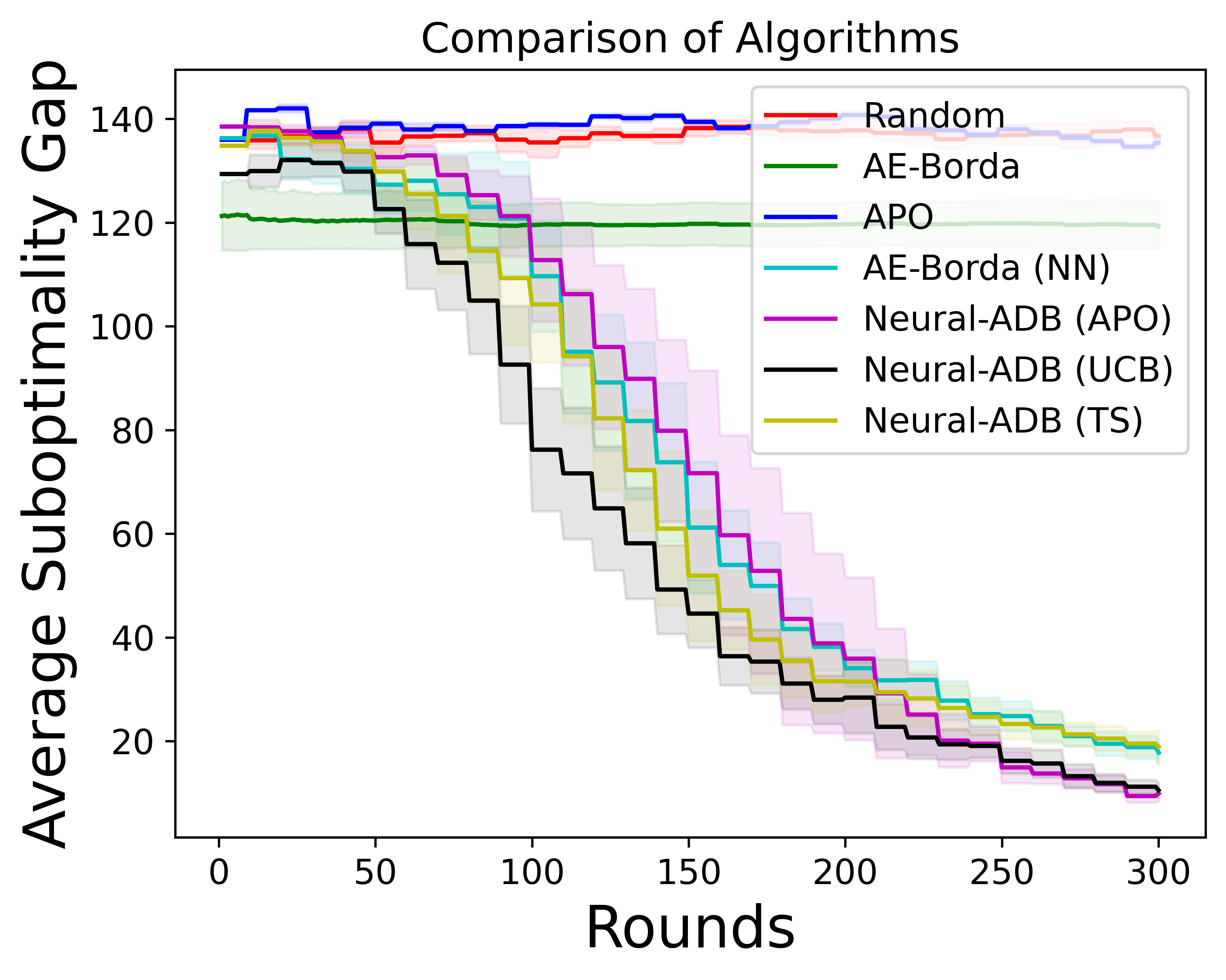}}
	\qquad
    \subfloat[Average Regret]{\label{fig:rmse_avg_40d}
		\includegraphics[width=0.27\linewidth]{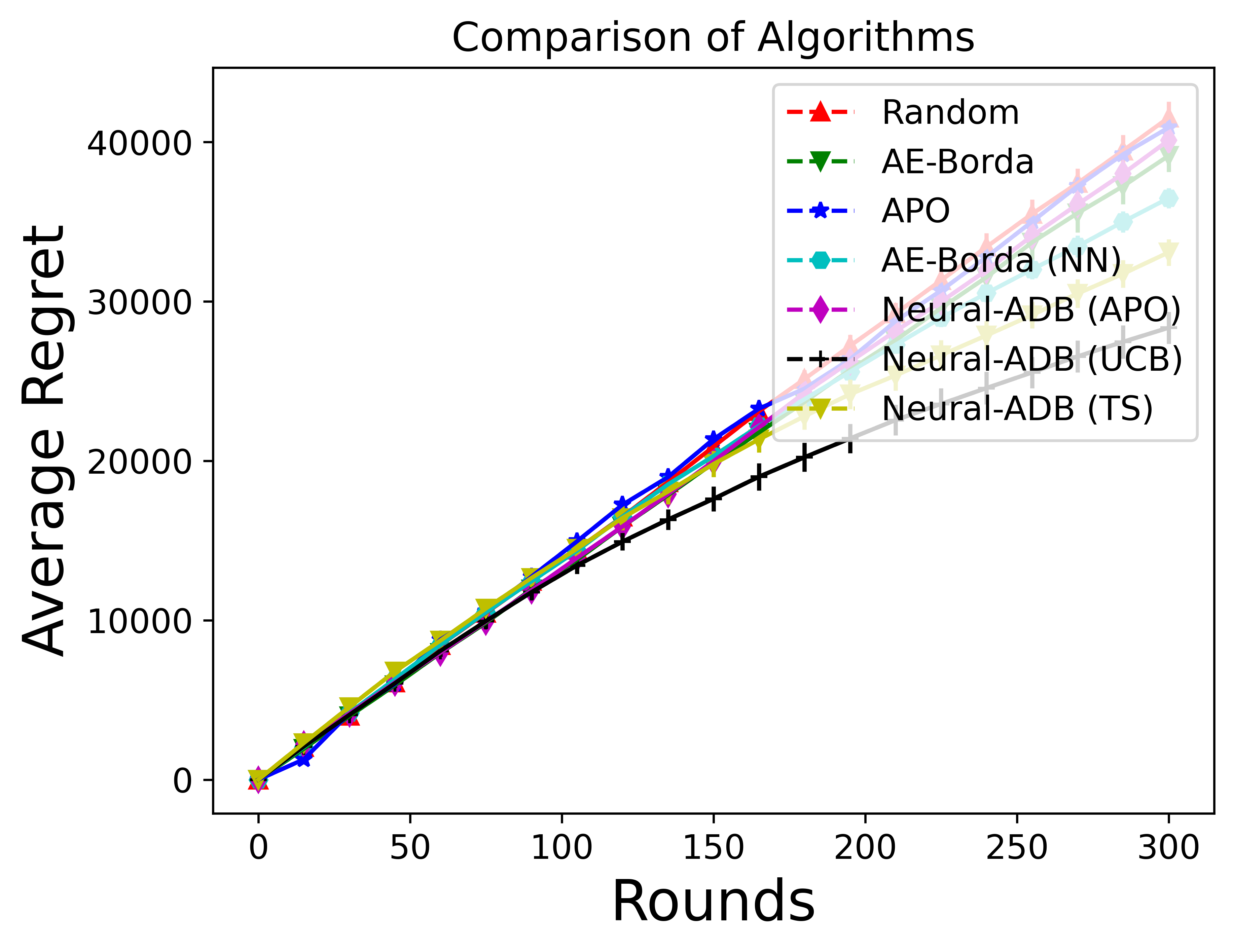}}
	\caption{
         Performance comparison across different input dimensions $d$: $d=20$ (first row) and $d=40$ (second row). We set the number of arms to $10$ and use the Square function for all experiments.
	}
	\label{fig:main-comp-dim}
\end{figure}

\begin{figure}[!ht]
	\centering
    \subfloat[Sub-Optimality Gap]{\label{fig:subg_5K}
		\includegraphics[width=0.27\linewidth]{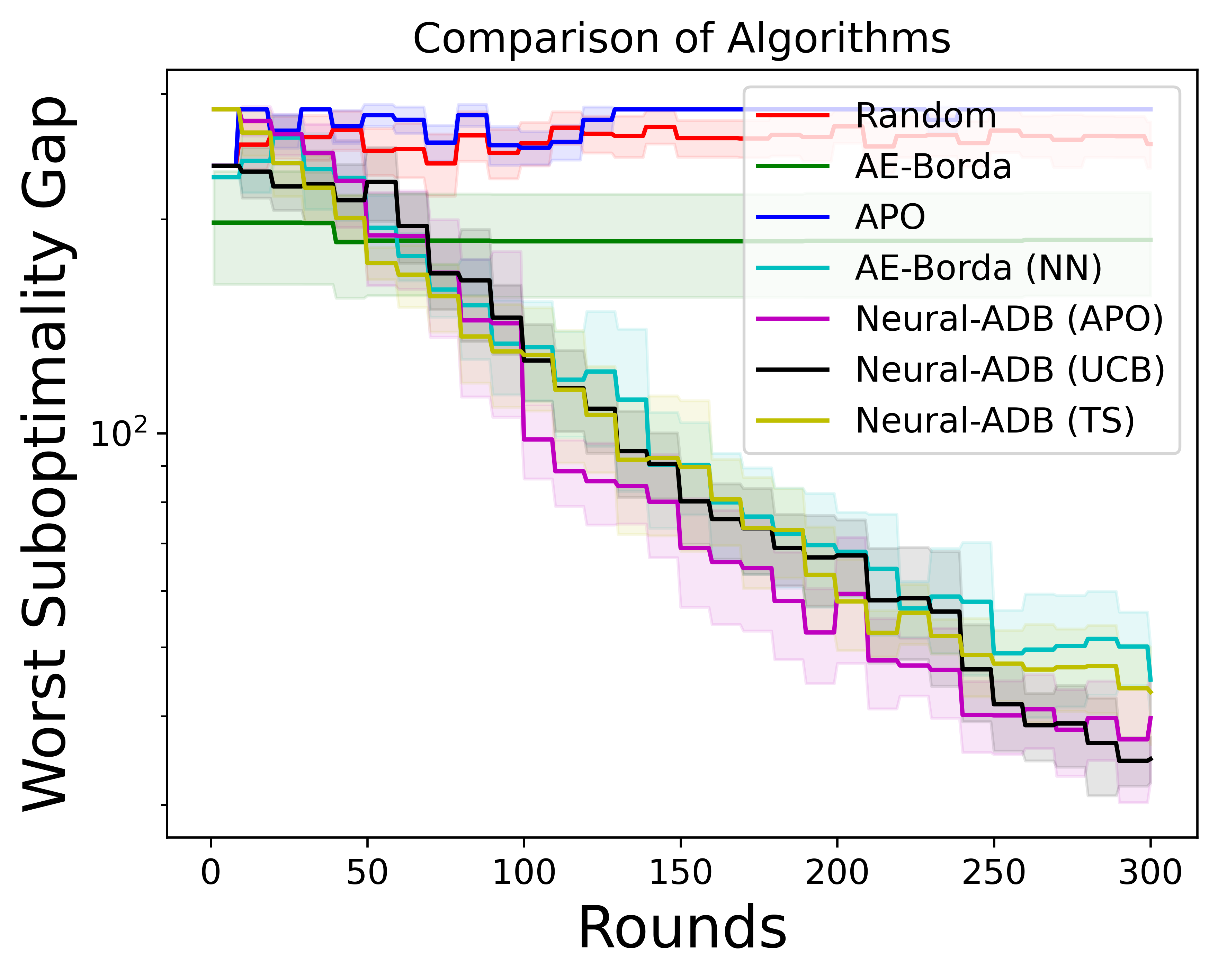}}
    \qquad
    \subfloat[MAE]{\label{fig:mae_avg_5K}
		\includegraphics[width=0.27\linewidth]{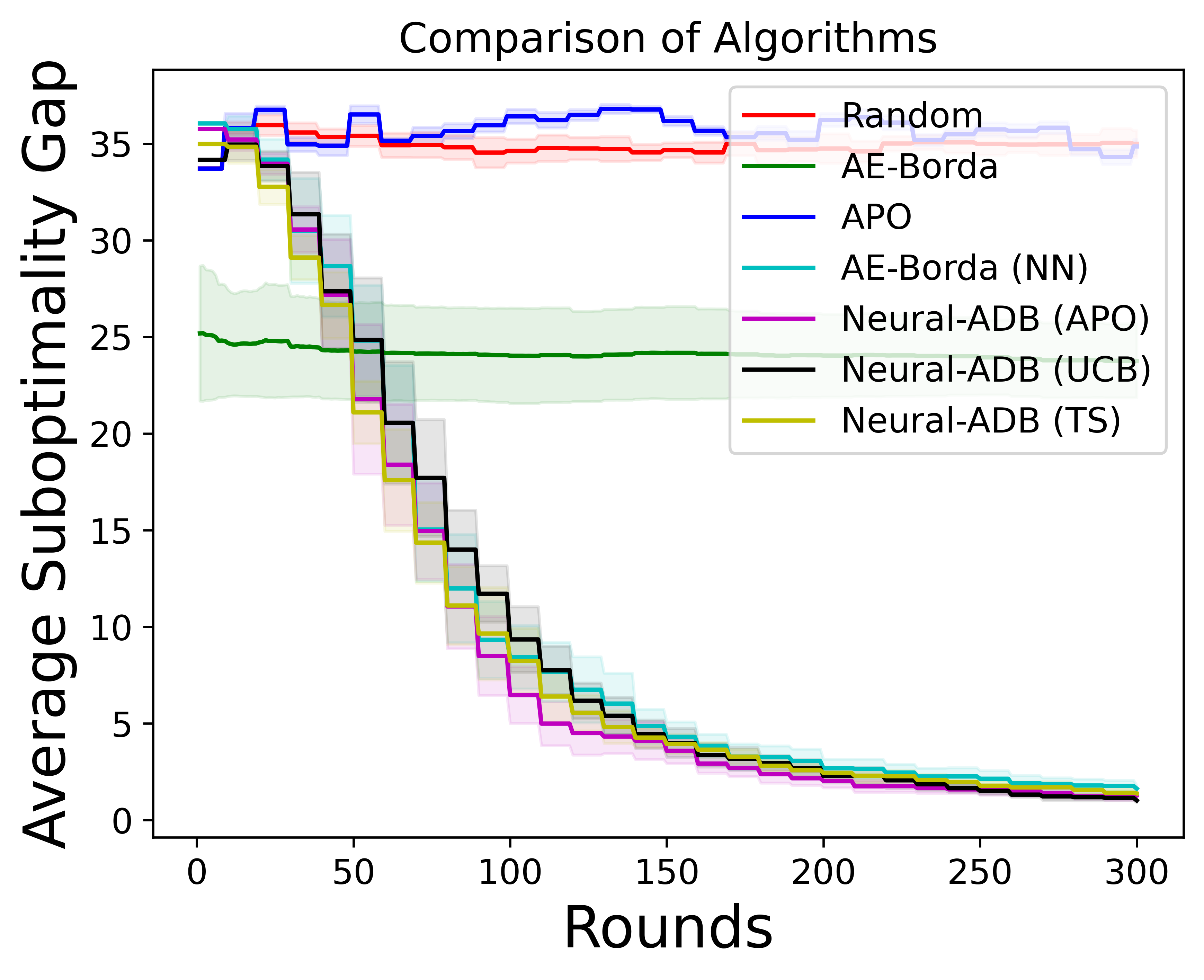}}
	\qquad
    \subfloat[Average Regret]{\label{fig:rmse_avg_5K}
		\includegraphics[width=0.27\linewidth]{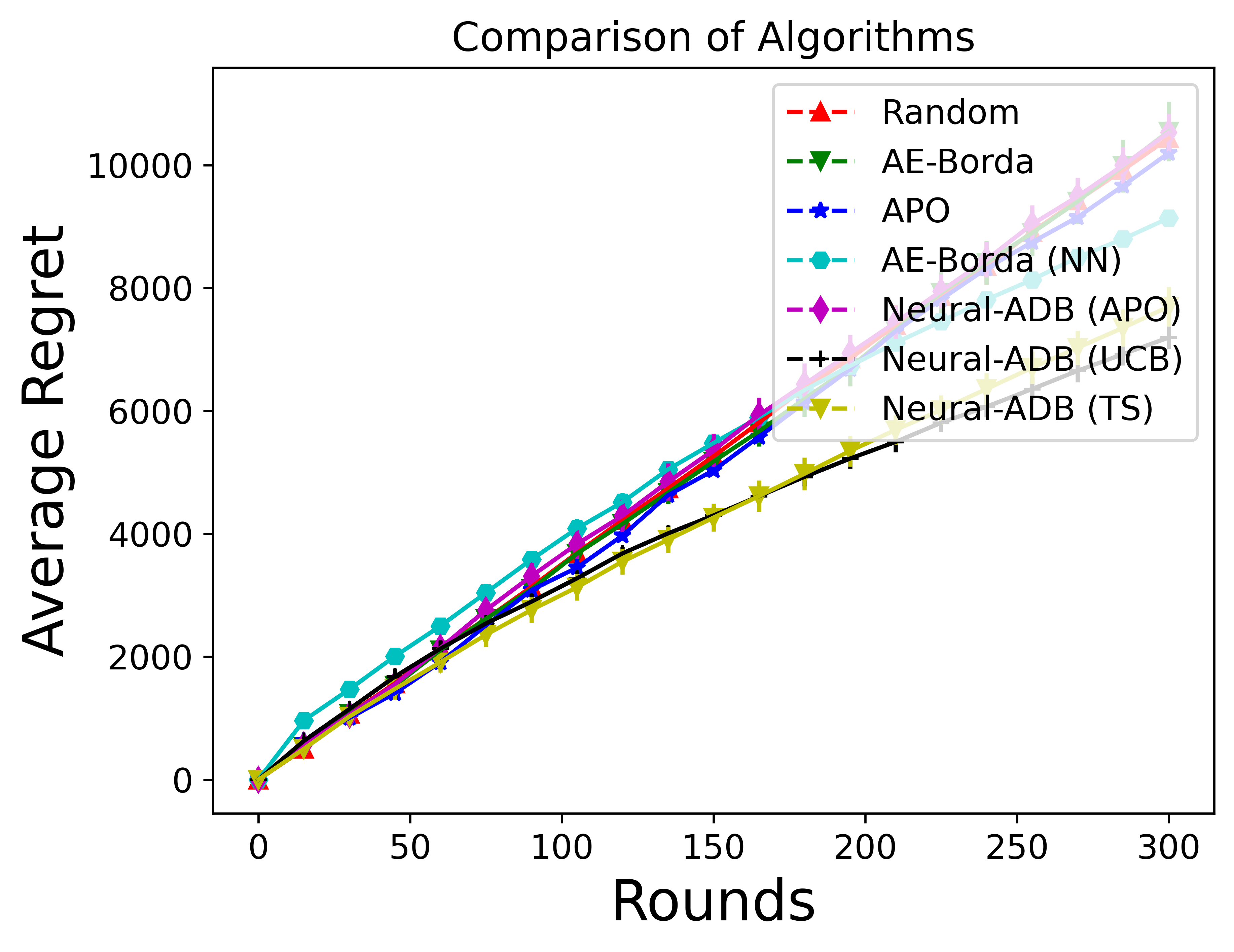}} \\

    \subfloat[Sub-Optimality Gap]{\label{fig:subg_10K}
		\includegraphics[width=0.27\linewidth]{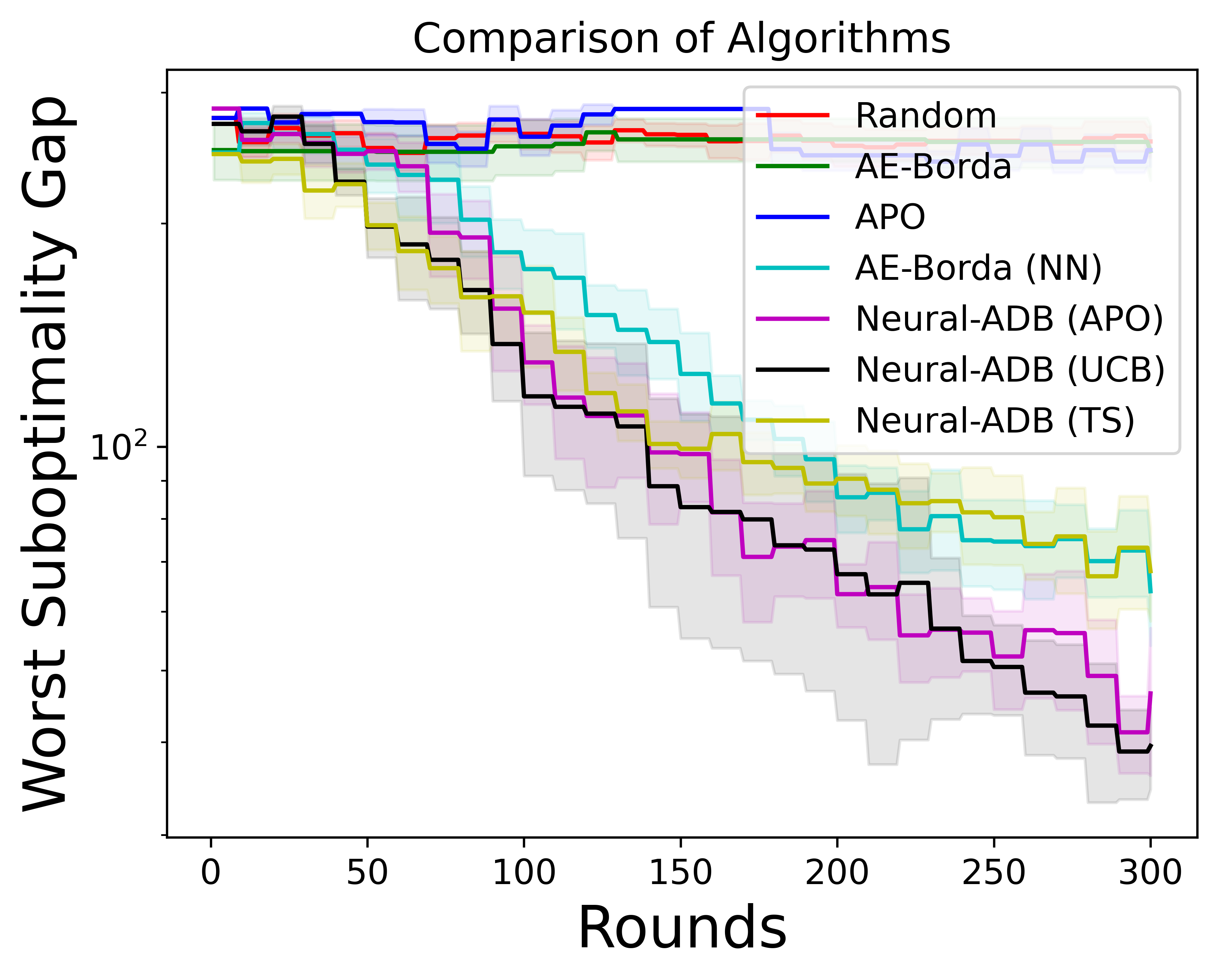}}
    \qquad
    \subfloat[MAE]{\label{fig:mae_avg_10K}
		\includegraphics[width=0.27\linewidth]{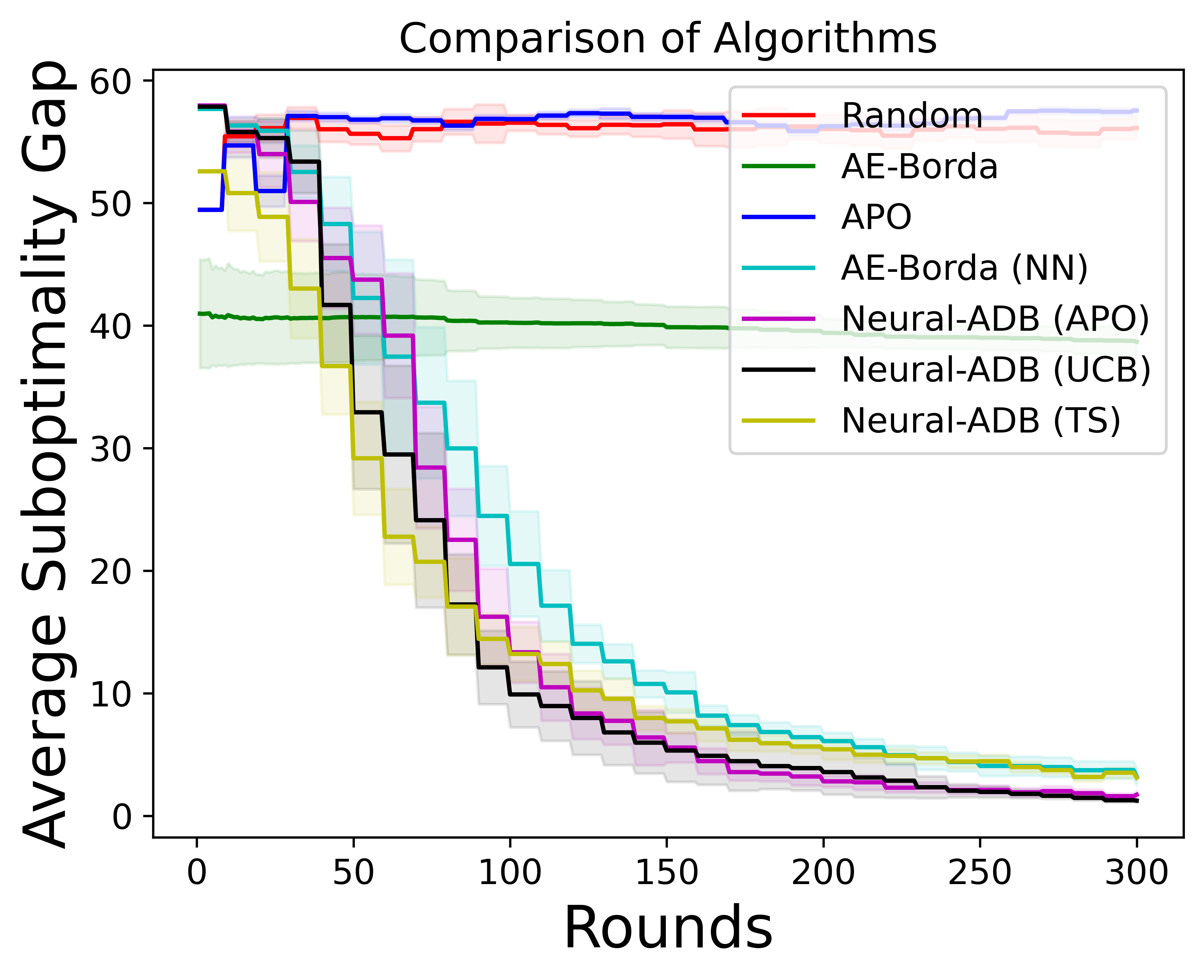}}
	\qquad
    \subfloat[Average Regret]{\label{fig:rmse_avg_10K}
		\includegraphics[width=0.27\linewidth]{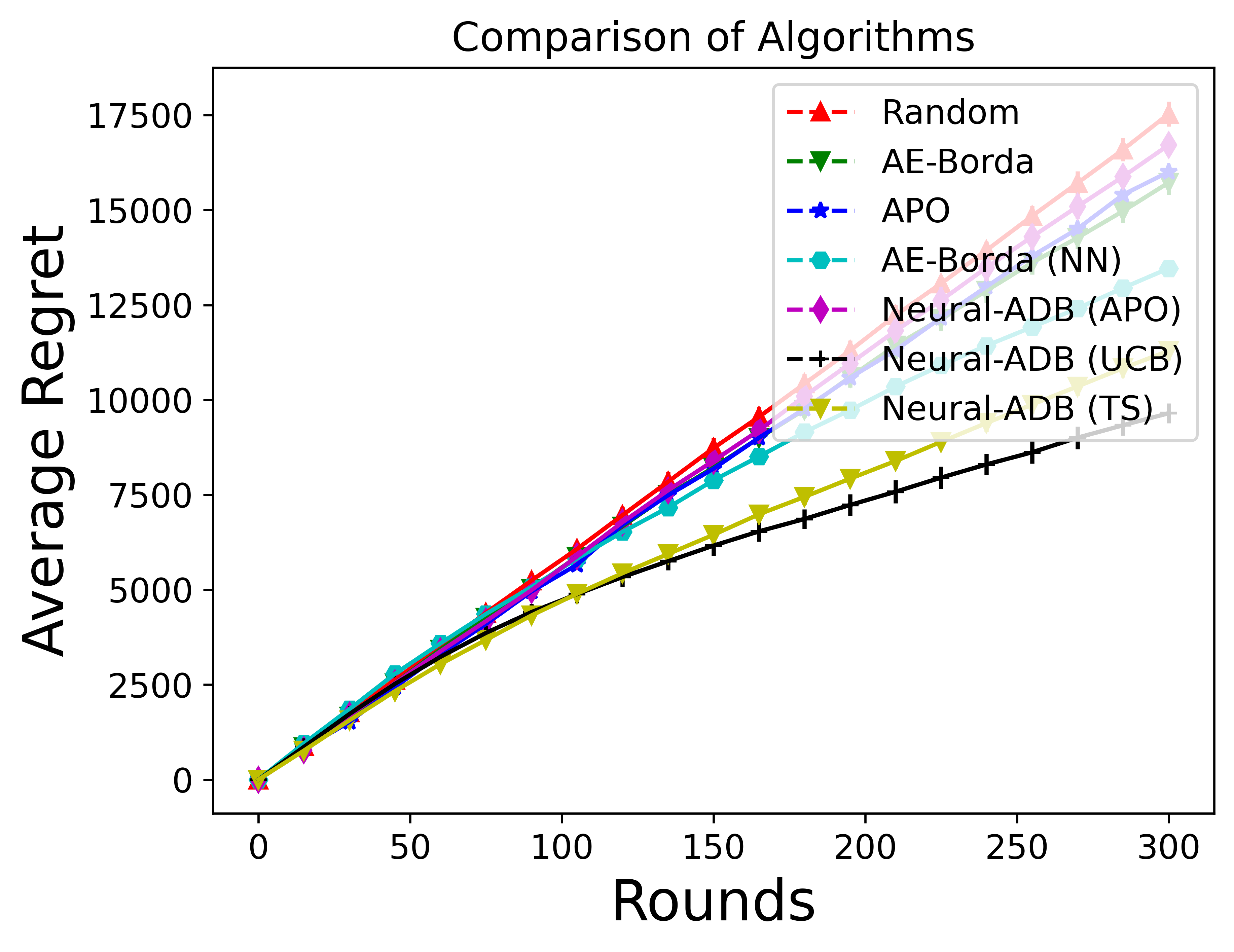}}  \\
        
    \subfloat[Sub-Optimality Gap]{\label{fig:subg_15K}
		\includegraphics[width=0.27\linewidth]{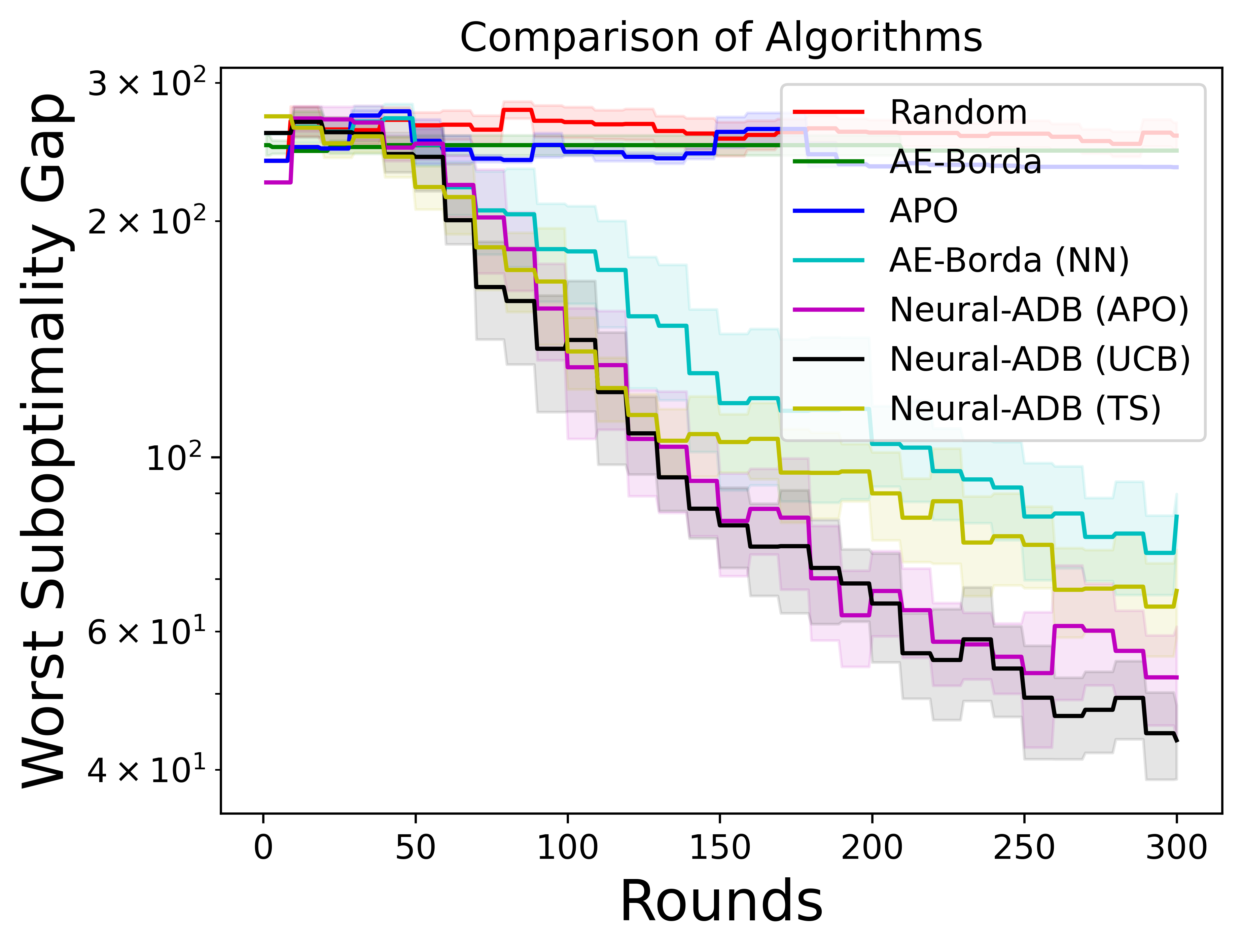}}
    \qquad
    \subfloat[MAE]{\label{fig:subg_avg_15K}
		\includegraphics[width=0.27\linewidth]{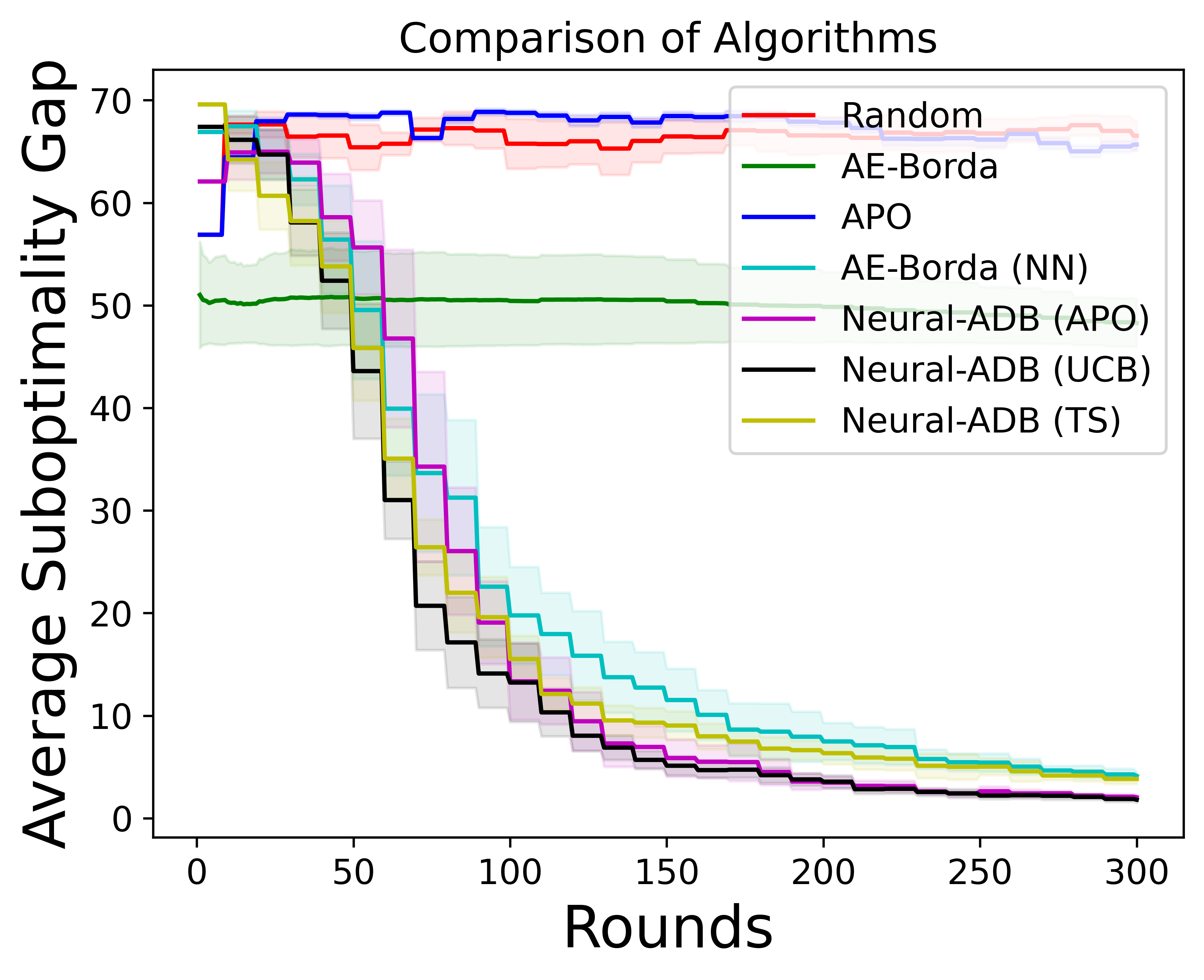}}
	\qquad
    \subfloat[Average Regret]{\label{fig:rmse_avg_15K}
		\includegraphics[width=0.27\linewidth]{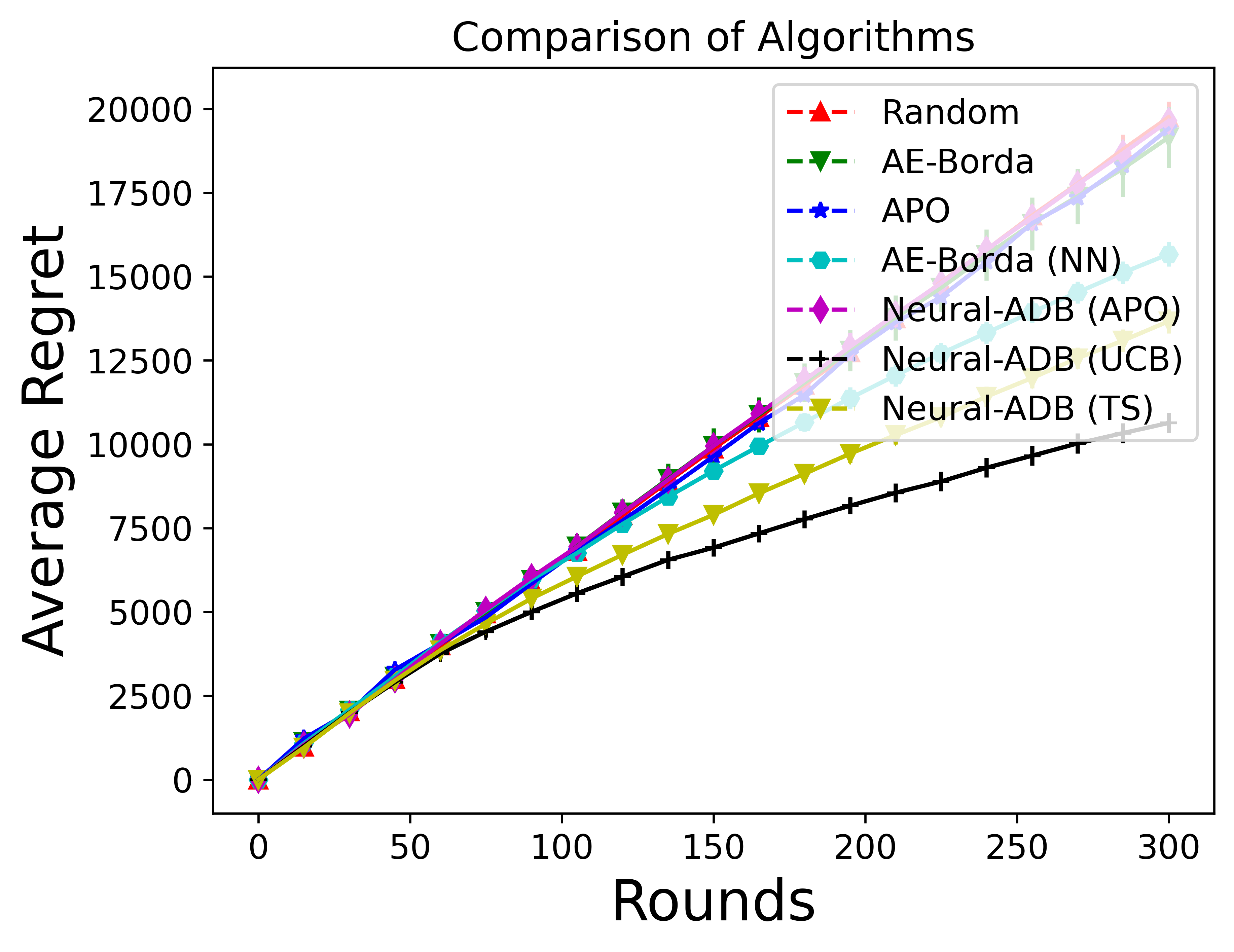}}  \\
	\caption{
        Performance comparison across different numbers of arms $K$: $K=5$ (top row), $K=10$ (middle row), and $K=15$ (bottom row). We set the input dimension to $20$ and use the Square function for all experiments.
	}
	\label{fig:main-comp-arms}
\end{figure}

\para{Performance vs. neural network size.} To investigate how performance varies with different neural network (NN) sizes, we used the Square and Cosine functions defined in the paper. We varied either the number of layers (with width = 32) or the width of the NN (with 2 layers), while keeping all other variables consistent with those in the paper. As shown in \cref{fig:nn_ablations}, we observed that selecting the appropriate size of NN is crucial for the given problem. Using a large NN for a simple problem leads to poor performance due to high bias in the estimation, while a smaller NN may not accurately be able to estimate the complex non-linear function.
\begin{figure}[!ht]
	\vspace{-5mm}
    \centering
    \subfloat[Sub-Optimality Gap]{\label{fig:subg_hidden}
		\includegraphics[width=0.27\linewidth]{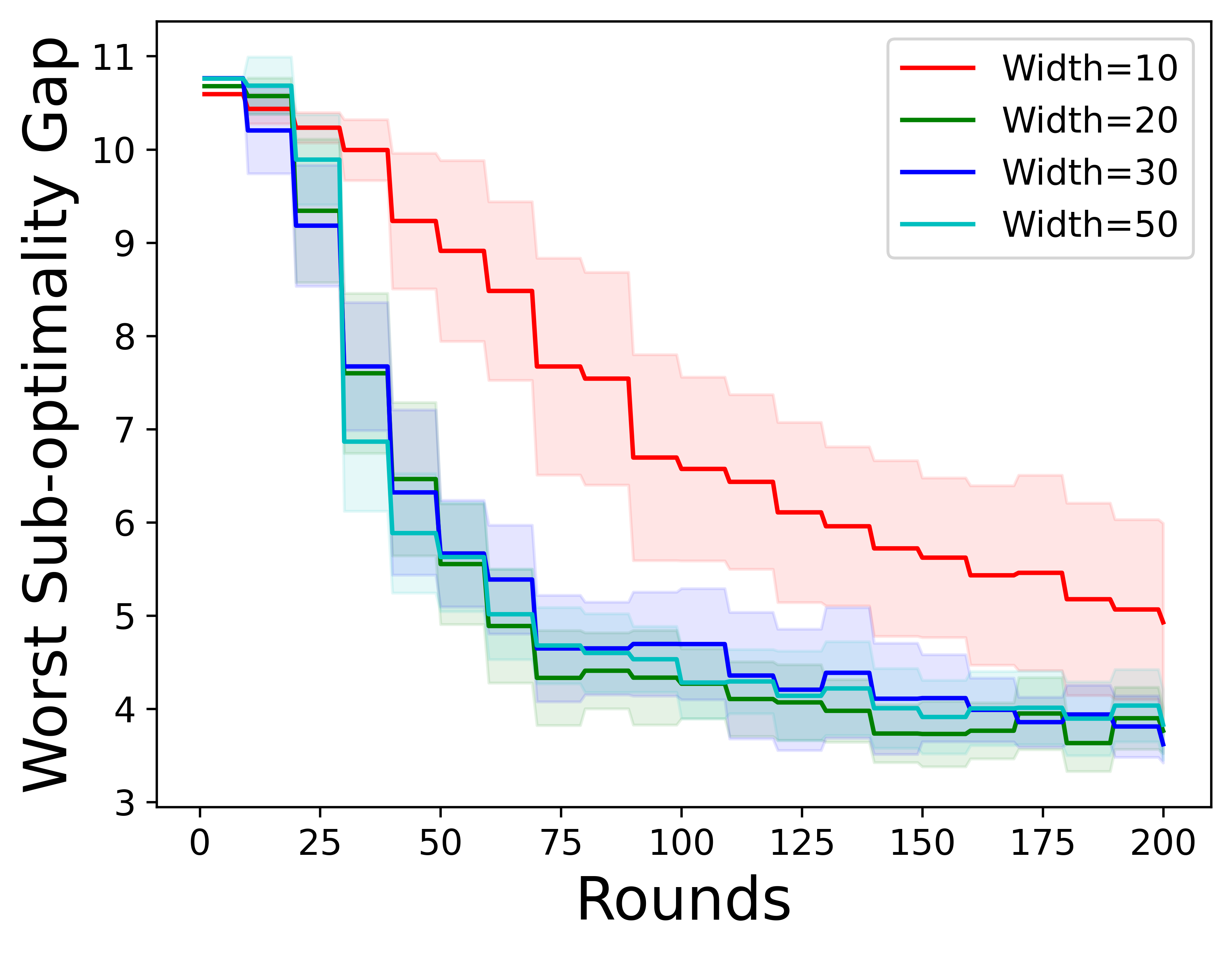}}
    \qquad
    \subfloat[MAE]{\label{fig:mae_hidden}
		\includegraphics[width=0.27\linewidth]{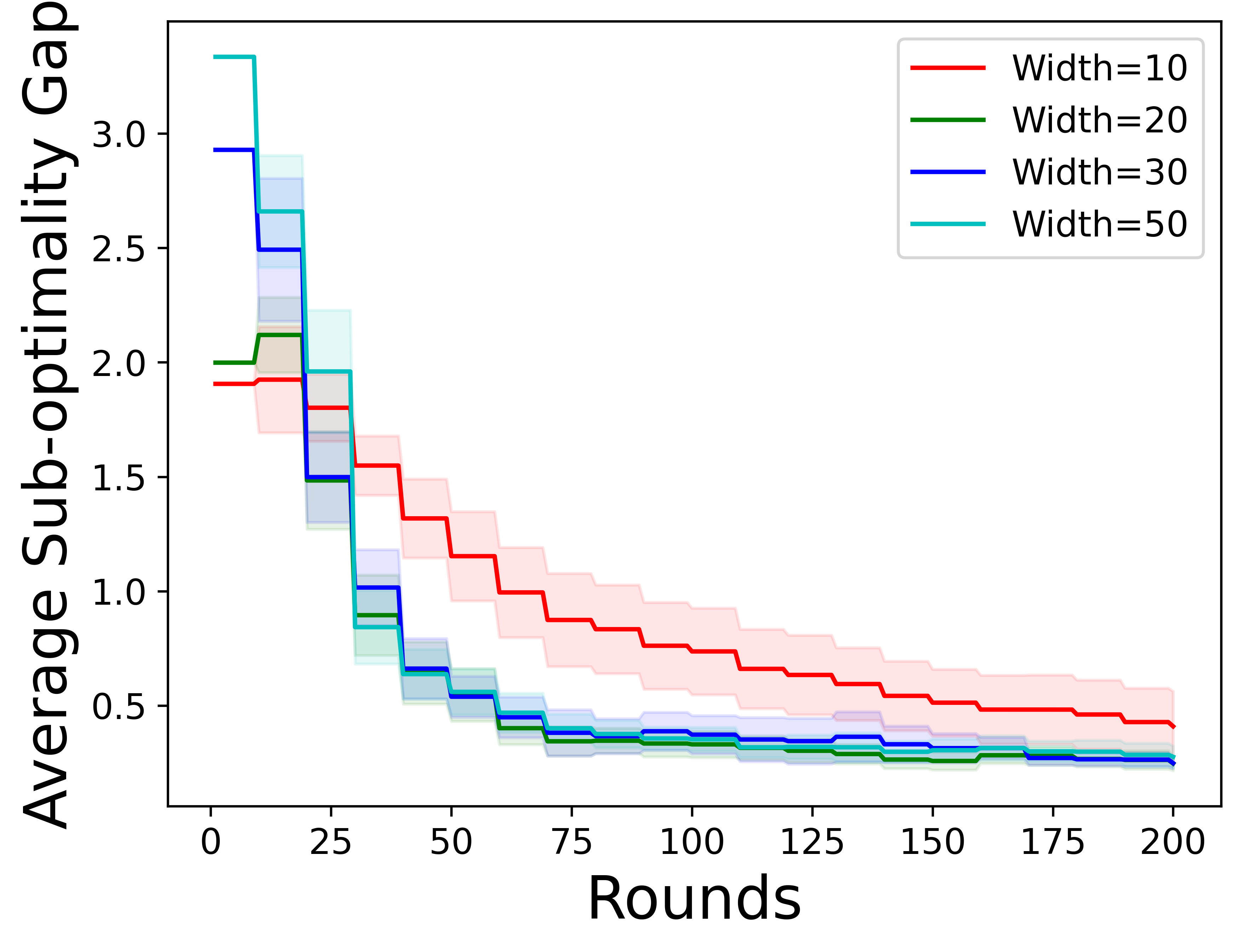}}
	\qquad
    \subfloat[Average Regret]{\label{fig:rmse_hidden}
		\includegraphics[width=0.27\linewidth]{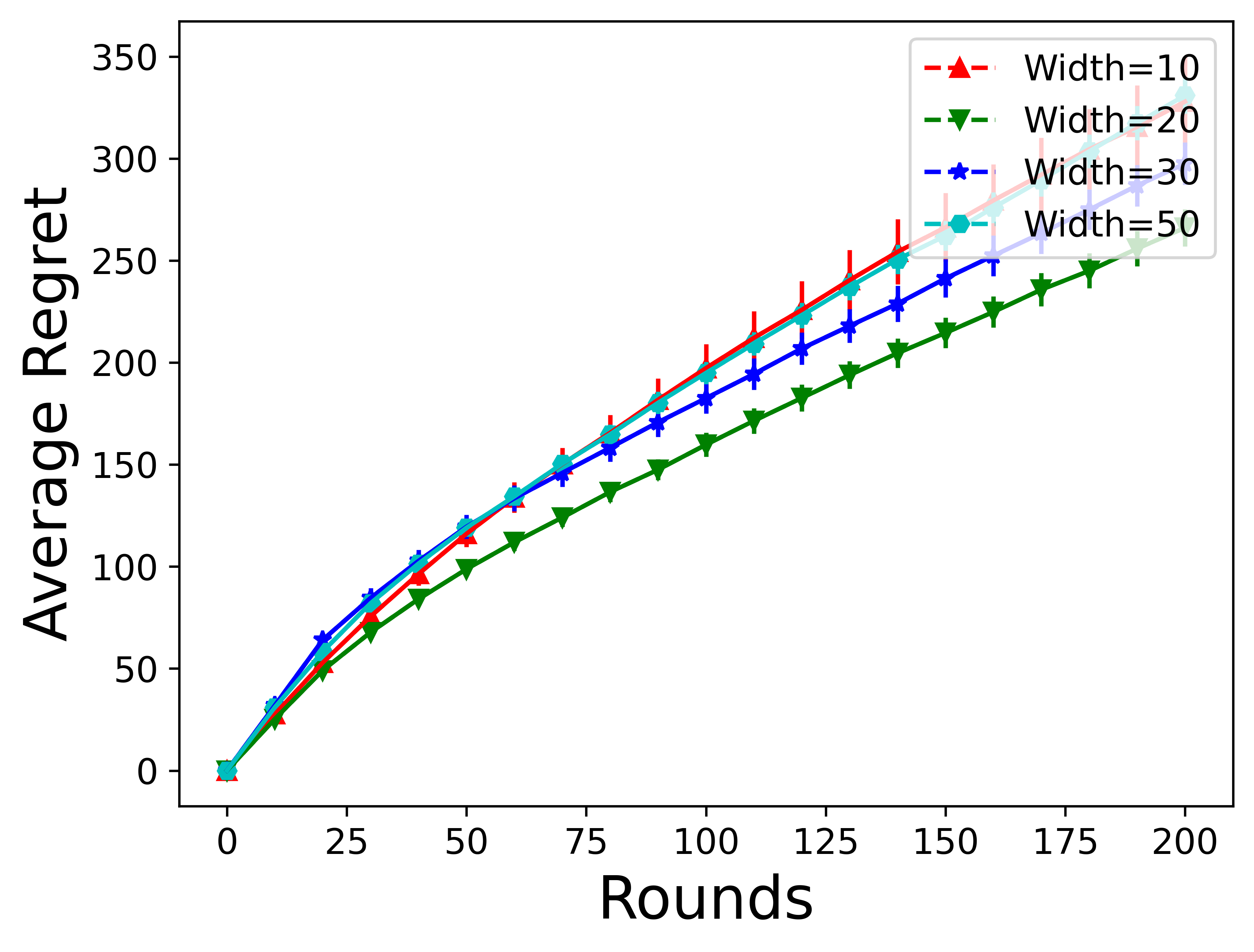}} \\
    \vspace{-1mm}
    \subfloat[Sub-Optimality Gap]{\label{fig:subg_layers}
		\includegraphics[width=0.27\linewidth]{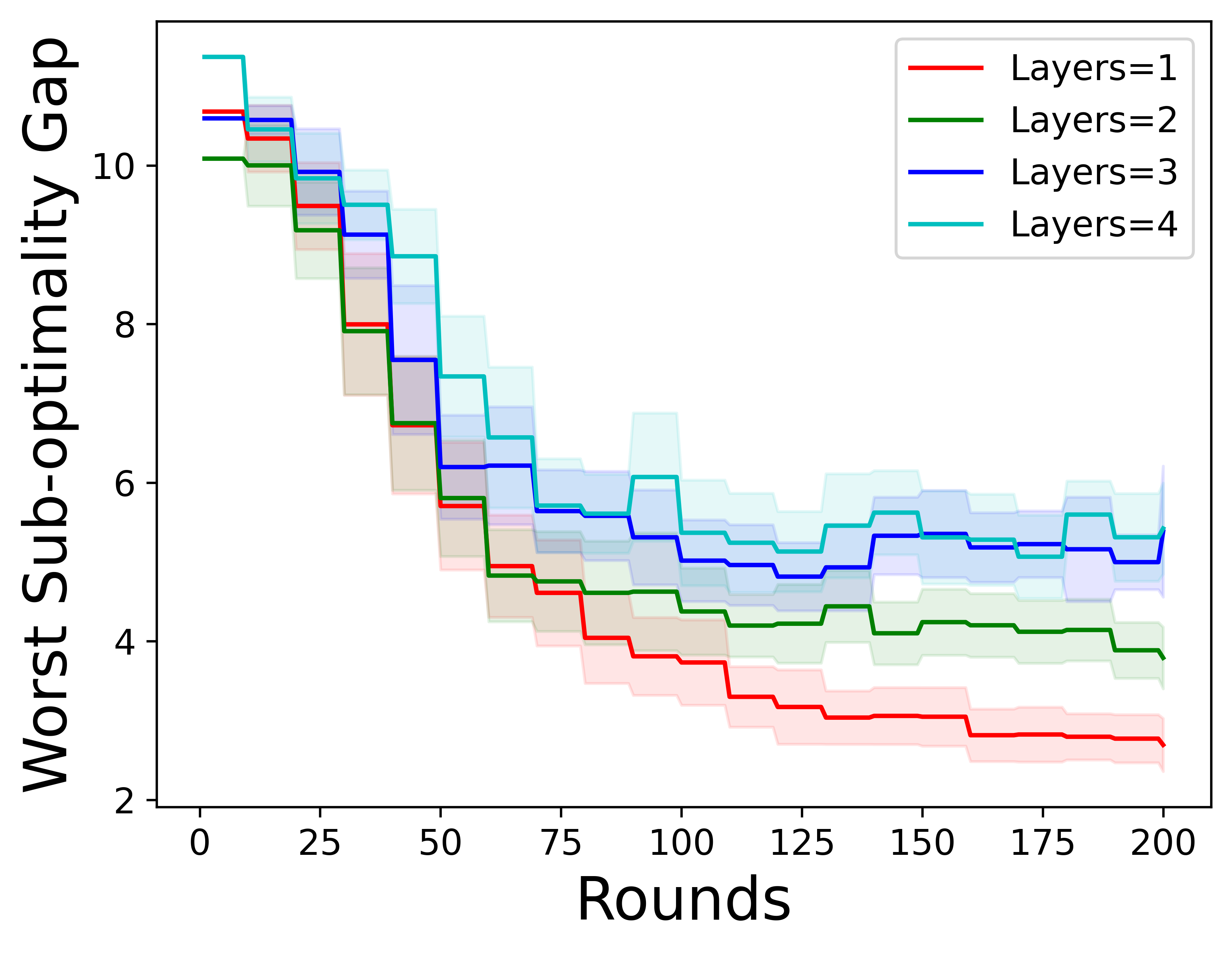}}
    \qquad
    \subfloat[MAE]{\label{fig:mae_layers}
		\includegraphics[width=0.27\linewidth]{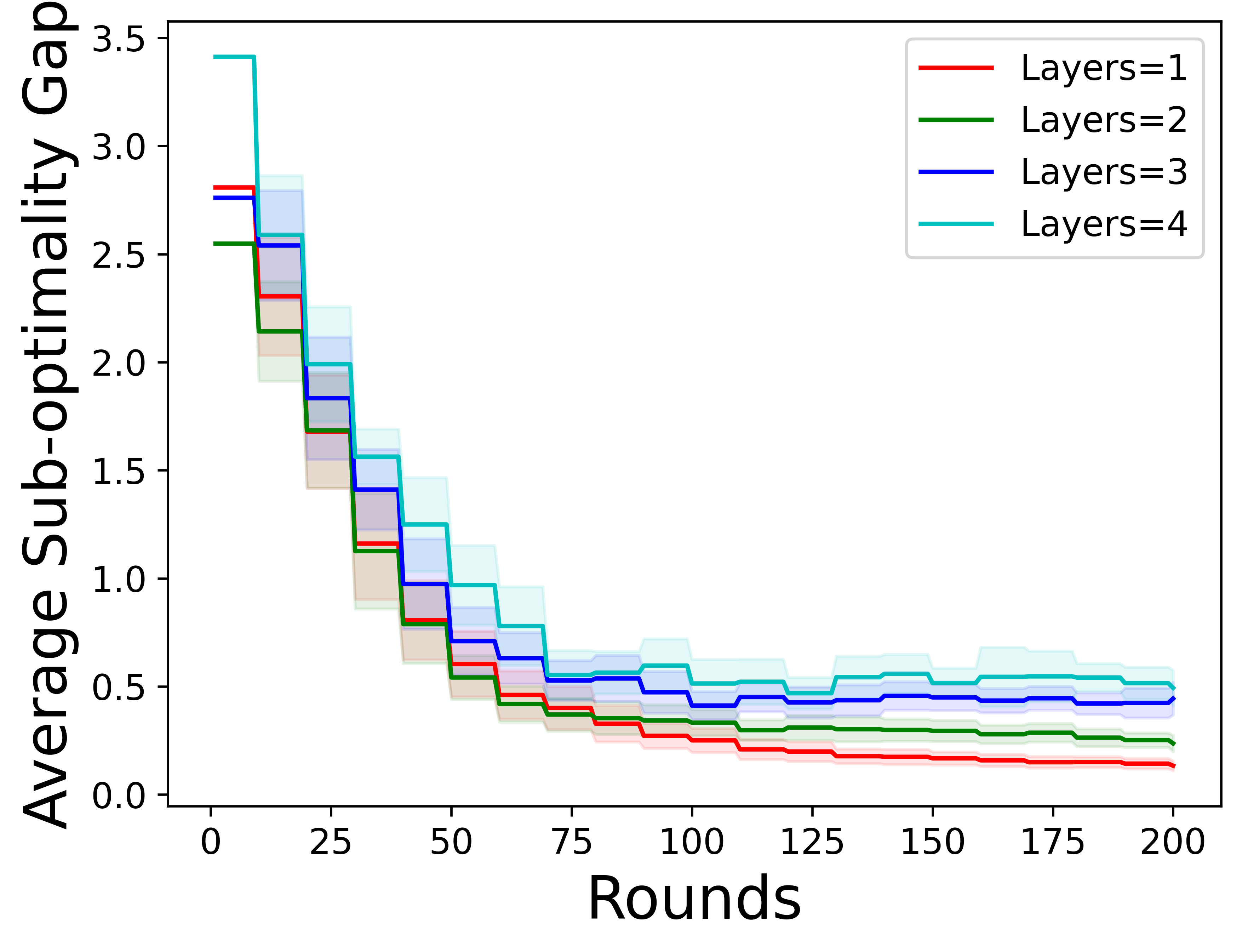}}
	\qquad
    \subfloat[Average Regret]{\label{fig:rmse_layers}
		\includegraphics[width=0.27\linewidth]{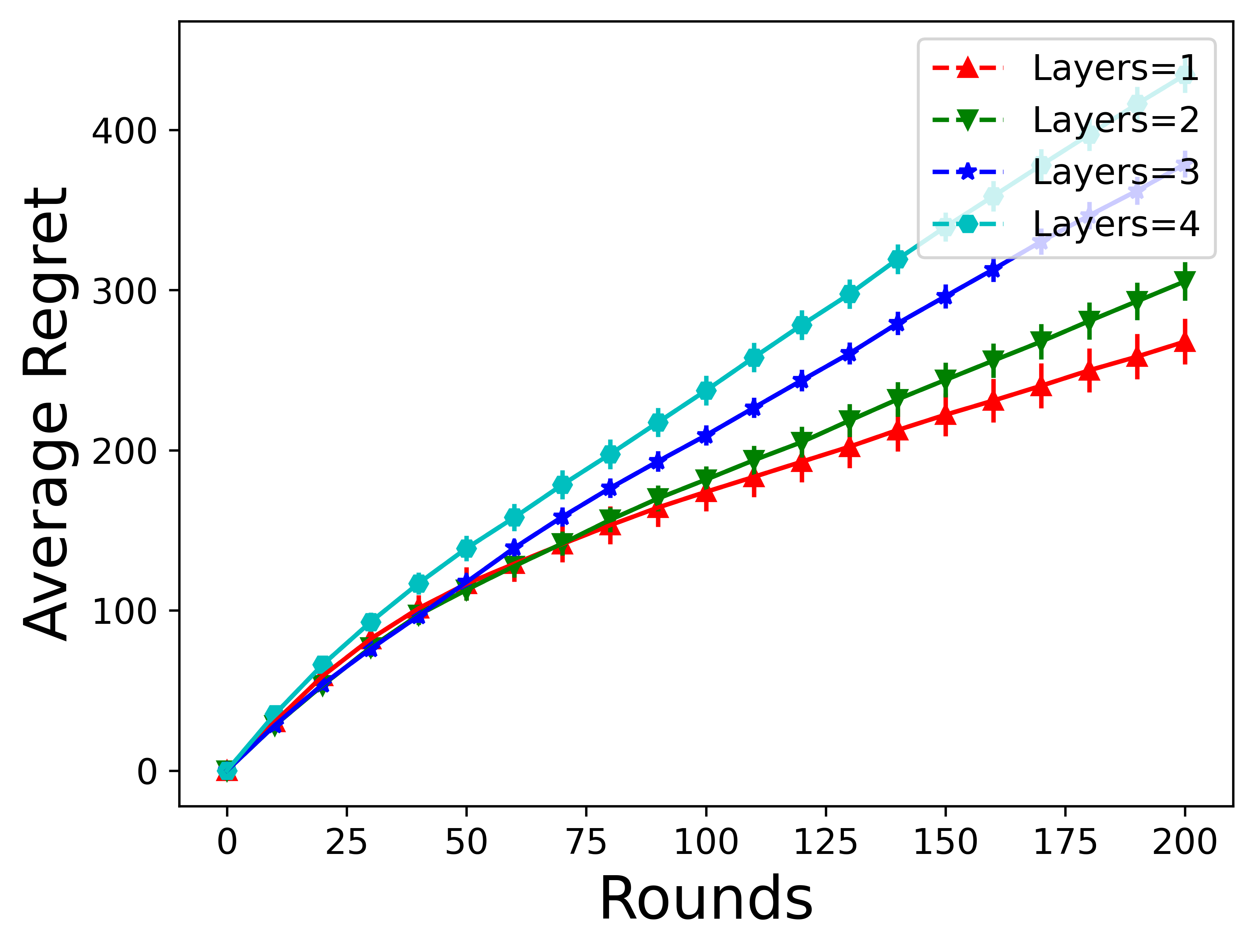}} \\
	\caption{
         We compare performance across different neural network widths (first row) and numbers of hidden layers (second row), using the Square function in all experiments. All other parameters are kept fixed, except that the width is set to $32$ when varying the number of layers.
	}
	\label{fig:nn_ablations}
\end{figure}

\subsection{Computational Efficiency.}
To discuss the computational efficiency of \algo{}, we follow the approach of \citep{ICLR25_verma2025neural} and consider the following two key aspects: size of the neural network and then the number of contexts and arms.

\para{Size of the neural network.} 
The primary computational cost in \algo{} arises from the neural network (NN) used to approximate the latent non-linear reward function. Given a context-arm feature vector of dimension $d$, an NN with $D$ hidden layers and $w$ neurons per layer incurs an inference cost of $O(dw + Dw^2 + w)$ per context-arm pair. The total number of parameters in the NN is $p = dw + Dw^2 + w$, and the training time per iteration is $O(\cE\cP Dw^2)$, where $\cE$ is the number of training epochs and $\cP$ is the number of observed context-arm pairs. Choosing an appropriate NN size is critical, as NNs that are too small may fail to accurately approximate the underlying non-linear reward function, while excessively large NNs can result in substantial training and inference overhead.

\para{Number of contexts and arms.}
Let $K$ denote the number of arms and $p$ the total number of NN parameters. Since \algo{} uses NN gradients as context-arm features, the cost of computing gradients for all arms per context is $O(K^2dp)$, where $d$ is the dimension of the context-arm feature vector.
The cost of computing reward estimates and confidence terms for all context-arm pairs is $O(K^2p)$ and $O(K^2p^2)$, respectively. For arm selection, the first selection step requires $O(Kp + K)$, consisting of reward estimation for all arms ($O(Kp)$) and then identifying the arm with the highest estimated reward ($O(K)$).
The second arm selection incurs a cost of $O(Kp + (K-1)p^2)$, including reward estimation $O(Kp)$ and confidence term computation $O((K-1)p^2)$ relative to the first selected arm. 
Thus, the total computational cost for selecting a pair of arms per context is $O(K^2dp + K^2p^2)$. Since each context-arm pair is independent, gradients, reward estimates, and optimistic terms can be computed in parallel, reducing the overall cost to $O(dp + p^2)$ for each iteration. \\

    \hrule height 0.5mm

\end{document}